\def\eqref#1{equation~\ref{#1}}
\def\1{\bm{1}}
\DeclareMathAlphabet{\mathsfit}{\encodingdefault}{\sfdefault}{m}{sl}
\SetMathAlphabet{\mathsfit}{bold}{\encodingdefault}{\sfdefault}{bx}{n}
\DeclareMathOperator*{\argmin}{arg\,min}
\def\BigRoman{\uppercase\expandafter{\romannumeral\number\count 255}}
\def\Romannumeral{\afterassignment\BigRoman\count255=}
\newtheorem{theorem}{Theorem}[section]
\newtheorem{lemma}[theorem]{Lemma}
\newtheorem{definition}{Definition}[theorem]
\title{Communication-Efficient and Drift-Robust \\ Federated Learning via Elastic Net}
\author{Seonhyeong Kim, Jiheon Woo \& Daewon Seo \\
Department of Electrical Engineering and Computer Science\\
DGIST\\
\texttt{\{ksh981028, jhwoo1997, dwseo\}@dgist.ac.kr} \\
\And
Yongjune Kim \thanks{Corresponding author}  \\
Department of Electrical Engineering \\
POSTECH \\
\texttt{yongjune@postech.ac.kr} \\
}
\begin{document}

\maketitle

\begin{abstract}
Federated learning (FL) is a distributed method to train a global model over a set of local clients while keeping data localized. It reduces the risks of privacy and security but faces important challenges including expensive communication costs and client drift issues. To address these issues, we propose FedElasticNet, a communication-efficient and drift-robust FL framework leveraging the elastic net. It repurposes two types of the elastic net regularizers (i.e., $\ell_1$ and $\ell_2$ penalties on the local model updates): (1) the $\ell_1$-norm regularizer sparsifies the local updates to reduce the communication costs and (2) the $\ell_2$-norm regularizer resolves the client drift problem by limiting the impact of drifting local updates due to data heterogeneity. FedElasticNet is a general framework for FL; hence, without additional costs, it can be integrated into prior FL techniques, e.g., FedAvg, FedProx, SCAFFOLD, and FedDyn. We show that our framework effectively resolves the communication cost and client drift problems simultaneously. 
\end{abstract}

\section{Introduction}

Federated learning (FL) is a collaborative method that allows many clients to contribute individually to training a global model by sharing local models rather than private data. Each client has a local training dataset, which it does not want to share with the global server. Instead, each client computes an update to the current global model maintained by the server, and only this update is communicated. FL significantly reduces the risks of privacy and security~\citep{McMahan2017communication,Li2020federatedlearning}, but it faces crucial challenges that make the federated settings distinct from other classical problems~\citep{Li2020federatedlearning} such as expensive communication costs and client drift problems due to heterogeneous local training datasets and heterogeneous systems~\citep{McMahan2017communication,Li2020federatedlearning,Konevcny2016federated,Konevcny2016federatedlearning}. 

Communicating models is a critical bottleneck in FL, in particular when the federated network comprises a massive number of devices~\citep{Bonawitz2019towards,Li2020federatedlearning,Konevcny2016federatedlearning}. In such a scenario, communication in the federated network may take a longer time than that of local computation by many orders of magnitude because of limited communication bandwidth and device power~\citep{Li2020federatedlearning}. To reduce such communication cost, several strategies have been proposed~\citep{Konevcny2016federatedlearning,Li2020federatedlearning}. In particular, \citet{Konevcny2016federatedlearning} proposed several methods to form structured local updates and approximate them, e.g., subsampling and quantization. \citet{Reisizadeh2020fedpaq,Xu2020ternary} also proposed an efficient quantization method for FL to reduce the communication cost. 


Also, in general, as the datasets that local clients own are heterogeneous, trained models on each local data are \emph{inconsistent} with the global model that minimizes the global empirical loss~\citep{Karimireddy2020scaffold,Malinovskiy2020local,Acar2021federated}. This issue is referred to as the \emph{client drift} problem. In order to resolve the client drift problem, FedProx~\citep{Li2020federated} added a proximal term to a local objective function and regulated local model updates. \citet{Karimireddy2020scaffold} proposed SCAFFOLD algorithm that transfers both model updates and control variates to resolve the client drift problem. FedDyn~\citep{Acar2021federated} dynamically regularizes local objective functions to resolve the client drift problem. 

Unlike most prior works focusing on either the communication cost problem or the client drift problem, we propose a technique that effectively resolves the communication cost and client drift problems simultaneously.

\paragraph{Contributions}

In this paper, we propose FedElasticNet, a new framework for communication-efficient and drift-robust FL. It repurposes the $\ell_1$-norm and $\ell_2$-norm regularizers of the elastic net~\citep{Zou2005regularization}, by which it successfully improves ({\romannumeral 1}) communication efficiency by adopting the $\ell_1$-norm regularizer and ({\romannumeral 2}) robustness to heterogeneous local datasets by adopting the $\ell_2$-norm regularizer.

FedElasticNet is a general framework; hence, it can be integrated with prior FL algorithms such as FedAvg~\citep{McMahan2017communication}, FedProx~\citep{Li2020federated}, SCAFFOLD~\citep{Karimireddy2020scaffold}, and FedDyn~\citep{Acar2021federated} so as to resolve the client drift problem as well as the communication cost problem. Further, it incurs no additional costs in training. Empirically, we show that FedElasticNet enhances communication efficiency while maintaining the classification accuracy even for heterogeneous datasets, i.e., the client drift problem is resolved. Theoretically, we characterize the impact of the regularizer terms. Table~\ref{tab:comparison} compares the prior methods and the proposed FedElasticNet if integrated with FedDyn (Algorithm~\ref{algo:feddyn}). 

\begin{table}[t!]
\begin{center}
\begin{tabular}{c || c | c | c | c | c} 
\hline 
& FedAvg & FedProx & SCAFFOLD & FedDyn & \textbf{FedElasticNet} \\
\hline\hline
\begin{tabular}[c]{@{}c@{}}Communication \\ efficiency \end{tabular} & $\triangle$ & $\triangle$ & $\times$ & $\triangle$ & \textbf{$\Circle$} \\ 
\hline
\begin{tabular}[c]{@{}c@{}}Robustness to \\ heterogeneous data \end{tabular} & $\times$ & $\triangle$ & $\Circle$ & $\Circle$ & \textbf{$\Circle$} \\ 
\hline
\end{tabular}
\end{center}
\caption{Comparison of prior methods and the proposed FedElasticNet.}
\label{tab:comparison}
\end{table}


\section{Related Work}

To address the communication cost and client drift problems, numerous approaches were proposed. Here, we describe closely related works that we consider baseline methods. The comprehensive reviews can be found in~\citet{Kairouz2021advances,Li2020federatedlearning}. 

FedAvg \citep{McMahan2017communication} is one of the most commonly used methods. FedAvg tackles the communication bottleneck issue by performing multiple local updates before communicating to the server. It works well for homogeneous datasets across clients~\citep{McMahan2017communication,Karimireddy2020scaffold}, but it is known that FedAvg may diverge when local datasets are heterogeneous~\citep{Zhao2018federated,Li2020federatedlearning}.

FedProx \citep{Li2020federated} addressed the data heterogeneity problem. FedProx introduces an $\ell_2$-norm regularizer to the local objective functions to penalize local updates that are far from the server's model and thus to limit the impact of variable local updates~\citep{Li2020federated}. Although FedProx is more robust to heterogeneous datasets than FedAvg, the regularizer does not result in aligning the global and local stationary points~\citep{Acar2021federated}. Also, we note that FedProx does not improve communication efficiency compared to that of FedAvg. 

SCAFFOLD \citep{Karimireddy2020scaffold} defined \emph{client drift} that the model created by aggregating local models and the optimal global model is inconsistent because of heterogeneous local datasets. SCAFFOLD communicates the trained local models and the clients' control variates so as to resolve the client drift problem. Hence, SCAFFOLD requires twice the communication cost compared to other FL algorithms.

FedDyn \citep{Acar2021federated} dynamically updates its local regularizers at each round to ensure that the local clients' optima are asymptotically consistent with stationary points of the global empirical loss. Unlike SCAFFOLD, FedDyn resolves the client drift problem without incurring additional communication costs. However, FedDyn's communication cost is not improved compared to FedAvg and FedProx.

\citet{Zou2005regularization} proposed the elastic net to encourage the grouping effect, in other words, to encourage strongly correlated covariates to be in or out of the model description together~\citep{Hu2018group}. Initially, the elastic net was proposed to overcome the limitations of Lasso~\citep{Tibshirani1996regression} imposing an $\ell_1$-norm penalty on the model parameters. For instance of a linear least square problem, the objective of Lasso is to solve
\begin{equation}
    \underset{\theta }{\min} \left\|y - \mathbf{X}\theta  \right\|_2^2 + \lambda_1 \left\| \theta \right\|_{1},
\end{equation}
where $y$ is the outcome and $\mathbf{X}$ is the covariate matrix. Lasso performs both variable selection and regularization to enhance the prediction accuracy and interpretability of the resulting model. However, it has some limitations, especially for high-dimensional models. If a group of variables is highly correlated, then Lasso tends to select only one variable from the group and does not care which one is selected~\citep{Zou2005regularization}. The elastic net overcomes these limitations by adding an $\ell_2$-norm penalty. The objective of the elastic net is to solve
\begin{equation}
     \underset{\theta}\min\left\|y - \textbf{X}\theta  \right\|_2^2 + \frac{\lambda_2}{2} \left\|\theta  \right\|^2_2 + \lambda_1\left\| \theta \right\|_1.
\end{equation}
The elastic net simultaneously enables automatic variable selection and continuous shrinkage by the $\ell_1$-norm regularizer and enables to select groups of correlated variables by its $\ell_2$-norm regularizer~\citep{Zou2005regularization}. We will leverage the elastic net approach to resolve the critical problems of FL: expensive communication cost and client drift problems. 

\section{Proposed Method: FedElasticNet}

We assume that $m$ local clients communicate with the global server. For the $k$th client (where $k \in \left [ m \right ]$) participating in each training round, we assume that a training data feature $x \in \mathcal{X}$ and its corresponding label $y \in \mathcal{Y}$ are drawn IID from a device-indexed joint distribution, i.e., $ \left( x, y \right) \sim P_{k}$~\citep{Acar2021federated}. The objective is to find
\begin{equation}
\underset{\theta \in \mathbb{R}^{d}}\argmin \: \left[ \mathcal{R}\left ( \theta \right ) := \frac{1}{m}\sum_{k\in [m]}{L_{k}\left( \theta \right)} \right],
\end{equation}
where $L_{k} \left ( \theta \right ) = \mathbb{E}_{x \sim P_k} \left[ l_k \left ( \theta;\left ( x,y \right ) \right ) \right]$ is the local risk of the $k$th clients over possibly heterogeneous data distributions $P_k$. Also, $\theta$ represents the model parameters and $l_k(\cdot)$ is a loss function such as cross entropy~\citep{Acar2021federated}.

\paragraph{FedElasticNet}

The proposed method (FedElasticNet) leverages the elastic net approach to resolve the communication cost and client drift problems. We introduce the $\ell_1$-norm and $\ell_2$-norm penalties on the local updates: In each round $t \in [T]$, the $k$th local client attempts to find $\theta_k^t$ by solving the following optimization problem: 
\begin{equation} \label{eq:local_objective}
    \theta_k^t = \arg \min_{\theta} L_{k}\left ( \theta \right )+ \frac{\lambda_2}{2} \left\| \theta - \theta^{t-1}\right\|^{2}_{2}+ \lambda_1 \left\|\theta - \theta^{t-1} \right\|_{1},
\end{equation}
where $\theta^{t-1}$ denotes the global model received from the server. Then, it transmits the difference $\Delta_k^t=\theta_k^t - \theta^{t-1}$ to the server.

Inspired by the elastic net, we introduce two types of regularizers for local objective functions; however, each of them works in a different way so as to resolve each of the two FL problems: the communication cost and client drift problems. First, the $\ell_2$-norm regularizer resolves the client drift problem by limiting the impact of variable local updates as in FedProx~\citep{Li2020federated}. FedDyn~\citep{Acar2021federated} also adopts the $\ell_2$-norm regularizer to control the client drift.



Second, the $\ell_1$-norm regularizer attempts to sparsify the local updates $\Delta_k^t = \theta_k^t - \theta^{t-1}$. We consider two ways of measuring communication cost: One is the number of nonzero elements in $\Delta_k^t$ \citep{Yoon2021federated,Jeong2021federated}, which the $\ell_1$-norm sparsifies. The other is the (Shannon) entropy since it is the theoretical lower bound on the data compression~\citep{Cover2006elements}. We demonstrate that the $\ell_1$-norm penalty on the local updates can effectively reduce the number of nonzero elements as well as the entropy in Section~\ref{section:experiments}. To boost sparseness of $\Delta_k^t=\theta_k^t - \theta^{t-1}$, we sent $\Delta_k^t(i) = 0$ if $|\Delta_k^t(i)| \le \epsilon$ where $\Delta_k^t(i)$ denotes the $i$th element of $\Delta_k^t$. The parameter $\epsilon$ is chosen in a range that does not affect classification accuracy.

Our FedElasticNet approach can be integrated into existing FL algorithms such as FedAvg~\citep{McMahan2017communication}, SCAFFOLD~\citep{Karimireddy2020scaffold}, and FedDyn~\citep{Acar2021federated} without additional costs, which will be described in the following subsections.   

\subsection{FedElasticNet for FedAvg \& FedProx (FedAvg \& FedProx + Elastic Net)}  \label{section:fedavg} 

Our FedElasticNet can be applied to FedAvg~\citep{McMahan2017communication} by adding two regularizers on the local updates, which resolves the client drift problem and the communication cost problem. As shown in Algorithm~\ref{algo:fedavg}, the local client minimizes the local objective function (\ref{eq:local_objective}). In Step 7, $n$ and $n_k$ denote the total numbers of data points of all clients and the data points of the $k$th client, respectively. 

\begin{algorithm*}[t!]
    \caption{FedElasticNet for FedAvg \& FedProx}
    \label{algo:fedavg}
    
    \begin{algorithmic}[1]
    \Require{$T$, $\theta^{0}, \lambda_1 > 0, \lambda_2 > 0$}
    
    \For{each round $t = 1,2,...,T$}
    \State{Sample devices $\mathcal{P}_{t} \subseteq \left [ m \right ]$ and transmit $\theta^{t-1}$ to each selected local client}  
    \FORPAR{each local client $k \in \mathcal{P}_{t}$}
    \State{Set $\theta_{k}^{t} = \arg\underset{\theta}\min \: L_{k}\left ( \theta \right )+ \frac{\lambda_2}{2} \left\| \theta - \theta^{t-1}\right\|^{2}_{2}+ \lambda_1 \left\|\theta - \theta^{t-1} \right\|_{1}$}
    \State{Transmit $\Delta_k^t = \theta_{k}^{t} - \theta^{t-1}$ to the global server}
    \ENDFORPAR
    \State{Set $\theta^{t} = \theta^{t-1} + \sum_{k\in \mathcal{P}_t}\frac{n_{k}}{n}\Delta_{k}$} 
    \EndFor
\end{algorithmic}
\end{algorithm*}

It is worth mentioning that FedProx uses the $\ell_2$-norm regularizer to address the data and system heterogeneities~\citep{Li2020federated}. By adding the $\ell_1$-norm regularizer, we can sparsify the local updates of FedProx and thus effectively reduce the communication cost. Notice that Algorithm~\ref{algo:fedavg} can be viewed as the integration of FedProx and FedElasticNet.


\subsection{FedElasticNet for SCAFFOLD (SCAFFOLD + Elastic Net)} \label{section:scaffold}

In SCAFFOLD, each client computes the following mini-batch gradient $\nabla L_k(\theta_k^t)$ and control variate $c_k^t$~\citep{Karimireddy2020scaffold}:
\begin{align}
    \theta_{k}^{t} &\gets \theta_k^{t} - \eta_{l}\left ( \nabla L_{k}\left ( \theta_{k}^{t} \right ) - c_{k}^{t-1} +c^{t-1}\right), \label{eq:scaffold_gradient}\\
    c^{t}_{k} & \gets c_{k}^{t-1} - c^{t-1} + \frac{1}{B\eta_{l}}(\theta^{t-1} - \theta^{t}_{k}),
\end{align}
where $\eta_l$ is the local step size and $B$ is the number of mini-batches at each round. This control variate makes the local parameters $\theta_k^t$ updated in the direction of the global optimum rather than each local optimum, which effectively resolves the client drift problem. However, SCAFFOLD incurs twice much communication cost since it should communicate the local update $\Delta_k^t = \theta_{k}^{t} - \theta^{t-1}$ and the control variate $\Delta c_{k} = c^{t}_{k} - c_{k}^{t-1}$, which are of the same dimension.

\begin{algorithm*}[t]
    \caption{FedElasticNet for SCAFFOLD}
    \label{algo:SCAFFOLD}
    \begin{algorithmic}[1]
    \Require{$T$, $\theta^{0}, \lambda_1 > 0, \lambda_2 > 0$, global step size $\eta_{g}$, and local step size $\eta _{l}$.}
    \For{each round $t = 1,2,...,T$}
    \State{Sample devices $\mathcal{P}_{t} \subseteq \left [ m \right ]$ and transmit $\theta^{t-1}$ and $c^{t-1}$ to each selected device}  
    \FORPAR{each device $k \in \mathcal{P}_{t}$}
    \State{Initialize local model $\theta_k^t = \theta^{t-1}$}
    \For{$b = 1,\ldots, B$}
        \State{Compute mini-batch gradient $\nabla L_{k}\left ( \theta_{k}^{t} \right )$}
        \State{$\theta_{k}^{t} \gets \theta_k^{t} - \eta_{l}\left ( \nabla L_{k}\left ( \theta_{k}^{t} \right ) - c_{k}^{t-1} +c^{t-1} + \lambda_2 (\theta_k^{t} - \theta^{t-1}) + \lambda_1 \text{sign}(\theta_k^{t} - \theta^{t-1}) \right) $}
    \EndFor
    \State{Set $c^{t}_{k} = c_{k}^{t-1} - c^{t-1} + \frac{1}{B\eta_{l}}(\theta^{t-1} - \theta^{t}_{k}) $ }
    \State{Transmit $\Delta_k^t = \theta_{k}^{t} - \theta^{t-1}$ and $\Delta c_{k} = c^{t}_{k} - c_{k}^{t-1}$ to the global server}
    \ENDFORPAR
    \State{Set $\theta^{t} = \theta^{t-1} + \frac{\eta_{g}}{\left| \mathcal{P}_t \right| }\sum_{k\in\mathcal{P}_t}\Delta_{k}$}
    \State{Set $c^{t} = c^{t-1} + \frac{1}{m}\sum_{k \in \mathcal{P}_t}\Delta c_{k}$}
    \EndFor
\end{algorithmic}
\end{algorithm*}

In order to reduce the communication cost of SCAFFOLD, we apply our FedElasticNet framework. In the proposed algorithm (see Algorithm~\ref{algo:SCAFFOLD}), each local client computes the following mini-batch gradient instead of (\ref{eq:scaffold_gradient}): 
\begin{equation}
 \theta_{k}^{t} \gets \theta_k^{t} - \eta_{l}\left ( \nabla L_{k}\left ( \theta_{k}^{t} \right ) - c_{k}^{t-1} +c^{t-1} + \lambda_2 (\theta_k^t - \theta^{t-1}) + \lambda_1 \text{sign}(\theta_k^t - \theta^{t-1}) \right),
\end{equation}
where $\lambda_1 \text{sign}(\theta_k^t - \theta^{t-1})$ corresponds to the gradient of $\ell_1$-norm regularizer $\lambda_1 \| \theta_k^t - \theta^{t-1} \|_1$. This $\ell_1$-norm regularizer sparsifies the local update $\Delta_k^t = \theta_{k}^{t} - \theta^{t-1}$; hence, reduces the communication cost. Since the control variate already addresses the client drift problem, we can remove the $\ell_2$-norm regularizer or set $\lambda_2$ as a small value.

\subsection{FedElasticNet for FedDyn (FedDyn + Elastic Net)} \label{section:feddyn}
\begin{algorithm*}[b!]
    \caption{FedElasticNet for FedDyn}
    \label{algo:feddyn}
    \begin{algorithmic}[1]
    \Require{$T$, $\theta^{0}, \lambda_1 > 0, \lambda_2 > 0, h^{0} = 0, \nabla L_{k}\left (  \theta^{0}_{k}\right ) = 0$.}
    \For{each round $t = 1,2,...,T$}
    \State{Sample devices $\mathcal{P}_{t} \subseteq \left [ m \right ]$ and transmit $\theta^{t-1}$ to each selected device}  
    \FORPAR{each device $k \in \mathcal{P}_{t}$}
    \State{Set $\theta_{k}^{t} = \arg\underset{\theta}\min \: L_{k}\left ( \theta \right ) - \left< \nabla L_{k} (\theta_k^{t-1}), \theta\right> + \frac{\lambda_2}{2} \left\| \theta - \theta^{t-1}\right\|^{2}_{2}+ \lambda_1 \left\|\theta - \theta^{t-1} \right\|_{1}$}
    \State{Set $\nabla L_{k}\left ( \theta_{k}^{t} \right ) = \nabla L_{k}\left ( \theta_{k}^{t-1} \right ) - \lambda_2 \left (\theta_{k}^{t} - \theta^{t-1} \right ) - \lambda_1 \text{sign} \left (\theta_{k}^{t} - \theta^{t-1}  \right )$}
    \State{Transmit $\Delta_k^t = \theta_{k}^{t} - \theta^{t-1}$ to the global server}
    \ENDFORPAR
    \FORPAR{each device $k \notin \mathcal{P}_{t}$}
    \State{Set $\theta_{k}^{t} = \theta_{k}^{t-1}$ and $\nabla L_{k}\left ( \theta_{k}^{t} \right ) = \nabla L_{k}\left ( \theta_{k}^{t-1} \right )$}
    \ENDFORPAR
    \State{Set $h^{t} = h^{t-1} - \frac{\lambda_2}{m} \sum_{k\in \mathcal{P}_{t}}\left (\theta_{k}^{t} - \theta^{t-1} \right ) - \frac{\lambda_1}{m} \sum_{k \in \mathcal{P}_t}{\text{sign}(\theta_{k}^{t}-\theta^{t-1})}$} 
    \State{Set $\theta^{t} = \frac{1}{\left| \mathcal{P}_{t}\right|}\sum_{k \in \mathcal{P}_{t}} \theta_{k}^{t} - \frac{1}{\lambda_2}h^{t}$}
    \EndFor
\end{algorithmic}
\end{algorithm*}
In FedDyn, each local client optimizes the following local objective, which is the sum of its empirical loss and a penalized risk function: 
\begin{equation}\label{eq:feddyn}
    \theta_{k}^{t} = \arg\min_\theta  L_{k}\left ( \theta \right ) - \langle \nabla L_{k} (\theta_k^{t-1}), \theta \rangle + \frac{\lambda_2}{2} \left\| \theta - \theta^{t-1}\right\|^{2}_{2}, 
\end{equation}
where the penalized risk is dynamically updated so as to satisfy the following first-order condition for local optima: 
\begin{equation} \label{eq:feddyn_first}
    \nabla L_k(\theta_k^t) - \nabla L_k(\theta_k^{t-1}) + \lambda_2 (\theta_k^t - \theta^{t-1}) = 0.
\end{equation}
This first-order condition shows that the stationary points of the local objective function are consistent with the server model~\citep{Acar2021federated}. That is, the client drift is resolved. However, FedDyn makes no difference from FedAvg and FedProx in communication costs.

By integrating FedElasticNet and FedDyn, we can effectively reduce the communication cost of FedDyn as well. In the proposed method (i.e., FedElasticNet for FedDyn), each local client optimizes the following local empirical objective: 
\begin{equation}\label{eq:feddyn_l1}
    \theta_{k}^{t} = \arg\min_\theta  L_{k}\left ( \theta \right ) - \langle \nabla L_{k} (\theta_k^{t-1}), \theta\rangle + \frac{\lambda_2}{2} \left\| \theta - \theta^{t-1}\right\|^{2}_{2} + \lambda_1 \left\| \theta - \theta^{t-1} \right\|_1, 
\end{equation}
which is the sum of (\ref{eq:feddyn}) and the additional $\ell_1$-norm penalty on the local updates. The corresponding first-order condition is given by 
\begin{equation} \label{eq:feddyn_first_l1}
    \nabla L_k(\theta_k^t) - \nabla L_k(\theta_k^{t-1}) + \lambda_2 (\theta_k^t - \theta^{t-1}) + \lambda_1 \text{sign}(\theta_k^t - \theta^{t-1}) = 0.
\end{equation}
Notice that the stationary points of the local objective function are consistent with the server model as in (\ref{eq:feddyn_first}). If $\theta_k^t \ne \theta^{t-1}$ (i.e., $\text{sign}(\theta_k^t - \theta^{t-1}) = \pm 1$), then the first-order condition is
\begin{equation} \label{eq:feddyn_first_l1_noise}
    \nabla L_k(\theta_k^t) - \nabla L_k(\theta_k^{t-1}) + \lambda_2 (\theta_k^t - \theta^{t-1}) = \pm \lambda_1,
\end{equation}
where $\lambda_1$ is a vectorized one. Our empirical results show that the optimized hyperparameter is $\lambda_1 = 10^{-4}$ or $10^{-6}$ and the impact of $\pm \lambda_1$ in (\ref{eq:feddyn_first_l1_noise}) would be negligible. Hence, the proposed FedElasticNet for FedDyn resolves the client drift problem. Further, the local update $\Delta_k^t = \theta_k^t - \theta^{t-1}$ is sparse due to the $\ell_1$-norm regularizer, which effectively reduces the communication cost at the same time. The detailed algorithm is described in Algorithm~\ref{algo:feddyn}.

\paragraph{Convergence Analysis}

We provide a convergence analysis on FedElasticNet for FedDyn (Algorithm~\ref{algo:feddyn}). 

\begin{theorem}\label{feddyntheorem}
Assume that the clients are uniformly randomly selected at each round and the local loss functions are convex and $\beta$-smooth. Then Algorithm~\ref{algo:feddyn} satisfies the following inequality:
\begin{align}
    \label{feddynconvergence}
    \mathbb{E}\left[\mathcal{R} \left(\frac{1}{T}\sum_{t=0}^{T-1}\gamma^t \right)-\mathcal{R}(\theta_*)\right]&\nonumber\le
    \frac{1}{T}\frac{1}{\kappa_0}(\mathbb{E}\lVert  \gamma^{0}-\theta_* \rVert^2_2 +\kappa C_{0})+\frac{\kappa'}{\kappa_0} \cdot \lambda_1^2d\\
    &-\frac{1}{T}\frac{2\lambda_1}{\lambda_2}\sum_{t=1}^T \left\langle {\gamma^{t-1}-\theta_*} ,\frac{1}{m}\sum_{k\in [m]}\mathbb{E}[\mathrm{sign}(\tilde{\theta}_k^t-\theta^{t-1})]\right\rangle,
\end{align}
where $\theta_*=\argmin_\theta \mathcal{R}(\theta)$, $P=|\mathcal{P}_t|$, $\gamma^t=\frac{1}{P}\sum_{\mathcal{P}_t}\theta_k^t$, $d=\dim(\theta)$, $\kappa=\frac{10m}{P}\frac{1}{\lambda_2}\frac{\lambda_2+\beta}{\lambda_2^2-25\beta^2}, \kappa_0=\frac{2}{\lambda_2}\frac{\lambda_2^2-25\lambda_2\beta-50\beta^2}{\lambda_2^2-25\beta^2}$, $\kappa' = \frac{5}{\lambda_2}\frac{\lambda_2+\beta}{\lambda_2^2-25\beta^2} = \kappa \cdot \frac{P}{2m}$, $C_0 =\frac{1}{m}\sum_{k\in[m]}\mathbb{E}\lVert\nabla L_k(\theta_k^0)-\nabla L_k(\theta_*)\rVert$ and 
\begin{align*}    
    \Tilde{\theta}_k^t &= \arg\underset{\theta}\min \: L_{k}\left ( \theta \right ) - \left< \nabla L_{k} (\theta_k^{t-1}), \theta\right> + \frac{\lambda_2}{2} \left\| \theta - \theta^{t-1}\right\|^{2}_{2}+ \lambda_1\left\|\theta - \theta^{t-1} \right\|_{1} ~~~~ \forall k \in [m]. 
\end{align*}
\end{theorem}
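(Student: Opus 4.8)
The plan is to follow the convergence analysis of FedDyn in~\citet{Acar2021federated}, treating the additional $\ell_1$-penalty subgradient $\lambda_1\,\mathrm{sign}(\theta_k^t-\theta^{t-1})$ as a bounded perturbation of FedDyn's first-order condition. For the analysis I take the thresholding parameter $\epsilon=0$, so the transmitted update equals the minimizer of~(\ref{eq:feddyn_l1}), i.e.\ $\theta_k^t=\tilde\theta_k^t$ and $\gamma^t=\frac1P\sum_{k\in\mathcal{P}_t}\tilde\theta_k^t$. First I would record the server-side bookkeeping from Steps~11--12 of Algorithm~\ref{algo:feddyn}: $h^t$ is the running average of the negative regularizer gradients and $\theta^t=\frac1P\sum_{k\in\mathcal{P}_t}\theta_k^t-\frac1{\lambda_2}h^t$; combining this with the modified first-order condition~(\ref{eq:feddyn_first_l1}) expresses $\theta^t$ and the gradient surrogates $\nabla L_k(\theta_k^t)$ as affine functions of the past local iterates plus an accumulated $\lambda_1$-sign term, paralleling the FedDyn derivation but with the extra sign contribution carried along (the $\frac1{\lambda_2}$ in $-\frac1{\lambda_2}h^t$ is the origin of the $\frac{2\lambda_1}{\lambda_2}$ prefactor in the residual term).

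Next I would reuse the FedDyn potential function --- a combination of $\mathbb{E}\lVert\gamma^t-\theta_*\rVert_2^2$ with a term controlling the gradient discrepancies $\frac1m\sum_k\mathbb{E}\lVert\nabla L_k(\theta_k^t)-\nabla L_k(\theta_*)\rVert$ (the source of $C_0$ and $\kappa$) --- and derive the one-round recursion. The ingredients are: convexity of each $L_k$, i.e.\ $L_k(\theta_*)\ge L_k(\theta_k^t)+\langle\nabla L_k(\theta_k^t),\theta_*-\theta_k^t\rangle$; $\beta$-smoothness, i.e.\ $\lVert\nabla L_k(\theta_k^t)-\nabla L_k(\theta_k^{t-1})\rVert_2\le\beta\lVert\theta_k^t-\theta_k^{t-1}\rVert_2$; and substitution of the optimality condition~(\ref{eq:feddyn_first_l1}) to eliminate $\nabla L_k(\theta_k^t)$. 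Wherever the FedDyn proof invoked its own condition~(\ref{eq:feddyn_first}), the present one contributes the same expression plus $-\lambda_1\,\mathrm{sign}(\theta_k^t-\theta^{t-1})$; I would split each such occurrence into a ``FedDyn part'', which after telescoping reproduces $\frac1{T\kappa_0}(\mathbb{E}\lVert\gamma^0-\theta_*\rVert_2^2+\kappa C_0)$, and a ``perturbation part''.

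The perturbation part then splits into a quadratic and a linear contribution. Bounding $\lVert\mathrm{sign}(\theta_k^t-\theta^{t-1})\rVert_2^2\le d$ uniformly and applying Jensen over the clients, the quadratic terms accumulate to $\frac{\kappa'}{\kappa_0}\lambda_1^2 d$; the linear term pairs $\gamma^{t-1}-\theta_*$ with the client average of $\mathrm{sign}(\tilde\theta_k^t-\theta^{t-1})$ --- where, because clients are sampled uniformly, $\mathbb{E}_{\mathcal{P}_t}[\frac1P\sum_{k\in\mathcal{P}_t}(\cdot)]=\frac1m\sum_{k\in[m]}(\cdot)$ --- and, being sign-indefinite in general, is retained as the last term of~(\ref{feddynconvergence}). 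Telescoping the one-round recursion over $t=1,\dots,T$, taking total expectation over the client sampling, dividing by $T$, and invoking Jensen to pull the time-average inside the convex $\mathcal{R}$ yields the claimed bound; the constants $\kappa$, $\kappa_0$, $\kappa'$ fall out of matching the recursion's contraction factor, just as in FedDyn, which implicitly needs $\lambda_2>5\beta$ so the denominators $\lambda_2^2-25\beta^2$ are positive.

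The main obstacle I expect is carrying the FedDyn contraction bookkeeping through faithfully while the subgradient term spoils the exact stationarity the original argument uses: one must check that replacing~(\ref{eq:feddyn_first}) by~(\ref{eq:feddyn_first_l1}) does not break the cancellations producing the $1/T$ rate, and that every new $\lambda_1$-dependent cross term either telescopes, is absorbed into the $\lambda_1^2 d$ bound, or is collected into the explicit residual term --- in particular that no uncontrolled $\lambda_1$-linear-in-$(\theta_k^t-\theta_k^{t-1})$ term survives. A secondary technical point is handling coordinates with $\theta_k^t(i)=\theta^{t-1}(i)$, where any subgradient value in $[-1,1]$ is admissible and the bound $\lVert\mathrm{sign}(\cdot)\rVert_2^2\le d$ still applies, and justifying the interchange of expectation with this subgradient selection.
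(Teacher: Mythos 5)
Your proposal follows essentially the same route as the paper's proof: the same potential function $\mathbb{E}\lVert\gamma^t-\theta_*\rVert^2+\kappa C_t$, the same treatment of the perturbed first-order condition~(\ref{eq:feddyn_first_l1}) as FedDyn's condition plus a sign term whose quadratic contribution is absorbed via $\lVert\mathrm{sign}(\cdot)\rVert^2\le d$ into the $\lambda_1^2 d$ penalty and whose linear contribution is retained as the explicit inner-product residual, followed by telescoping and Jensen. The one small discrepancy is the condition on $\lambda_2$: positivity of $\kappa_0$ (by which one must divide) requires $\lambda_2>27\beta$ as in the paper's full statement in the appendix, not merely $\lambda_2>5\beta$ for the denominators $\lambda_2^2-25\beta^2$.
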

Theorem~\ref{feddyntheorem} provides a convergence rate of FedElasticNet for FedDyn. If $T \to \infty$, the first term of (\ref{feddynconvergence}) converges to 0 at the speed of $\mathcal{O}(1/T)$. The second and the third terms of (\ref{feddynconvergence}) are additional penalty terms caused by the $\ell_1$-norm regularizer. The second term is a negligible constant in the range of hyperparameters of our interest. Considering the last term, notice that the summand at each $t$ includes the expected average of sign vectors where each element is $\pm 1$. If a coordinate of the sign vectors across clients is viewed as an IID realization of Bern($\frac{1}{2}$), it can be thought of as a small value with high probability by the concentration property (see Appendix ~\ref{sec:disscusion}). In addition, $\gamma^{t-1}-\theta_*$ characterizes how much the average of local models deviates from the globally optimal model, which tends to be small as training proceeds. Therefore, the effect of both additional terms is negligible.

\section{Experiments} \label{section:experiments}



In this section, we evaluate the proposed FedElasticNet on benchmark datasets for various FL scenarios. In particular, FedElasticNet is integrated with prior methods including FedProx~\citep{Li2020federated}, SCAFFOLD~\citep{Karimireddy2020scaffold}, and FedDyn~\citep{Acar2021federated}. The experimental results show that FedElasticNet effectively enhances communication efficiency while maintaining classification accuracy and resolving the client drift problem. We observe that the integration of FedElasticNet and FedDyn (Algorithm~\ref{algo:feddyn}) achieves the best performance. 

\paragraph{Experimental Setup}

We use the same benchmark datasets as prior works. The evaluated datasets include MNIST \citep{Lecun1998gradient}, a subset of EMNIST \citep[EMNIST-L]{Cohen2017emnist}, CIFAR-10, CIFAR-100 \citep{Krizhevsky2009learning}, and Shakespeare \citep{Shakespeare1914complete}. The IID split is generated by randomly assigning datapoint to the local clients. The Dirichlet distribution is used on the label ratios to ensure uneven label distributions among local clients for non-IID splits as in~\citet{Zhao2018federated,Acar2021federated}. For the uneven label distributions among 100 experimental devices, the experiments are performed by using the Dirichlet parameters of 0.3 and 0.6, and the number of data points is obtained by the lognormal distribution as in~\citet{Acar2021federated}. The data imbalance is controlled by varying the variance of the lognormal distribution~\citep{Acar2021federated}. 

We use the same neural network models of FedDyn experiments~\citep{Acar2021federated}. For MNIST and EMNIST-L, fully connected neural network architectures with 2 hidden layers are used. The numbers of neurons in the layers are 200 and 100, respectively~\citep{Acar2021federated}. Remark that the model used for MNIST dataset is the same as in~\citet{Acar2021federated,McMahan2017communication}. For CIFAR-10 and CIFAR-100 datasets, we use a CNN model consisting of 2 convolutional layers with 64 $5 \times 5$ filters followed by 2 fully connected layers with 394 and 192 neurons and a softmax layer. For the next character prediction task for Shakespeare, we use a stacked LSTM as in \citet{Acar2021federated}. 

For MNIST, EMNIST-L, CIFAR10, and CIFAR100 datasets, we evaluate three cases: IID, non-IID with Dirichlet (.6), and non-IID with Dirichlet (.3). Shakespeare datasets are evaluated for IID and non-IID cases as in~\citet{Acar2021federated}. We use the batch size of 10 for the MNIST dataset, 50 for CIFAR-10, CIFAR-100, and EMNIST-L datasets, and 20 for the Shakespeare dataset. We optimize the hyperparameters depending on the evaluated datasets: learning rates, $\lambda_2$, and $\lambda_1$.

\paragraph{Evaluation of Methods}

\begin{table}[ht!]
\resizebox{\textwidth}{!}{%
\begin{tabular}{|c|c|c|cc|cc|cc|}
\hline
 &
  \textbf{Dataset} &
  \multicolumn{1}{l|}{\textbf{Rounds}} &
  \multicolumn{1}{c|}{\textbf{FedProx}} &
  \textbf{Algorithm~\ref{algo:fedavg}} &
  \multicolumn{1}{c|}{\textbf{SCAFFOLD}} &
  \textbf{Algorithm~\ref{algo:SCAFFOLD}} &
  \multicolumn{1}{c|}{\textbf{FedDyn}} &
  \textbf{Algorithm~\ref{algo:feddyn}} \\ \hline \hline
\multirow{5}{*}{\textbf{IID}}            
& CIFAR-10    & 200 & 163.82 & \textbf{124.56} & 327.64 & \textbf{313.04} & 163.82 & \textbf{34.23}  \\ \cline{2-2}
& CIFAR-100   & 500 & 413.37 & \textbf{249.26} & 826.74 & \textbf{803.33} & 413.37 & \textbf{132.95} \\ \cline{2-2}
& MNIST       & 100 & 20.3   & \textbf{7.51}   & 40.58   & \textbf{36.98}  & 20.3   & \textbf{2.55}   \\ \cline{2-2}
& EMNIST-L    & 200 & 18.4   & \textbf{11.23}  & 36.78   & \textbf{34.90}  & 18.4   & \textbf{2.12}   \\ \cline{2-2}
& Shakespeare & 100 & 1.99   & \textbf{1.93}   & 3.32   & \textbf{3.31}   & 1.99   & \textbf{1.28}   \\ \hline
\multirow{4}{*}{\textbf{Dirichlet (.6)}} & CIFAR-10    & 200 & 163.82 & \textbf{116.96} & 327.64 & \textbf{310.51} & 163.82 & \textbf{29.96}  \\ \cline{2-2}
& CIFAR-100   & 500 & 413.37 & \textbf{247.87} & 826.74 & \textbf{798.21} & 413.37 & \textbf{127.23} \\ \cline{2-2}
& MNIST       & 100 & 20.3   & \textbf{6.66}   & 40.58   & \textbf{35.20}  & 20.3   & \textbf{2.55}   \\ \cline{2-2}
& EMNIST-L    & 200 & 18.4   & \textbf{11.18}  & 36.78   & \textbf{34.45}  & 18.4   & \textbf{2.05}   \\ \hline
\multirow{4}{*}{\textbf{Dirichlet (.3)}} & CIFAR-10    & 200 & 163.82 & \textbf{112.1}  & 327.64 & \textbf{308.15} & 163.82 & \textbf{26.99}  \\ \cline{2-2}
& CIFAR-100   & 500 & 413.37 & \textbf{255.97} & 826.74 & \textbf{804.47} & 413.37 & \textbf{124.02} \\ \cline{2-2}
& MNIST       & 100 & 20.3   & \textbf{6.49}   & 40.58   & \textbf{34.95}  & 20.3   & \textbf{2.53}   \\ \cline{2-2}
& EMNIST-L    & 200 & 18.4   & \textbf{11.29}  & 36.78   & \textbf{34.22}  & 18.4   & \textbf{2.07}   \\ \hline
\textbf{Non-IID} &
  Shakespeare &   100 &   {13.61} &   \textbf{11.28} &   {27.21} &   \textbf{27.21} &   {13.61} &   \textbf{9.41} \\ \hline
\end{tabular}%
}
\caption{Number of non-zero elements cumulated over the all round simulated with 10$\%$ client participation for IID and non-IID settings in FL scenarios. The non-IID settings of MNIST, EMNIST-L, CIFAR-10, and CIFAR-100 datasets are created with the Dirichlet distribution of labels owned by the client. Algorithm~\ref{algo:fedavg} is FedElasticNet for FedProx, Algorithm~\ref{algo:SCAFFOLD} is FedElasticNet for SCAFFOLD, and Algorithm~\ref{algo:feddyn} is FedElasticNet for FedDyn. The unit of the cumulative number of elements is $10^{7}$.}
\label{tab:nonzero}
\end{table}

\begin{table*}[ht!]
\resizebox{\textwidth}{!}{%
\begin{tabular}{|c|c|c|cc|cc|cc|}
\hline
&
\textbf{Dataset} &
\multicolumn{1}{l|}{\textbf{Rounds}} &
\multicolumn{1}{c|}{\textbf{FedProx}} &
\textbf{Algorithm \ref{algo:fedavg}} &
\multicolumn{1}{c|}{\textbf{SCAFFOLD}} &
\textbf{Algorithm \ref{algo:SCAFFOLD}} &
\multicolumn{1}{c|}{\textbf{FedDyn}} &
\textbf{Algorithm \ref{algo:feddyn}} \\ \hline \hline
\multirow{5}{*}{\textbf{IID}}            
& CIFAR-10  & 200 & 586.42  & \textbf{232.77} & 685.17  & \textbf{236.76}  & 639.59 (221.64)  & \textbf{140.71} \\ \cline{2-2}
& CIFAR-100 & 500 & 1712.84 & \textbf{470.78} & 2225.98 & \textbf{1173.01} & 1964.63 (511.14) & \textbf{423.53} \\ \cline{2-2}
& MNIST     & 100 & 266.26  & \textbf{47.29}  & 286.88  & \textbf{83.51}   & 308.27 (27.76)  & \textbf{24.76}  \\ \cline{2-2}
& EMNIST-L  & 200 & 657.64  & \textbf{166.07} & 764.39  & \textbf{344.94}  & 704.57 (132.50)  & \textbf{96.37}  \\ \cline{2-2}
& Shakespeare & 100 & 646.33  & \textbf{403.63} & 520.60  & \textbf{226.68}  & 576.17 (348.11)  & \textbf{225.44} \\ \hline
\multirow{4}{*}{\textbf{Dirichlet (.6)}} 
& CIFAR-10  & 200 & 564.57  & \textbf{203.61} & 663.23  & \textbf{198.53}  & 616.69 (197.62)  & \textbf{121.97} \\ \cline{2-2}
& CIFAR-100 & 500 & 1709.33 & \textbf{449.59} & 2202.61 & \textbf{1119.91} & 1951.06 (478.60) & \textbf{398.87} \\ \cline{2-2}
& MNIST     & 100 & 249.63  & \textbf{45.51}  & 293.22  & \textbf{75.57}   & 304.00 (26.75)  & \textbf{21.24}  \\ \cline{2-2}
& EMNIST-L  & 200 & 646.14  & \textbf{163.24} & 755.75  & \textbf{347.31}  & 704.63 (134.31)  & \textbf{89.92}  \\ \hline
\multirow{4}{*}{\textbf{Dirichlet (.3)}} & CIFAR-10    & 200 & 550.15  & \textbf{187.01} & 636.90  & \textbf{115.26}  & 602.80 (180.29)  & \textbf{108.69} \\ \cline{2-2}
& CIFAR-100 & 500 & 1696.47 & \textbf{428.77} & 2170.14 & \textbf{1078.97} & 1937.09 (463.67) & \textbf{382.44} \\ \cline{2-2}
& MNIST     & 100 & 244.49  & \textbf{45.24}  & 291.88  & \textbf{73.12}   & 300.76 (26.71)  & \textbf{19.03}  \\ \cline{2-2}
& EMNIST-L  & 200 & 636.21  & \textbf{162.57} & 747.72  & \textbf{328.21}  & 700.38 (128.34)  & \textbf{91.55}  \\ \hline
\textbf{Non-IID} &   Shakespeare &   100 &   {593.21} &   \textbf{440.97} &   {628.32} &   \textbf{470.22} &   {609.11 (348.11)} & \textbf{419.99} \\ \hline
\end{tabular}%
}
\caption{Cumulative entropy values of transmitted bits with 10$\%$ client participation for IID and non-IID settings in FL scenarios. The non-IID settings of MNIST, EMNIST-L, CIFAR-10, and CIFAR-100 datasets are created with the Dirichlet distribution of labels owned by the client. Algorithm~\ref{algo:fedavg} is FedElasticNet for FedProx, Algorithm~\ref{algo:SCAFFOLD} is FedElasticNet for SCAFFOLD, and Algorithm~\ref{algo:feddyn} is FedElasticNet for FedDyn. The left-side numbers of FedDyn are the entropy values when the local models $\theta_k^t$ are transmitted and the right-side numbers in parentheses are the entropy values when the local updates $\Delta_k^t = \theta_k^t - \theta^{t-1}$ are transmitted.}
\label{tab:entropy}
\end{table*}

\begin{table*}[ht!]
\resizebox{\textwidth}{!}{%
\begin{tabular}{|c|c|c|cc|cc|cc|}
\hline
&
\textbf{Dataset} &
\multicolumn{1}{l|}{\textbf{Rounds}} &
\multicolumn{1}{c|}{\textbf{FedProx}} &
\textbf{Algorithm \ref{algo:fedavg}} &
\multicolumn{1}{c|}{\textbf{SCAFFOLD}} &
\textbf{Algorithm \ref{algo:SCAFFOLD}} &
\multicolumn{1}{c|}{\textbf{FedDyn}} &
\textbf{Algorithm \ref{algo:feddyn}} \\ \hline \hline
\multirow{5}{*}{\textbf{IID}}  
 & CIFAR-10 &
  200 &
  595.16 &
  \textbf{151.28} &
  873.24 &
  \textbf{252.33} &
  680.70 (259.31) &
  \textbf{57.70} \\ \cline{2-2}
\multicolumn{1}{|c|}{} &
  CIFAR-100 &
  500 &
  1721.53 &
  \textbf{466.21} &
  2774.60 &
  \textbf{1068.47} &
  2038.07 (689.24) &
  \textbf{447.18} \\ \cline{2-2}
\multicolumn{1}{|c|}{} &
  MNIST &
  100 &
  325.86 &
  \textbf{39.55} &
  389.52 &
  \textbf{71.57} &
  324.88 (20.47) &
  \textbf{12.33} \\ \cline{2-2}
\multicolumn{1}{|c|}{} &
  EMNIST-L &
  200 &
  728.34 &
  \textbf{123.74} &
  1018.11 &
  \textbf{263.95} &
  797.39 (55.46) &
  \textbf{40.71} \\ \cline{2-2}
\multicolumn{1}{|c|}{} &
  Shakespeare &
  100 &
  640.69 &
  \textbf{279.51} &
  529.05 &
  \textbf{476.81} &
  343.32 (298.58) &
  \textbf{277.21} \\ \hline
\multicolumn{1}{|c|}{\multirow{4}{*}{\textbf{Dirichlet (.6)}}} &
  CIFAR-10 &
  200 &
  577.74 &
  \textbf{127.14} &
  839.41 &
  \textbf{236.48} &
  656.48 (223.46) &
  \textbf{47.52} \\ \cline{2-2}
\multicolumn{1}{|c|}{} &
  CIFAR-100 &
  500 &
  1697.18 &
  \textbf{453.87} &
  2682.15 &
  \textbf{1038.21} &
  1974.65 (651.78) &
  \textbf{431.54} \\ \cline{2-2}
\multicolumn{1}{|c|}{} &
  MNIST &
  100 &
  298.99 &
  \textbf{30.94} &
  530.72 &
  \textbf{106.08} &
  314.64 (20.29) &
  \textbf{11.67} \\ \cline{2-2}
\multicolumn{1}{|c|}{} &
  EMNIST-L &
  200 &
  721.49 &
  \textbf{121.76} &
  1020.32 &
  \textbf{251.65} &
  779.44 (286.11) &
  \textbf{40.71} \\ \hline
\multicolumn{1}{|c|}{\multirow{4}{*}{\textbf{Dirichlet (.3)}}} &
  CIFAR-10 &
  200 &
  563.15 &
  \textbf{105.89} &
  806.73 &
  \textbf{214.31} &
  635.78 (215.83) &
  \textbf{39.58} \\ \cline{2-2}
\multicolumn{1}{|c|}{} &
  CIFAR-100 &
  500 &
  1685.30 &
  \textbf{444.32} &
  2743.43 &
  \textbf{1060.83} &
  1934.49 (627.56) &
  \textbf{422.46} \\ \cline{2-2}
\multicolumn{1}{|c|}{} &
  MNIST &
  100 &
  295.80 &
  \textbf{42.94} &
  466.65 &
  \textbf{85.06} &
  314.96 (19.98) &
  \textbf{12.35} \\ \cline{2-2}
\multicolumn{1}{|c|}{} &
  EMNIST-L &
  200 &
  716.12 &
  \textbf{116.88} &
  1014.08 &
  \textbf{249.68} &
  771.90 (283.40) &
  \textbf{40.70} \\ \hline
\multicolumn{1}{|c|}{\textbf{Non-IID}} &
  Shakespeare &
  100 &
  595.69 &
  \textbf{409.60} &
  \textbf{684.79} &
  {897.67} &
  560.87 (316.64) &
  \textbf{318.92} \\ \hline
\end{tabular}%
}
\caption{Cumulative entropy values of transmitted bits with 100$\%$ client participation for IID and non-IID settings in FL scenarios. The non-IID settings of MNIST, EMNIST-L, CIFAR-10, and CIFAR-100 datasets are created with the Dirichlet distribution of labels owned by the client. Algorithm~\ref{algo:fedavg} is FedElasticNet for FedProx, Algorithm~\ref{algo:SCAFFOLD} is FedElasticNet for SCAFFOLD, and Algorithm~\ref{algo:feddyn} is FedElasticNet for FedDyn. The left-side numbers of FedDyn are the entropy values when the local models $\theta_k^t$ are transmitted and the right-side numbers in parentheses are the entropy values when the local updates $\Delta_k^t = \theta_k^t - \theta^{t-1}$ are transmitted. }
\label{tab:100entropy}
\end{table*}

We compare the baseline methods (FedProx, SCAFFOLD, and FedDyn) and the proposed FedElasticNet integrations (Algorithms~\ref{algo:fedavg},~\ref{algo:SCAFFOLD}, and~\ref{algo:feddyn}), respectively. We evaluate the communication cost and classification accuracy for non-IID settings of the prior methods and the proposed methods. The robustness of the client drift problem is measured by the classification accuracy of non-IID settings. 

We report the communication costs in two ways: ({\romannumeral 1}) the number of nonzero elements in transmitted values as in \citep{Yoon2021federated,Jeong2021federated} and ({\romannumeral 2}) the Shannon entropy of transmitted bits. Note that the Shannon entropy is the theoretical limit of data compression~\citep{Cover2006elements}, which can be achieved by practical algorithms; for instance, \citet{Han2016deep} used Huffman coding for model compression. We calculate the entropy of discretized values with the bin size of 0.01. Note that the transmitted values are not discretized in FL, and only the discretization is considered to calculate the entropy. The lossy compression schemes (e.g., scalar quantization, vector quantization, etc.) have not been considered since they include several implementational issues which are beyond our research scope. 

Table~\ref{tab:nonzero} reports the number of non-zero elements of the baseline methods with/without FedElasticNet. Basically, the communication costs per round of FedProx and FedDyn are the same; SCAFFOLD suffers from the doubled communication cost because of the control variates. The proposed FedElasticNet integrations (Algorithms~\ref{algo:fedavg}, ~\ref{algo:SCAFFOLD}, and~\ref{algo:feddyn}) can effectively sparsify the transmitted local updates, which enhances communication efficiency. 

In particular, the minimal communication cost is achieved when FedElasticNet is integrated with FedDyn (Algorithm~\ref{algo:feddyn}). It is because the classification accuracy is not degraded even if the transmitted values are more aggressively sparsified in Algorithm~\ref{algo:feddyn}. Fig.~\ref{fig:delta_histogram} shows the transmitted local updates $\Delta_k^t$ of Algorithm~\ref{algo:feddyn} are sparser than FedDyn and Algorithm~\ref{algo:SCAFFOLD}. Hence, Algorithm~\ref{algo:feddyn} (FedElasticNet for FedDyn) achieves the best communication efficiency. 







\begin{figure}[ht!]
\begin{subfigure}{.5\textwidth}
  \centering
  \includegraphics[width=.8\linewidth]{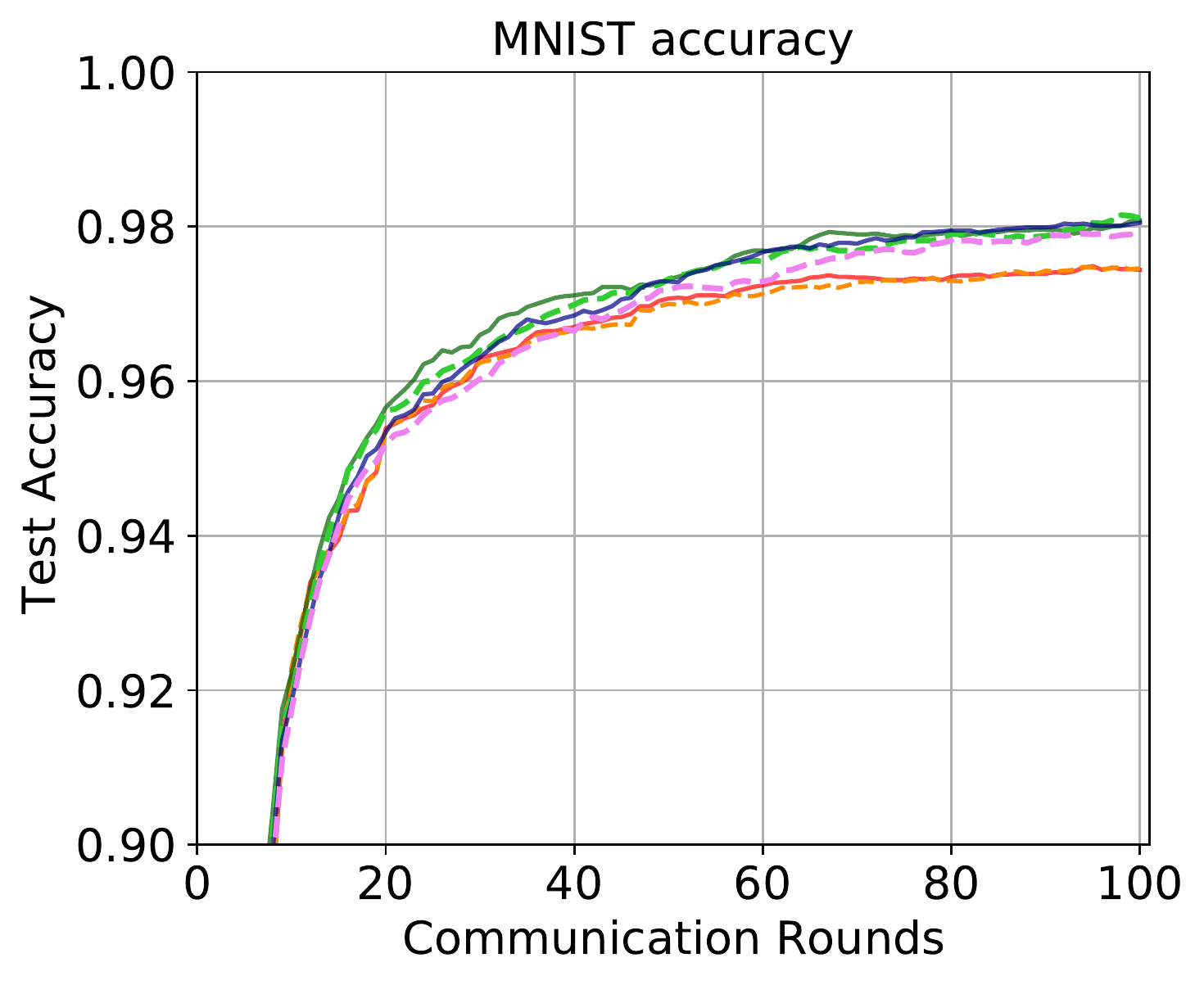}
  \label{fig:sub-first}
\end{subfigure}
\begin{subfigure}{.5\textwidth}
  \centering
  \includegraphics[width=.8\linewidth]{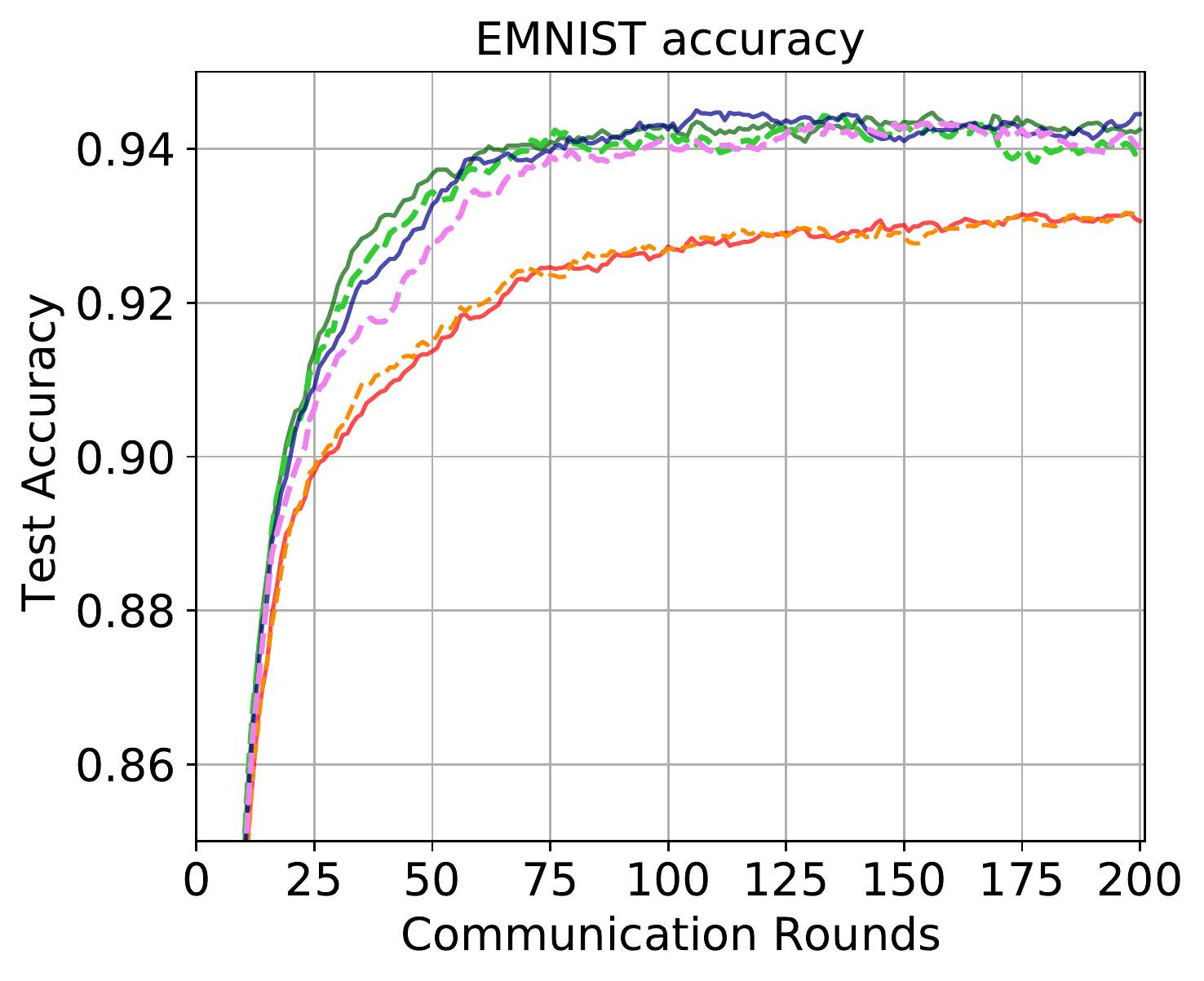}
  \label{fig:sub-second}
\end{subfigure}
\begin{subfigure}{.5\textwidth}
  \centering
  \includegraphics[width=.8\linewidth]{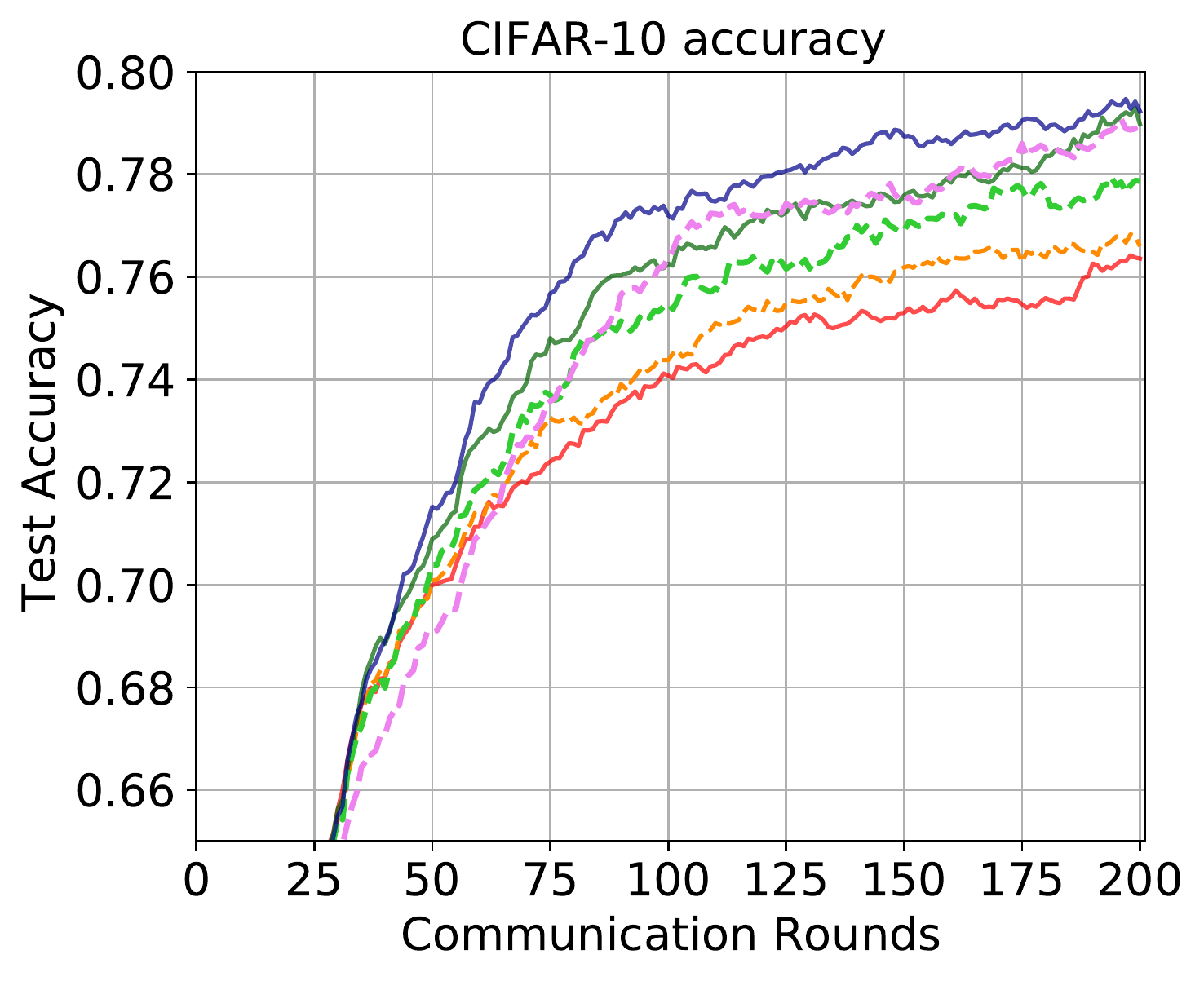}
  \label{fig:sub-third}
\end{subfigure}
\begin{subfigure}{.5\textwidth}
  \centering
  \includegraphics[width=.8\linewidth]{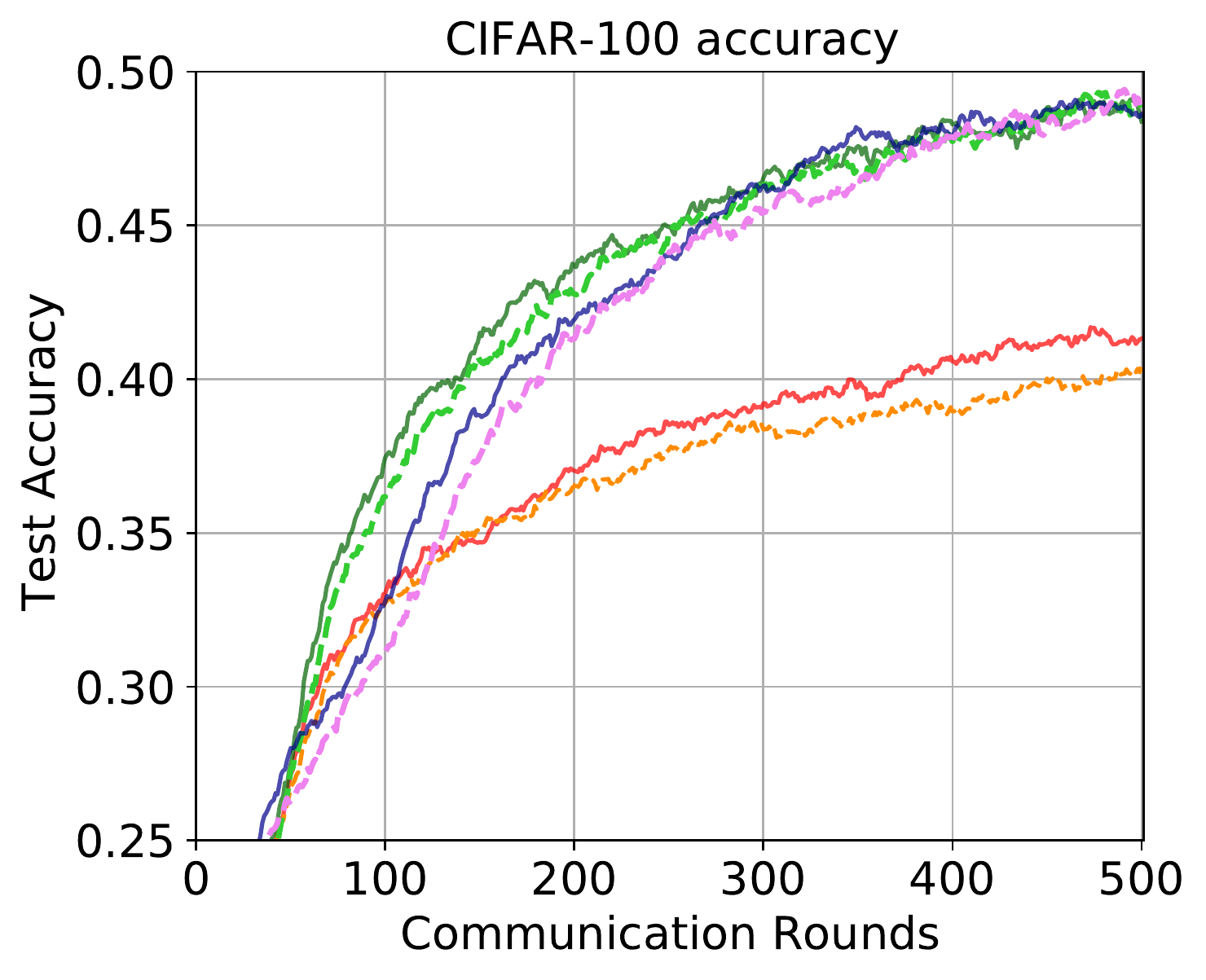}
  \label{fig:sub-fourth}
\end{subfigure}
\begin{subfigure}{1\textwidth}
  \centering
  \includegraphics[width=1\linewidth]{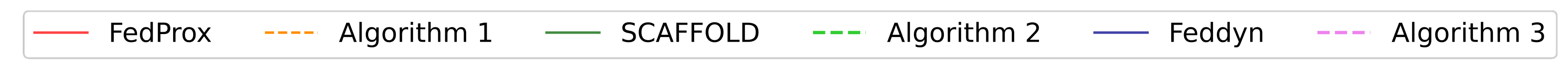}
  \label{fig:sub-Fifth}
\end{subfigure}
\caption{Classification accuracy performance evaluated in MNIST, EMNIST-L, CIFAR-10, CIFAR-100 dataset settings (10\% participation rate and Dirichlet (.3)).}
\label{fig:accuracy}
\end{figure}

\begin{figure}[ht!]
  \centering  \includegraphics[width=.5\linewidth]{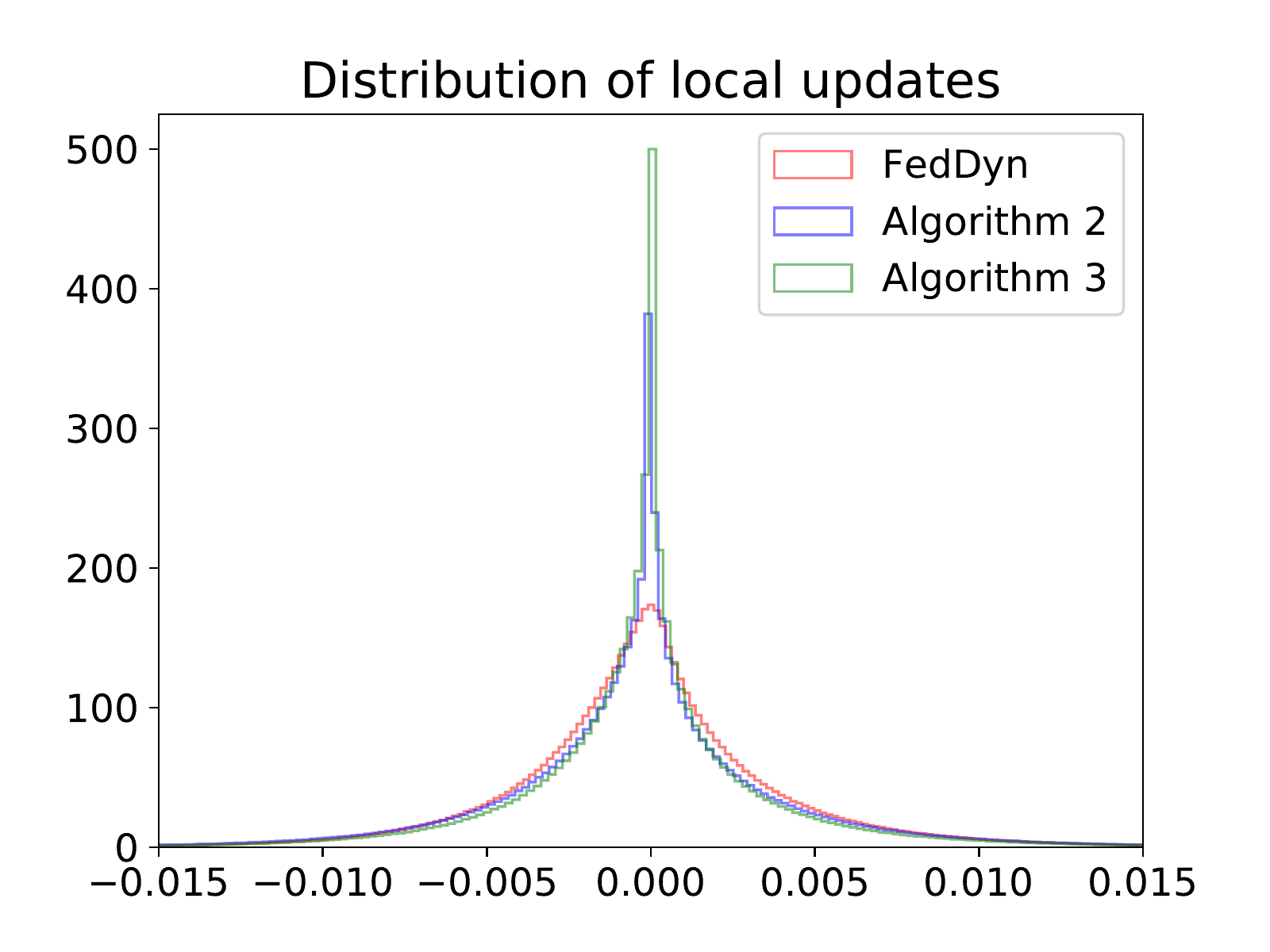}  
  \caption{Comparison of distributions of transmitted local updates $\Delta_k^t = \theta_k^t - \theta^{t-1}$ (10\% participation rate and Dirichlet (.3)) for CIFAR-10.}
  \label{fig:delta_histogram}
\end{figure}

Tables~\ref{tab:entropy} and \ref{tab:100entropy} report the Shannon entropy of transmitted bits for the baseline methods with/without FedElasticNet. The communication costs of baseline methods are effectively improved by the FedElasticNet approach. Algorithms~\ref{algo:fedavg},~\ref{algo:SCAFFOLD}, and~\ref{algo:feddyn} reduce the entropy compared to the their baseline methods. We note that FedElasticNet integrated with FedDyn (Algorithm~\ref{algo:feddyn}) achieves the minimum entropy, i.e., the minimum communication cost. 

For FedDyn, we evaluate the Shannon entropy values for two cases: ({\romannumeral 1}) transmit the updated local models $\theta_k^t$ as in~\citet{Acar2021federated} and ({\romannumeral 2}) transmit the local updates $\Delta_k^t = \theta_k^t - \theta^{t-1}$ as in Algorithm~\ref{algo:feddyn}. We observe that transmitting the local updates $\Delta_k^t$ instead of the local models $\theta_k^t$ can reduce the Shannon entropy significantly. Hence, it is beneficial to transmit the local updates $\Delta_k^t$ even for FedDyn if it adopts an additional compression scheme. The numbers of nonzero elements for two cases (i.e., $\theta_k^t$ and $\Delta_k^t$) are the same for FedDyn. 

Fig.~\ref{fig:accuracy} shows that the FedElasticNet maintains the classification accuracy or incurs marginal degradation. We observe a classification gap between FedProx and Algorithm~\ref{algo:fedavg} for CIFAR-10 and CIFAR-100. However, the classification accuracies of FedDyn and Algorithm~\ref{algo:feddyn} are almost identical in the converged regime.

In particular, Algorithm~\ref{algo:feddyn} significantly reduces the Shannon entropy, which can be explained by Fig~\ref{fig:delta_histogram}. Fig~\ref{fig:delta_histogram} compares the distributions of the transmitted local updates $\Delta_k^t$ for FedDyn, Algorithm~\ref{algo:SCAFFOLD}, and Algorithm~\ref{algo:feddyn}. Because of the $\ell_1$-norm penalty on the local updates, Algorithm~\ref{algo:feddyn} makes sparser local updates than FedDyn. The local updates of FedDyn can be modeled by the Gaussian distribution, and the local updates of FedElasticNet can be modeled by the non-Gaussian distribution (similar to the Laplacian distribution). It is well-known that the Gaussian distribution maximizes the entropy for a given variance in information theory~\cite{Cover2006elements}. Hence, FedElasticNet can reduce the entropy by transforming the Gaussian distribution into the non-Gaussian one.

\section{Conclusion}

We proposed FedElasticNet, a general framework to improve communication efficiency and resolve the client drift problem simultaneously. We introduce two types of penalty terms on the local model updates by repurposing the classical elastic net. The $\ell_1$-norm regularizer sparsifies the local model updates, which reduces the communication cost. The $\ell_2$-norm regularizer limits the impact of variable local updates to resolve the client drift problem. Importantly, our framework can be integrated with prior FL techniques so as to simultaneously resolve the communication cost problem and the client drift problem. By integrating FedElasticNet with FedDyn, we can achieve the best communication efficiency while maintaining classification accuracy for heterogeneous datasets.


\clearpage



\bibliography{mybib}

\begin{thebibliography}{26}
\providecommand{\natexlab}[1]{#1}
\providecommand{\url}[1]{\texttt{#1}}
\expandafter\ifx\csname urlstyle\endcsname\relax
  \providecommand{\doi}[1]{doi: #1}\else
  \providecommand{\doi}{doi: \begingroup \urlstyle{rm}\Url}\fi

\bibitem[Acar et~al.(2021)Acar, Zhao, Matas, Mattina, Whatmough, and
  Saligrama]{Acar2021federated}
Durmus Alp~Emre Acar, Yue Zhao, Ramon Matas, Matthew Mattina, Paul Whatmough,
  and Venkatesh Saligrama.
\newblock Federated learning based on dynamic regularization.
\newblock In \emph{International Conference on Learning Representations
  (ICLR)}, May 2021.

\bibitem[Bonawitz et~al.(2019)Bonawitz, Eichner, Grieskamp, Huba, Ingerman,
  Ivanov, Kiddon, Kone{\v c}n{\'y}, Mazzocchi, McMahan, Van~Overveldt, Petrou,
  Ramage, and Roselander]{Bonawitz2019towards}
Keith Bonawitz, Hubert Eichner, Wolfgang Grieskamp, Dzmitry Huba, Alex
  Ingerman, Vladimir Ivanov, Chlo{\'e} Kiddon, Jakub Kone{\v c}n{\'y}, Stefano
  Mazzocchi, Brendan McMahan, Timon Van~Overveldt, David Petrou, Daniel Ramage,
  and Jason Roselander.
\newblock Towards federated learning at scale: System design.
\newblock In A~Talwalkar, V~Smith, and M~Zaharia (eds.), \emph{Proceedings of
  Machine Learning and Systems}, volume~1, pp.\  374--388, 2019.

\bibitem[Cohen et~al.(2017)Cohen, Afshar, Tapson, and
  Van~Schaik]{Cohen2017emnist}
Gregory Cohen, Saeed Afshar, Jonathan Tapson, and Andre Van~Schaik.
\newblock Emnist: Extending mnist to handwritten letters.
\newblock In \emph{International Joint Conference on Neural Networks (IJCNN)},
  pp.\  2921--2926, 2017.

\bibitem[Cover \& Thomas(2006)Cover and Thomas]{Cover2006elements}
Thomas~M. Cover and Joy~A. Thomas.
\newblock \emph{Elements of information theory}.
\newblock Wiley-Interscience, New York, 2006.

\bibitem[Durrett(2019)]{Durrett2019}
Rick Durrett.
\newblock \emph{Probability: Theory and Examples}.
\newblock Cambridge University Press, Cambridge, UK, fifth edition, 2019.
\newblock \doi{10.1017/9781108591034}.

\bibitem[Han et~al.(2016)Han, Mao, and Dally]{Han2016deep}
Song Han, Huizi Mao, and William~J. Dally.
\newblock Deep compression: Compressing deep neural networks with pruning,
  trained quantization and huffman coding.
\newblock In \emph{International Conference on Learning Representations
  (ICLR)}, volume abs/1510.0, October 2016.

\bibitem[Hu et~al.(2018)Hu, Huang, and Qiu]{Hu2018group}
Jianhua Hu, Jian Huang, and Feng Qiu.
\newblock A group adaptive elastic-net approach for variable selection in
  high-dimensional linear regression.
\newblock \emph{Science China Mathematics}, 61\penalty0 (1):\penalty0 173--188,
  2018.

\bibitem[Jeong et~al.(2021)Jeong, Yoon, Yang, and Hwang]{Jeong2021federated}
Wonyong Jeong, Jaehong Yoon, Eunho Yang, and Sung~Ju Hwang.
\newblock {Federated semi-supervised learning with inter-client consistency \&
  disjoint learning}.
\newblock In \emph{International Conference on Learning Representations
  (ICLR)}, 2021.

\bibitem[Kairouz et~al.(2021)Kairouz, McMahan, Avent, Bellet, Bennis, Bhagoji,
  Bonawitz, Charles, Cormode, Cummings, et~al.]{Kairouz2021advances}
Peter Kairouz, H.~Brendan McMahan, Brendan Avent, Aur{\'e}lien Bellet, Mehdi
  Bennis, Arjun~Nitin Bhagoji, Kallista Bonawitz, Zachary Charles, Graham
  Cormode, Rachel Cummings, et~al.
\newblock Advances and open problems in federated learning.
\newblock \emph{Foundations and Trends{\textregistered} in Machine Learning},
  14\penalty0 (1--2):\penalty0 1--210, 2021.

\bibitem[Karimireddy et~al.(2020)Karimireddy, Kale, Mohri, Reddi, Stich, and
  Suresh]{Karimireddy2020scaffold}
Sai~Praneeth Karimireddy, Satyen Kale, Mehryar Mohri, Sashank Reddi, Sebastian
  Stich, and Ananda~Theertha Suresh.
\newblock Scaffold: Stochastic controlled averaging for federated learning.
\newblock In \emph{International Conference on Machine Learning (ICML)}, pp.\
  5132--5143, 2020.

\bibitem[Kone{\v{c}}n{\`y} et~al.(2016{\natexlab{a}})Kone{\v{c}}n{\`y},
  McMahan, Ramage, and Richt{\'a}rik]{Konevcny2016federated}
Jakub Kone{\v{c}}n{\`y}, H.~Brendan McMahan, Daniel Ramage, and Peter
  Richt{\'a}rik.
\newblock Federated optimization: Distributed machine learning for on-device
  intelligence.
\newblock \emph{arXiv preprint arXiv:1610.02527}, 2016{\natexlab{a}}.

\bibitem[Kone{\v{c}}n{\`y} et~al.(2016{\natexlab{b}})Kone{\v{c}}n{\`y},
  McMahan, Yu, Richt{\'a}rik, Suresh, and Bacon]{Konevcny2016federatedlearning}
Jakub Kone{\v{c}}n{\`y}, H.~Brendan McMahan, Felix~X. Yu, Peter Richt{\'a}rik,
  Ananda~Theertha Suresh, and Dave Bacon.
\newblock Federated learning: Strategies for improving communication
  efficiency.
\newblock \emph{arXiv preprint arXiv:1610.05492}, 2016{\natexlab{b}}.

\bibitem[Krizhevsky \& Hinton(2009)Krizhevsky and
  Hinton]{Krizhevsky2009learning}
Alex Krizhevsky and Geoffrey Hinton.
\newblock Learning multiple layers of features from tiny images.
\newblock Technical report, University of Toronto, 2009.

\bibitem[LeCun et~al.(1998)LeCun, Bottou, Bengio, and
  Haffner]{Lecun1998gradient}
Yann LeCun, L{\'e}on Bottou, Yoshua Bengio, and Patrick Haffner.
\newblock Gradient-based learning applied to document recognition.
\newblock \emph{Proceedings of the IEEE}, 86\penalty0 (11):\penalty0
  2278--2324, 1998.

\bibitem[Li et~al.(2020{\natexlab{a}})Li, Sahu, Talwalkar, and
  Smith]{Li2020federatedlearning}
Tian Li, Anit~Kumar Sahu, Ameet Talwalkar, and Virginia Smith.
\newblock Federated learning: Challenges, methods, and future directions.
\newblock \emph{IEEE Signal Processing Magazine}, 37\penalty0 (3):\penalty0
  50--60, 2020{\natexlab{a}}.

\bibitem[Li et~al.(2020{\natexlab{b}})Li, Sahu, Zaheer, Sanjabi, Talwalkar, and
  Smith]{Li2020federated}
Tian Li, Anit~Kumar Sahu, Manzil Zaheer, Maziar Sanjabi, Ameet Talwalkar, and
  Virginia Smith.
\newblock {Federated optimization in heterogeneous networks}.
\newblock In \emph{Proceedings of Machine Learning and Systems}, volume~2, pp.\
   429--450, 2020{\natexlab{b}}.

\bibitem[Malinovskiy et~al.(2020)Malinovskiy, Kovalev, Gasanov, Condat, and
  Richtarik]{Malinovskiy2020local}
Grigory Malinovskiy, Dmitry Kovalev, Elnur Gasanov, Laurent Condat, and Peter
  Richtarik.
\newblock From local sgd to local fixed-point methods for federated learning.
\newblock In \emph{International Conference on Machine Learnin (ICML)}, pp.\
  6692--6701, 2020.

\bibitem[McMahan et~al.(2017)McMahan, Moore, Ramage, Hampson, and
  y~Arcas]{McMahan2017communication}
H.~Brendan McMahan, Eider Moore, Daniel Ramage, Seth Hampson, and Blaise~Aguera
  y~Arcas.
\newblock Communication-efficient learning of deep networks from decentralized
  data.
\newblock In \emph{International Conference on Artificial Intelligence and
  Statistics (AISTATS)}, pp.\  1273--1282. PMLR, 2017.

\bibitem[Nesterov(2018)]{Nesterove2018}
Y.~Nesterov.
\newblock \emph{Lectures on convex optimization}, volume 137.
\newblock springer, 2018.
\newblock URL
  \url{https://link.springer.com/content/pdf/10.1007/978-3-319-91578-4.pdf}.

\bibitem[Reisizadeh et~al.(2020)Reisizadeh, Mokhtari, Hassani, Jadbabaie, and
  Pedarsani]{Reisizadeh2020fedpaq}
Amirhossein Reisizadeh, Aryan Mokhtari, Hamed Hassani, Ali Jadbabaie, and
  Ramtin Pedarsani.
\newblock {FedPAQ: A communication-efficient federated learning method with
  periodic averaging and quantization}.
\newblock In \emph{International Conference on Artificial Intelligence and
  Statistics (AISTATS)}, volume 108, pp.\  2021--2031, August 2020.

\bibitem[Shakespeare(1914)]{Shakespeare1914complete}
William Shakespeare.
\newblock \emph{The complete works of William Shakespeare}.
\newblock Humphrey Milford, Oxford University Press, 1914.
\newblock URL \url{http://www.gutenberg.org/files/100/old/1994-01-100.zip}.

\bibitem[Tibshirani(1996)]{Tibshirani1996regression}
Robert Tibshirani.
\newblock Regression shrinkage and selection via the lasso.
\newblock \emph{Journal of the Royal Statistical Society: Series B
  (Methodological)}, 58\penalty0 (1):\penalty0 267--288, 1996.

\bibitem[Xu et~al.(2020)Xu, Du, Jin, He, and Cheng]{Xu2020ternary}
Jinjin Xu, Wenli Du, Yaochu Jin, Wangli He, and Ran Cheng.
\newblock Ternary compression for communication-efficient federated learning.
\newblock \emph{IEEE Transactions on Neural Networks and Learning Systems},
  2020.

\bibitem[Yoon et~al.(2021)Yoon, Jeong, Lee, Yang, and Hwang]{Yoon2021federated}
Jaehong Yoon, Wonyong Jeong, Giwoong Lee, Eunho Yang, and Sung~Ju Hwang.
\newblock Federated continual learning with weighted inter-client transfer.
\newblock In Marina Meila and Tong Zhang (eds.), \emph{International Conference
  on Machine Learning (ICML)}, volume 139 of \emph{Proceedings of Machine
  Learning Research}, pp.\  12073--12086, 2021.

\bibitem[Zhao et~al.(2018)Zhao, Li, Lai, Suda, Civin, and
  Chandra]{Zhao2018federated}
Yue Zhao, Meng Li, Liangzhen Lai, Naveen Suda, Damon Civin, and Vikas Chandra.
\newblock Federated learning with non-iid data.
\newblock \emph{arXiv preprint arXiv:1806.00582}, 2018.

\bibitem[Zou \& Hastie(2005)Zou and Hastie]{Zou2005regularization}
Hui Zou and Trevor Hastie.
\newblock Regularization and variable selection via the elastic net.
\newblock \emph{Journal of the Royal Statistical Society: Series B (Statistical
  Methodology)}, 67\penalty0 (2):\penalty0 301--320, 2005.

\end{thebibliography}
\bibliographystyle{iclr2023_conference}
\clearpage

\appendix
\section{Appendix}
\subsection{Experiment Details}
We provide the details of our experiments. We select the datasets for our experiments, including those used in prior work on federated learning~\citep{McMahan2017communication,Li2020federated,Acar2021federated}. To fairly compare the non-IID environments, the datasets and the experimental environments are the same as those of \citet{Acar2021federated}.
\paragraph{Hyperparameters.}
We describe the hyperparameters used in our experiments in Section~\ref{section:experiments}. We perform a grid search to find the best $\lambda_{1}$ and $\epsilon$ used in the proposed algorithms. Each hyperparameter was selected to double the value as the performance improved. We use the same $\lambda_{2}$ as in \citet{Acar2021federated}. SCAFFOLD has the same local epoch and batch size as other algorithms, and SCAFFOLD is not included in Table 4 because other hyperparameters are not required. Table~\ref{tab:hyperparameter} shows the hyperparameters used in our experiments.

\begin{table}[ht!]
\centering
\resizebox{8cm}{!}{%
\begin{tabular}{|c|cccc|}
\hline
\textbf{Dataset} &
  \multicolumn{1}{c|}{\textbf{Algorithm}} &
  \multicolumn{1}{c|}{\textbf{$\lambda_{1}$}} &
  \multicolumn{1}{c|}{\textbf{$\lambda_{2}$}} &
  \textbf{$\epsilon$} \\ \hline
\multirow{5}{*}{\textbf{CIFAR-10}}    & FedProx     & -         & $10^{-4}$         & -                 \\
& Algorithm 1 & $10^{-6}$ & $10^{-4}$         & $10^{-3}$         \\
& Algorithm 2 & $10^{-4}$ & 0                 & $10^{-4}$         \\
                                      & FedDyn      & -         & $10^{-2}$         & -                 \\
                                      & Algorithm 3 & $10^{-4}$ & $10^{-2}$         & $5\times 10^{-3}$ \\ \hline
\multirow{5}{*}{\textbf{CIFAR-100}}   & FedProx     & -         & $10^{-4}$         & -     \\
                                      & Algorithm 1 & $10^{-6}$ & $10^{-4}$         & $10^{-3}$                 \\
                                      & Algorithm 2 & $10^{-4}$ & 0                 & $10^{-4}$         \\
                                      & FedDyn      & -         & $10^{-2}$         & -        \\
                                      & Algorithm 3 & $10^{-4}$ & $10^{-2}$         & $10^{-3}$                  \\ \hline
\multirow{5}{*}{\textbf{MNIST}}       & FedProx     & -         & $10^{-4}$         & -         \\
                                      & Algorithm 1 & $10^{-6}$ & $10^{-6}$         & $10^{-3}$         \\
                                      & Algorithm 2 & $10^{-4}$ & 0                 & $10^{-4}$         \\
                                      & FedDyn      & -         & $5\times 10^{-2}$ & -                 \\
                                      & Algorithm 3 & $10^{-4}$ & $5\times 10^{-2}$ & $5\times 10^{-3}$ \\ \hline
\multirow{5}{*}{\textbf{EMNIST-L}}   & FedProx     & -         & $10^{-4}$         & -                 \\
                                      & Algorithm 1 & $10^{-6}$ & $10^{-6}$         & $10^{-3}$         \\
                                      & Algorithm 2 & $10^{-4}$ & 0                 & $10^{-4}$         \\
                                      & FedDyn      & -         & $4\times 10^{-2}$ & -                 \\
                                      & Algorithm 3 & $10^{-4}$ & $4\times 10^{-2}$ & $2\times 10^{-3}$ \\ \hline
\multirow{5}{*}{\textbf{Shakespeare}} & FedProx     & -         & $10^{-4}$         & -                 \\
                                      & Algorithm 1 & $10^{-6}$ & $10^{-6}$         & $9\times 10^{-3}$ \\
                                      & Algorithm 2 & $10^{-6}$ & 0                 & $9\times 10^{-4}$ \\
                                      & FedDyn      & -         & $10^{-2}$         & -                 \\
                                      & Algorithm 3 & $10^{-6}$ & $10^{-2}$         & $10^{-2}$         \\ \hline
\end{tabular}%
}
\caption{Hyperparameters.}
\label{tab:hyperparameter}
\end{table}


\subsection{Regularizer Coefficients}

We selected $\lambda_{1}$ over $\{10^{-2}, 10^{-4}, 10^{-6}, 10^{-8} \}$ to observe the impact of $\lambda_1$ on the classification accuracy. We prefer a larger $\lambda_1$ to enhance communication efficiency unless the $\ell_1$-norm regularizer does not degrade the classification accuracy. Figures~\ref{fig:algo1 lambda1}, \ref{fig:algo2 lambda1}, and \ref{fig:algo3 lambda1} show the classification accuracy depending on $\lambda_{1}$ in the CIFAR-10 dataset with 10\% participation rate and Dirichlet (.3). The unit of the cumulative number of elements is $10^{7}$.

In Algorithm~\ref{algo:fedavg}, we selected $\lambda_1 = 10^{-6}$ to avoid a degradation of classification accuracy (see Fig.~\ref{fig:algo1 lambda1}) and maximize the sparsity of local updates. In this way, we selected the coefficient values $\lambda_1$ (See Fig.\ref{fig:algo2 lambda1} for Algorithm~\ref{algo:SCAFFOLD} and \ref{fig:algo3 lambda1} and Algorithm~\ref{algo:feddyn}).

\begin{figure}[ht!]
\begin{subfigure}{.5\textwidth}
  \centering
  \includegraphics[width=.8\linewidth]{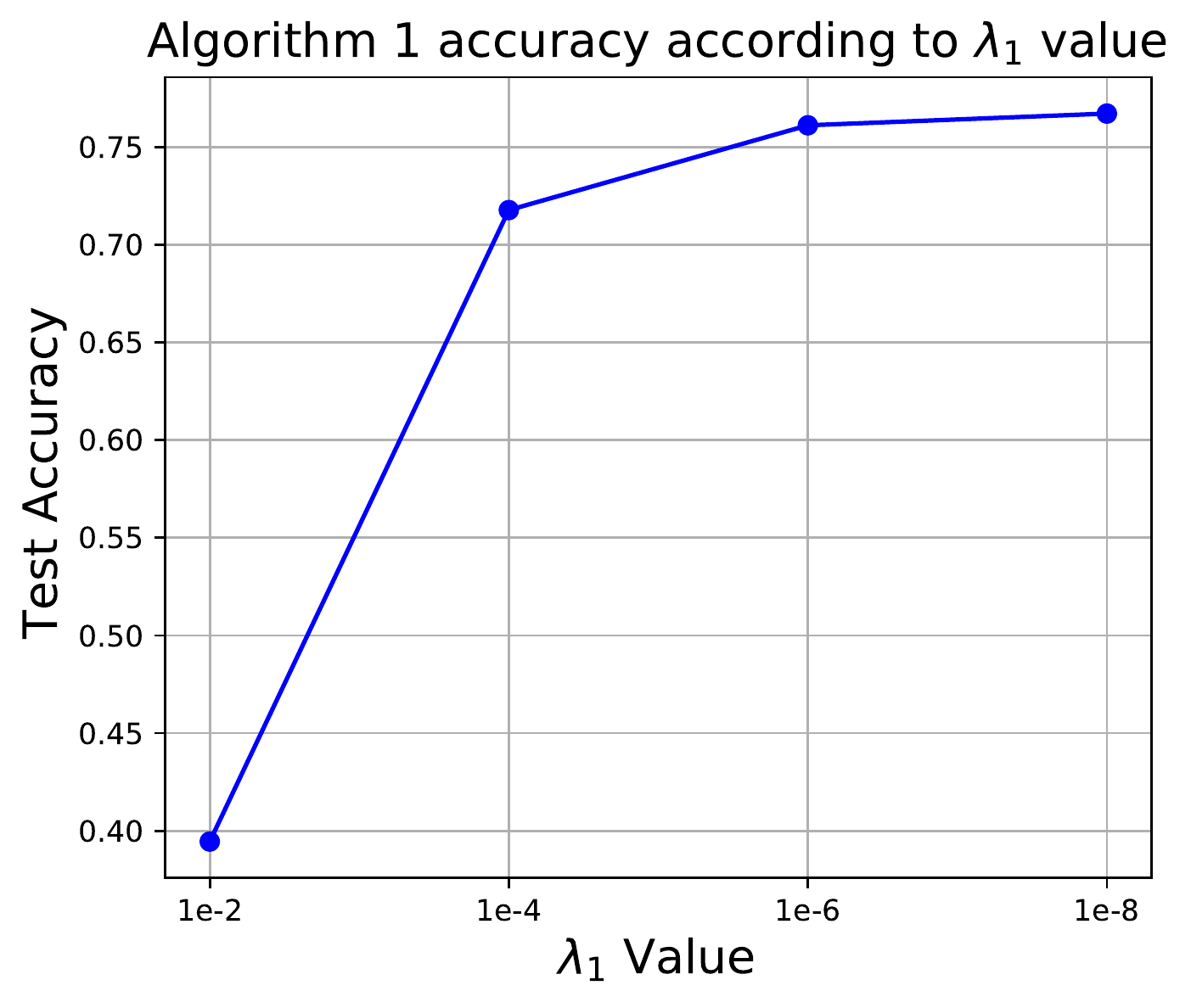}
  \label{fig:sub31}
\end{subfigure}
\begin{subfigure}{.5\textwidth}
  \centering
  \includegraphics[width=.8\linewidth]{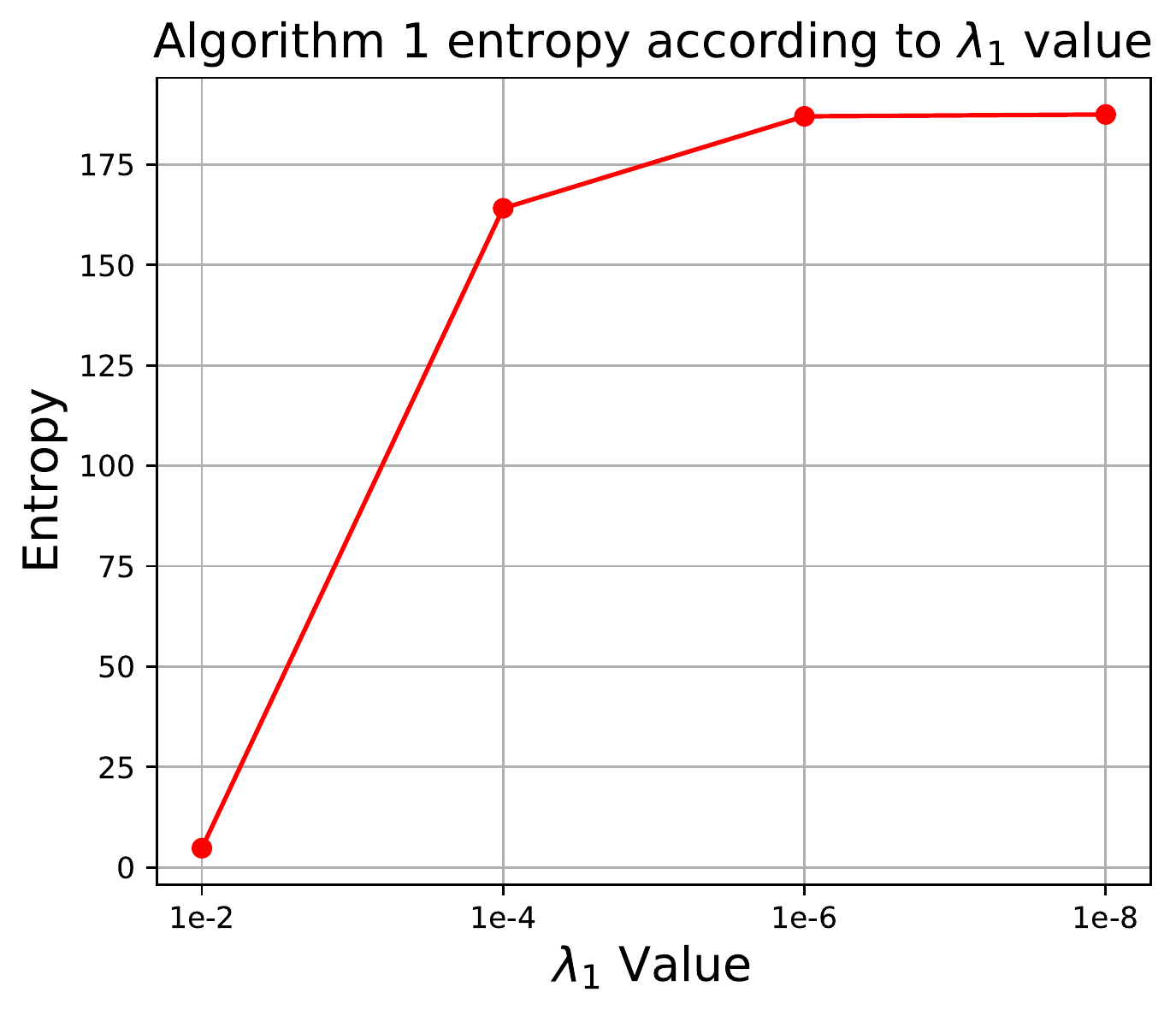}
  \label{fig:sub32}
\end{subfigure}
\begin{subfigure}{1\textwidth}
  \centering
  \includegraphics[width=.6\linewidth]{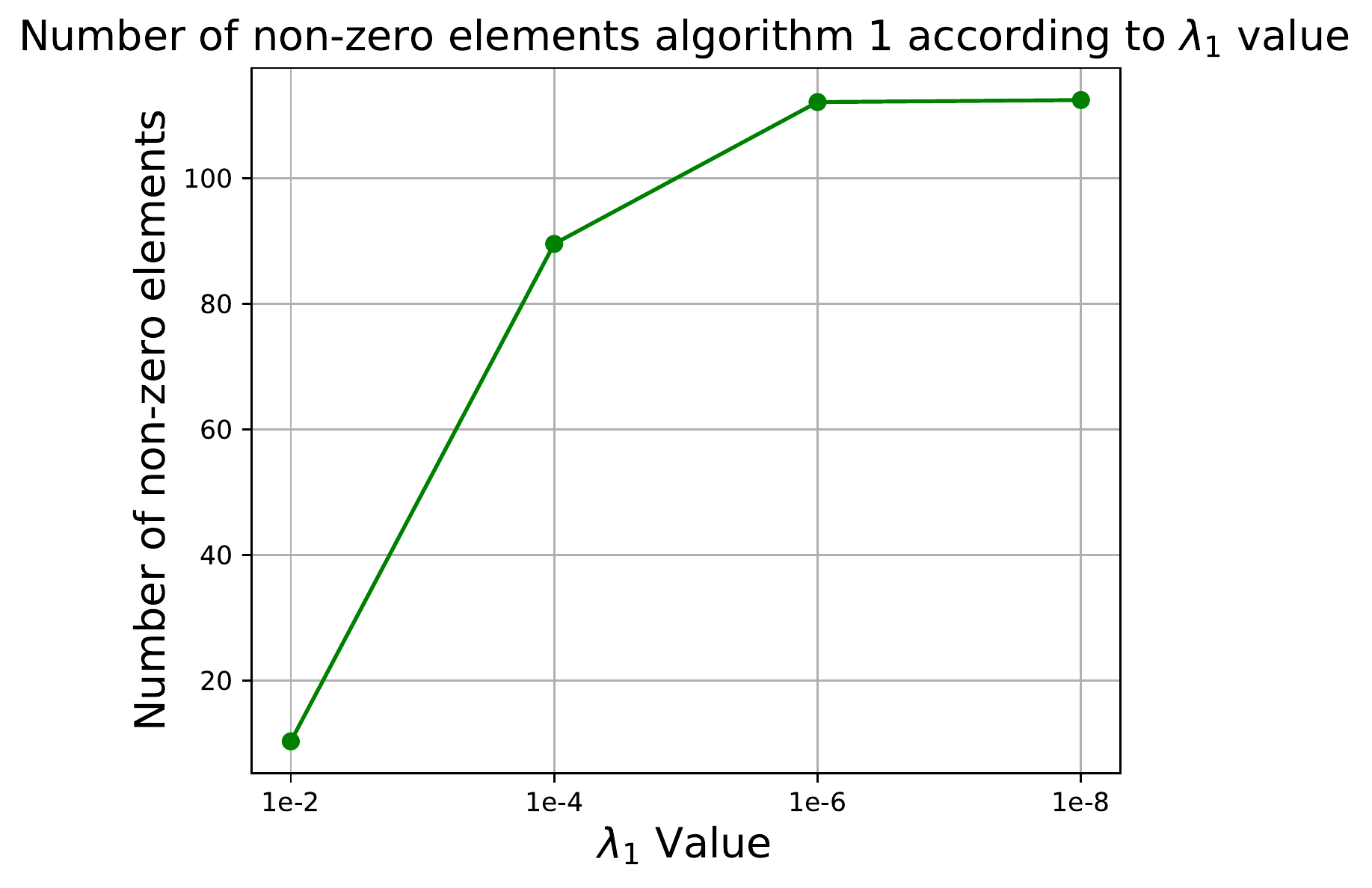}
  \label{fig:sub33}
\end{subfigure}
\caption{Classification accuracy and sparsity of local updates depending on $\lambda_{1}$ (Algorithm~\ref{algo:fedavg}).}
\label{fig:algo1 lambda1}
\end{figure}

\begin{figure}[ht!]
\begin{subfigure}{.5\textwidth}
  \centering
  \includegraphics[width=.8\linewidth]{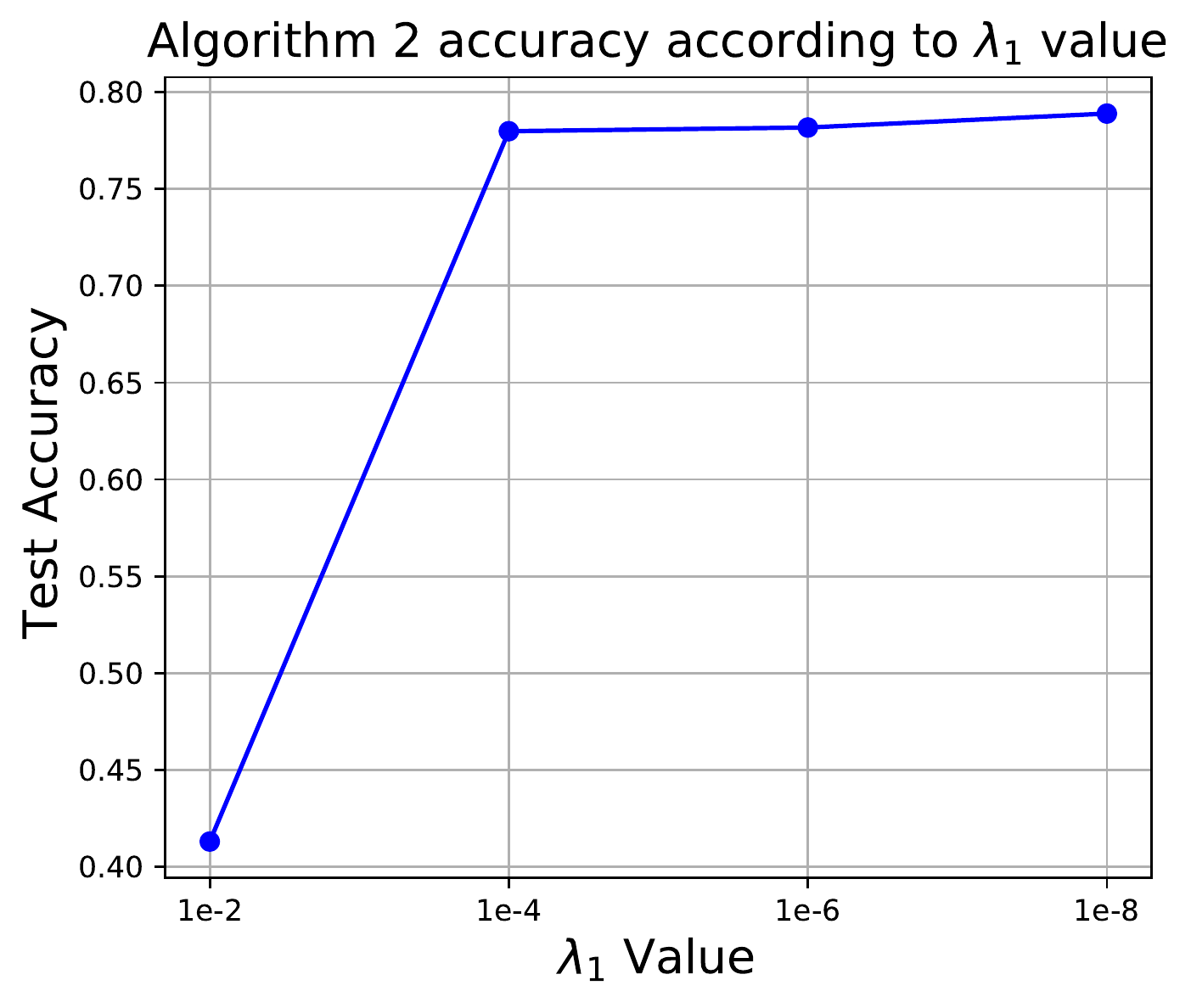}
  \label{fig:sub41}
\end{subfigure}
\begin{subfigure}{.5\textwidth}
  \centering
  \includegraphics[width=.8\linewidth]{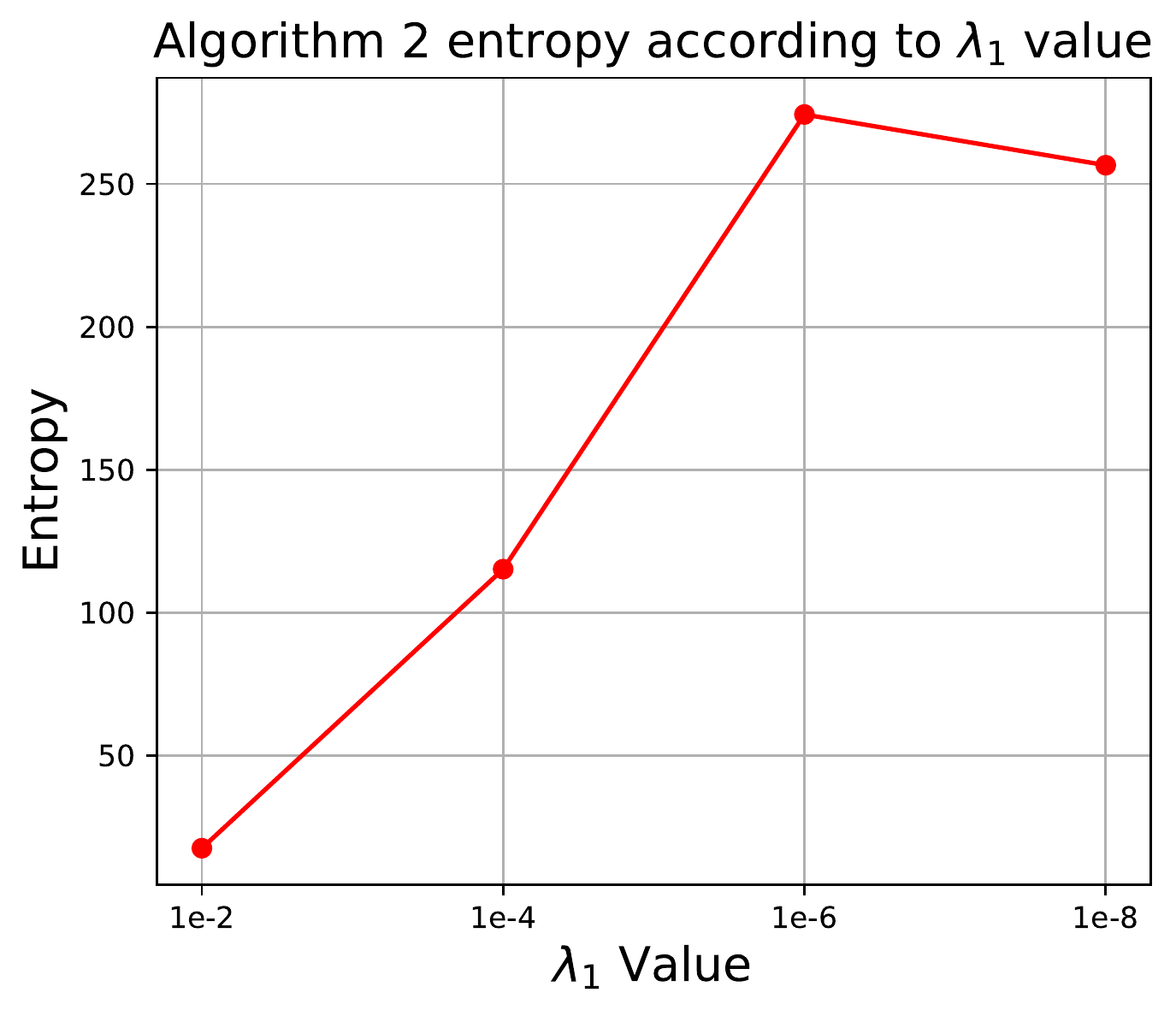}
  \label{fig:sub42}
\end{subfigure}
\begin{subfigure}{1\textwidth}
  \centering
  \includegraphics[width=.6\linewidth]{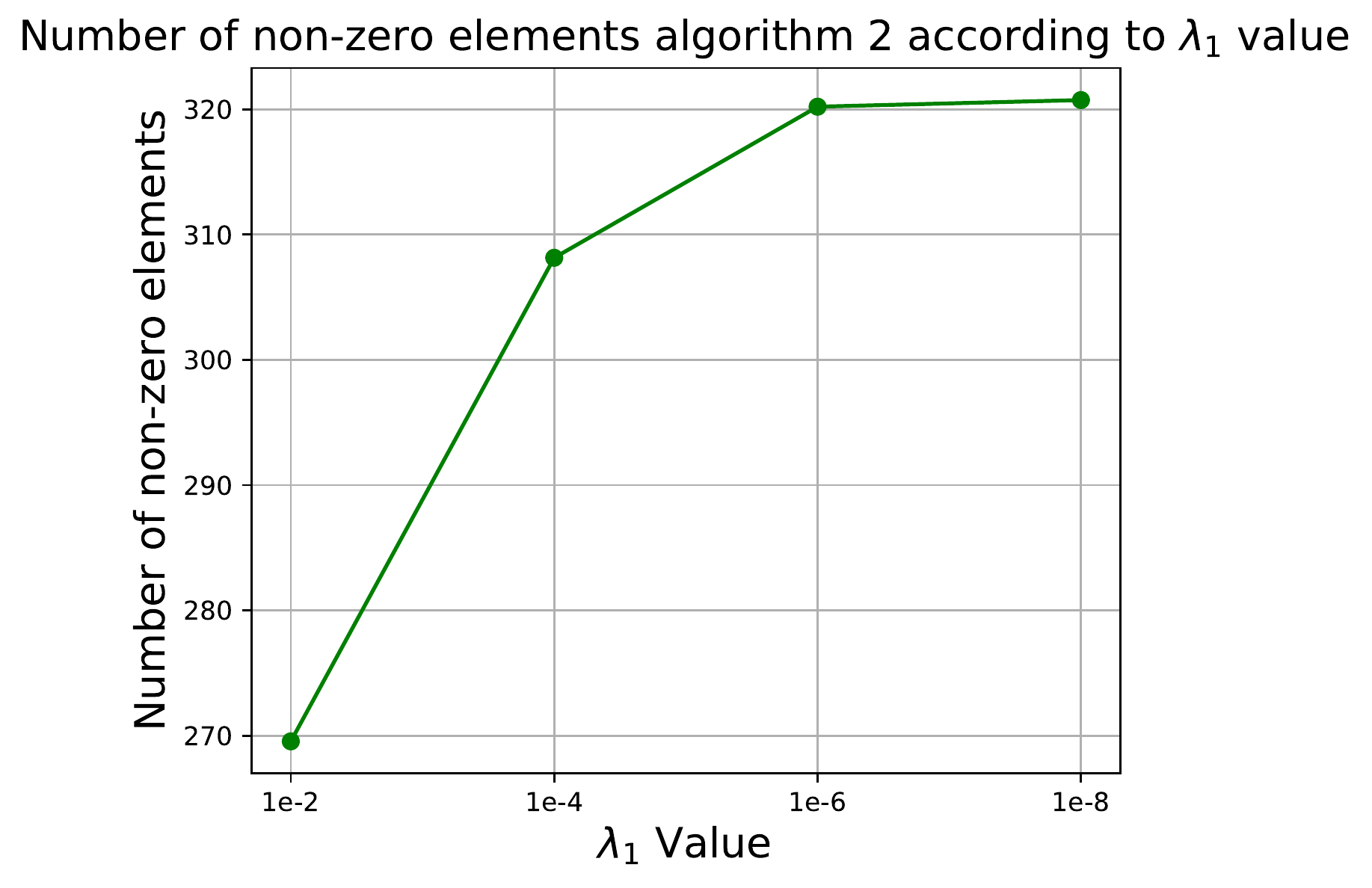}
  \label{fig:sub43}
\end{subfigure}
\caption{Classification accuracy and sparsity of local updates depending on $\lambda_{1}$ (Algorithm~\ref{algo:SCAFFOLD}).}
\label{fig:algo2 lambda1}
\end{figure}

\begin{figure}[ht!]
\begin{subfigure}{.5\textwidth}
  \centering
  \includegraphics[width=.8\linewidth]{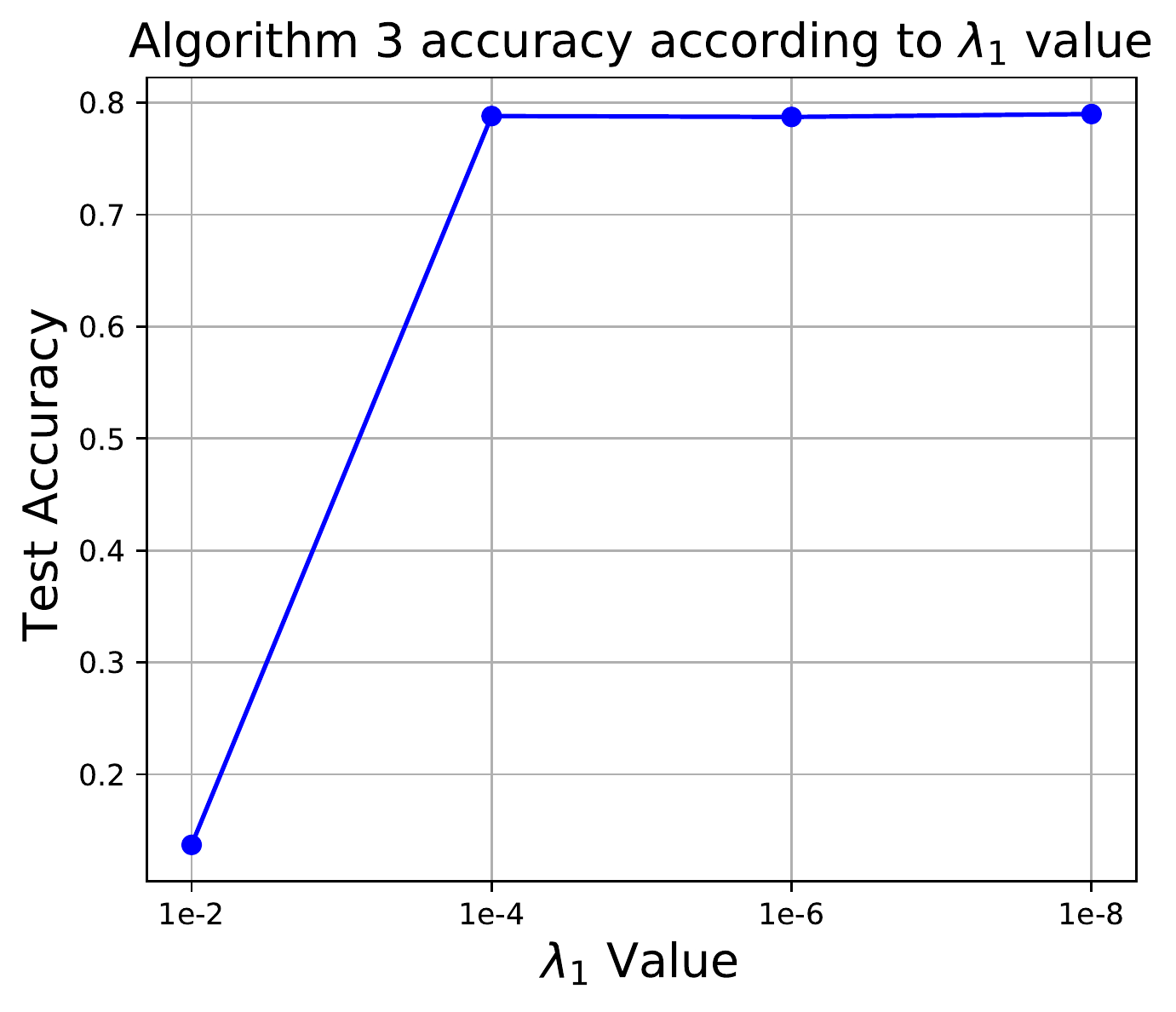}
  \label{fig:sub51}
\end{subfigure}
\begin{subfigure}{.5\textwidth}
  \centering
  \includegraphics[width=.8\linewidth]{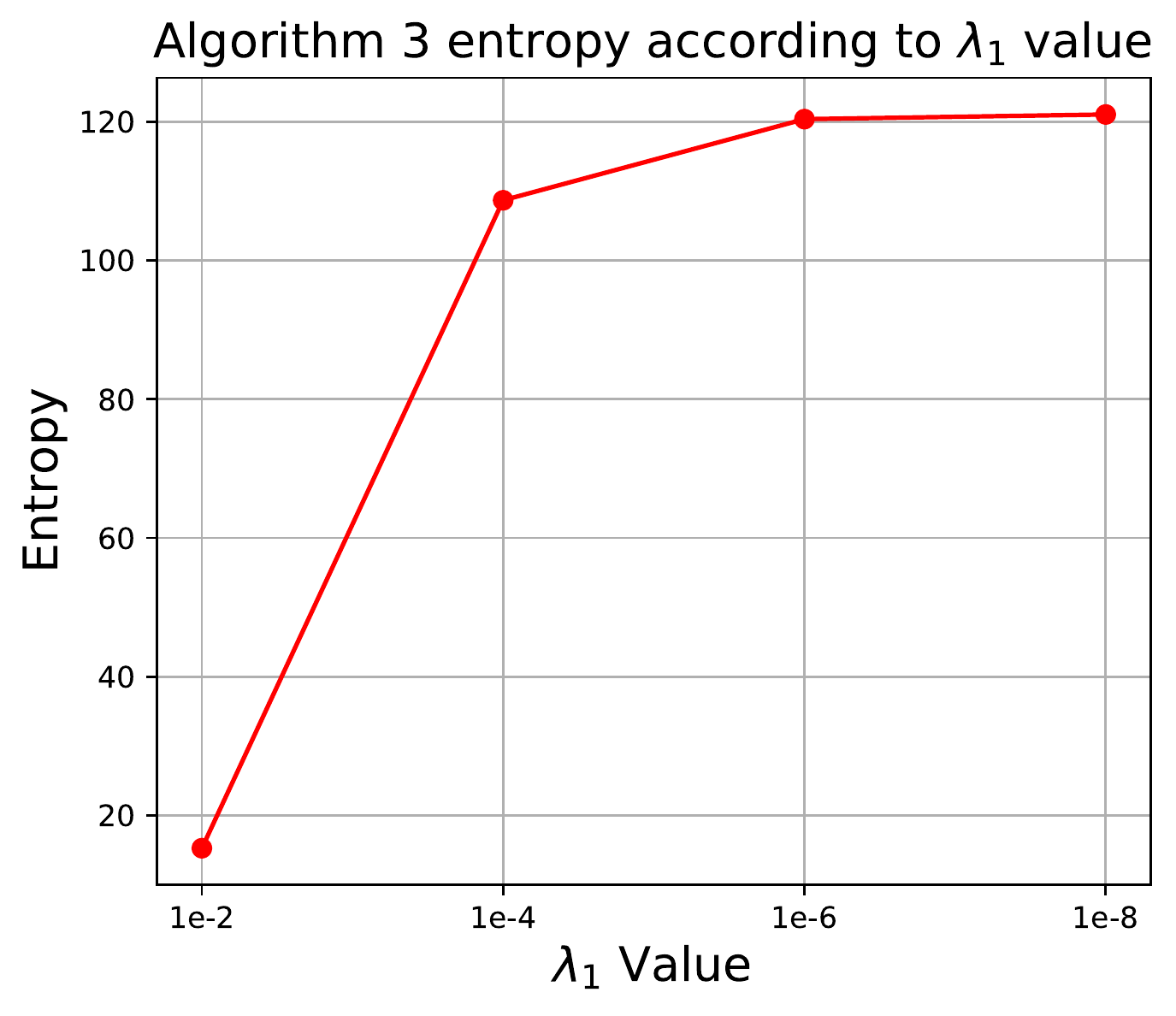}
  \label{fig:sub52}
\end{subfigure}
\begin{subfigure}{1\textwidth}
  \centering
  \includegraphics[width=.6\linewidth]{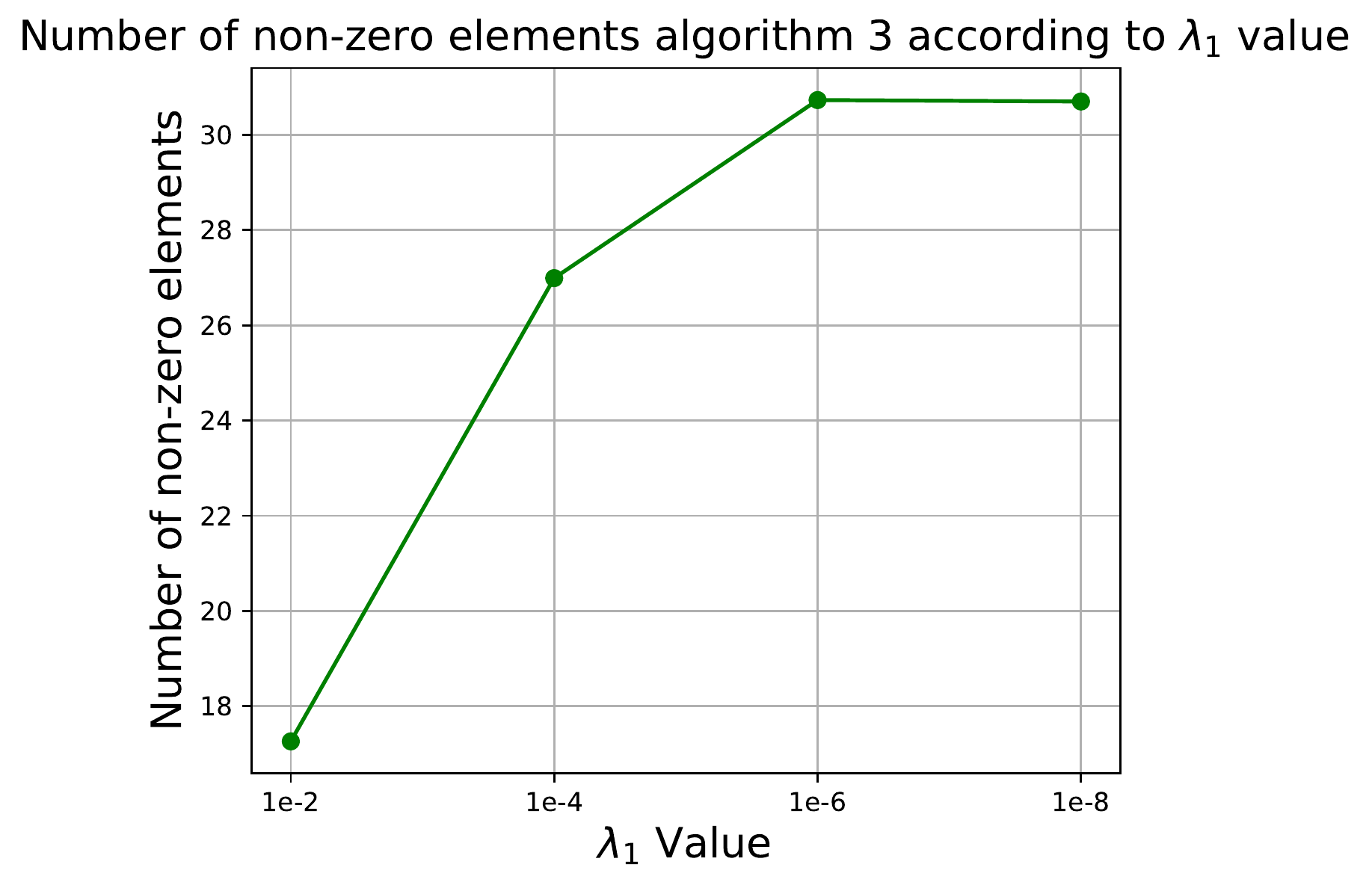}
  \label{fig:sub53}
\end{subfigure}
\caption{Classification accuracy and sparsity of local updates depending on $\lambda_{1}$ (Algorithm~\ref{algo:feddyn}).}
\label{fig:algo3 lambda1}
\end{figure}

\clearpage

\subsection{Empirical Results of Classification Accuracy}

\begin{figure}[ht!]
\begin{subfigure}{.5\textwidth}
  \centering
  \includegraphics[width=.8\linewidth]{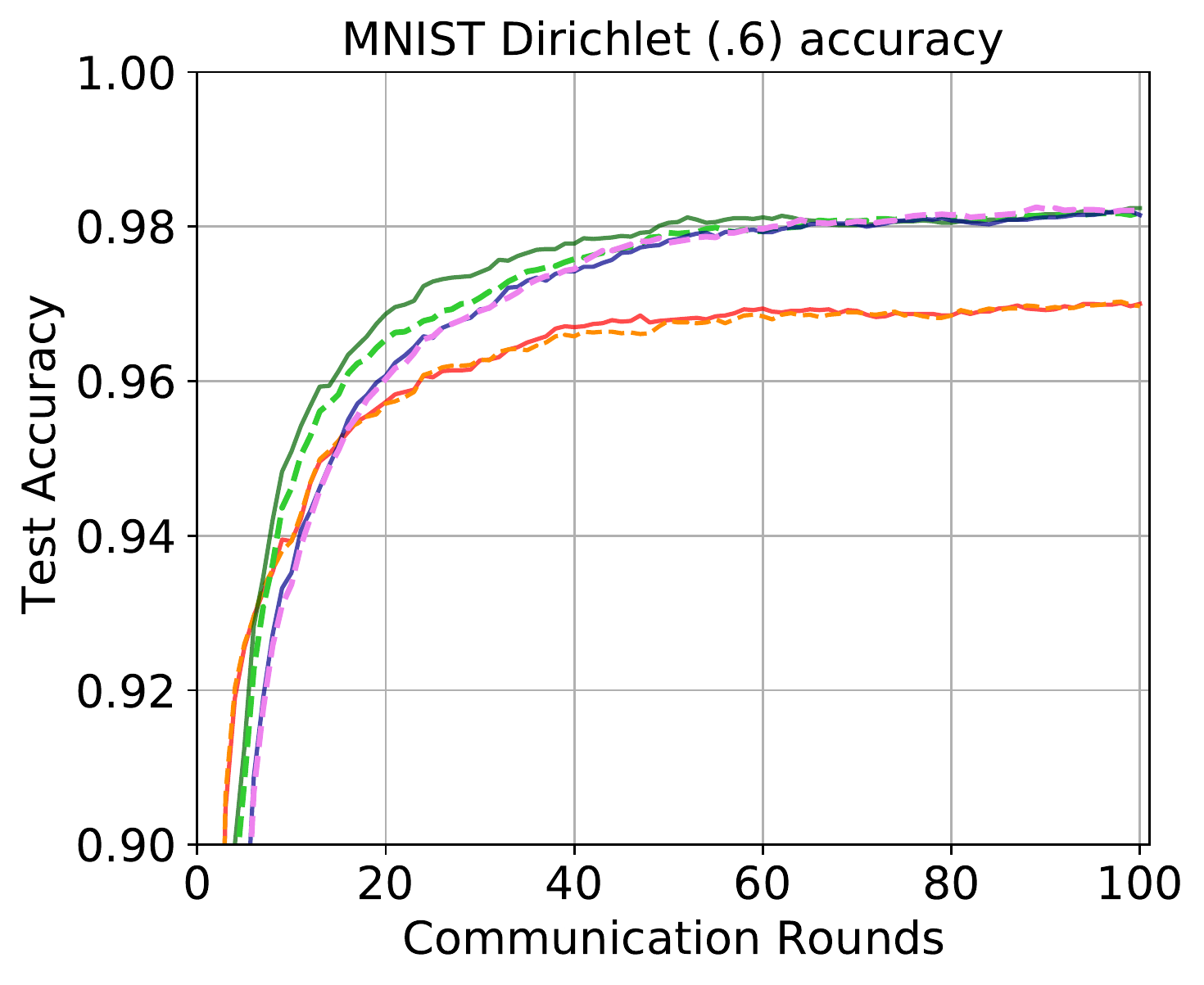}
  \label{fig:sub1-first}
\end{subfigure}
\begin{subfigure}{.5\textwidth}
  \centering
  \includegraphics[width=.8\linewidth]{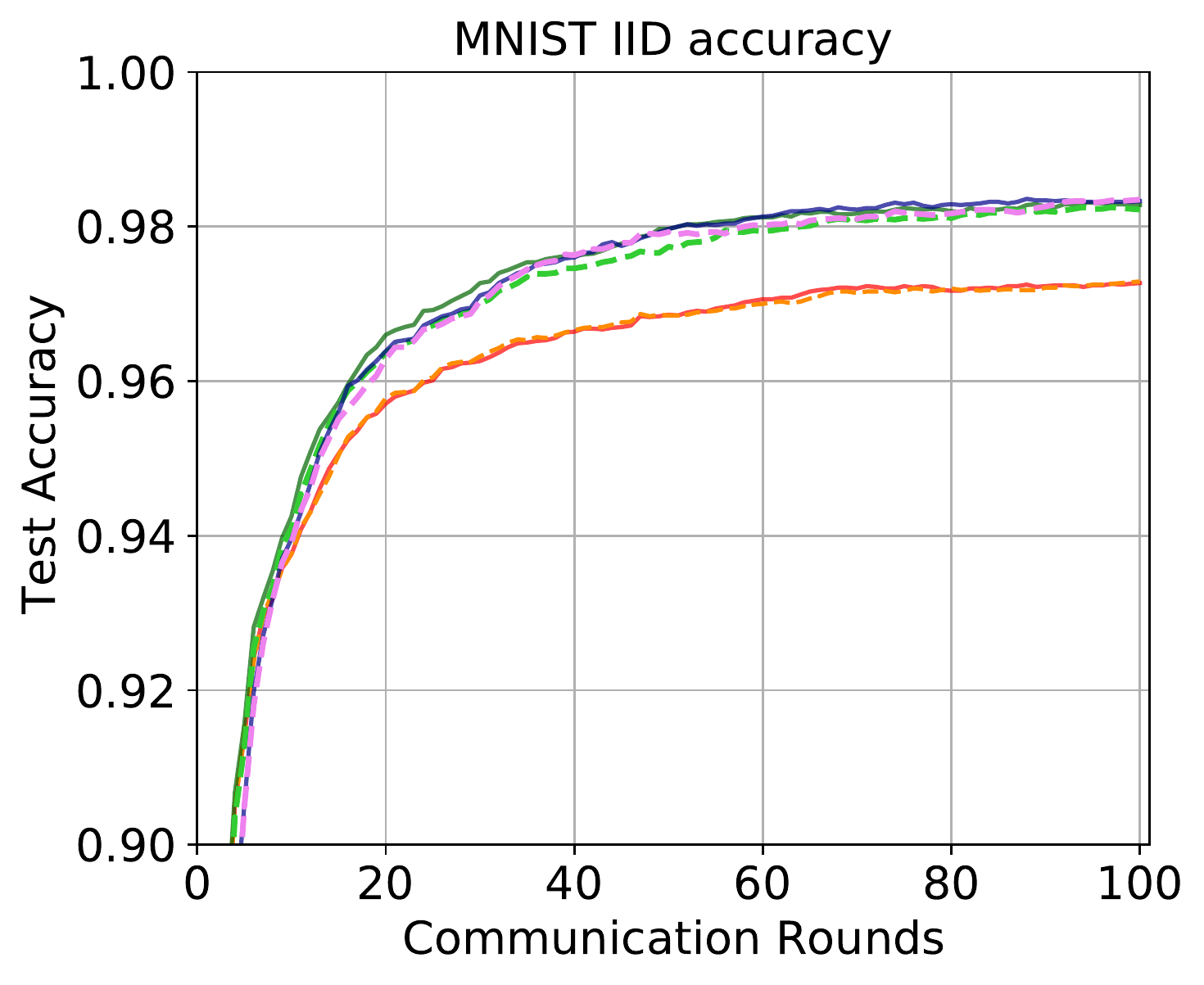}
  \label{fig:sub1-second}
\end{subfigure}
\begin{subfigure}{.5\textwidth}
  \centering
  \includegraphics[width=.8\linewidth]{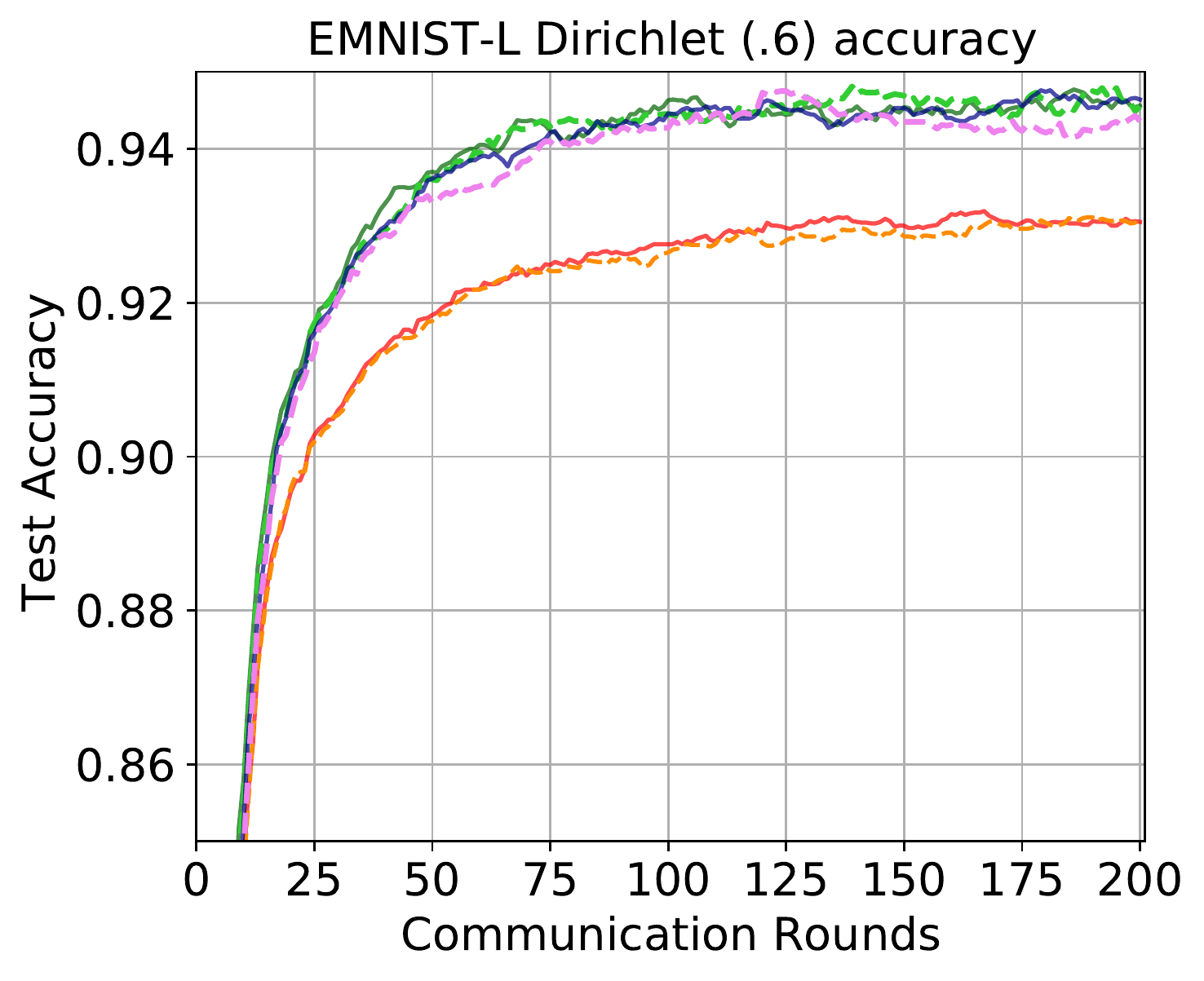}
  \label{fig:sub1-third}
\end{subfigure}
\begin{subfigure}{.5\textwidth}
  \centering
  \includegraphics[width=.8\linewidth]{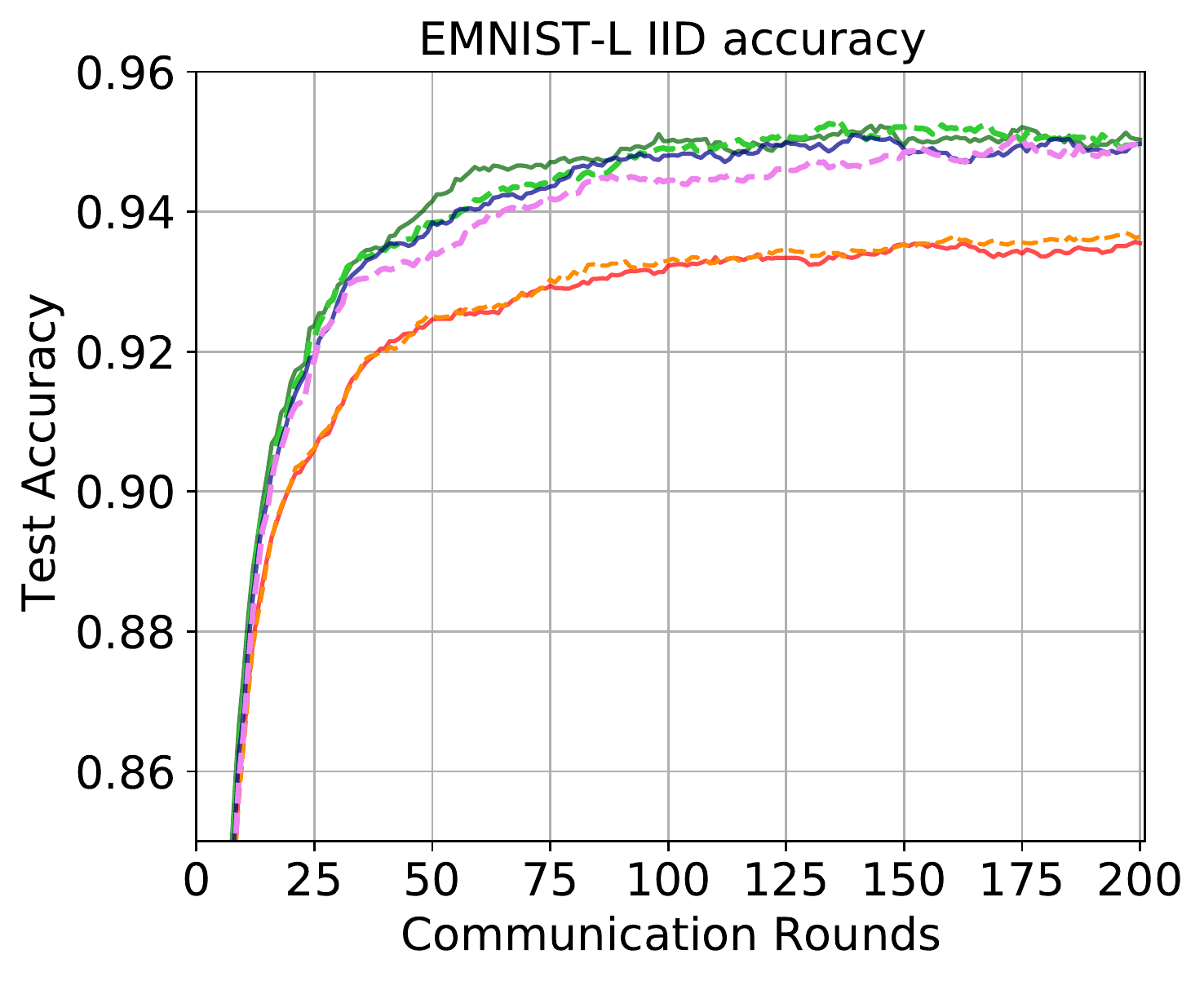}
  \label{fig:sub1-fourth}
\end{subfigure}
\begin{subfigure}{.5\textwidth}
  \centering
  \includegraphics[width=.8\linewidth]{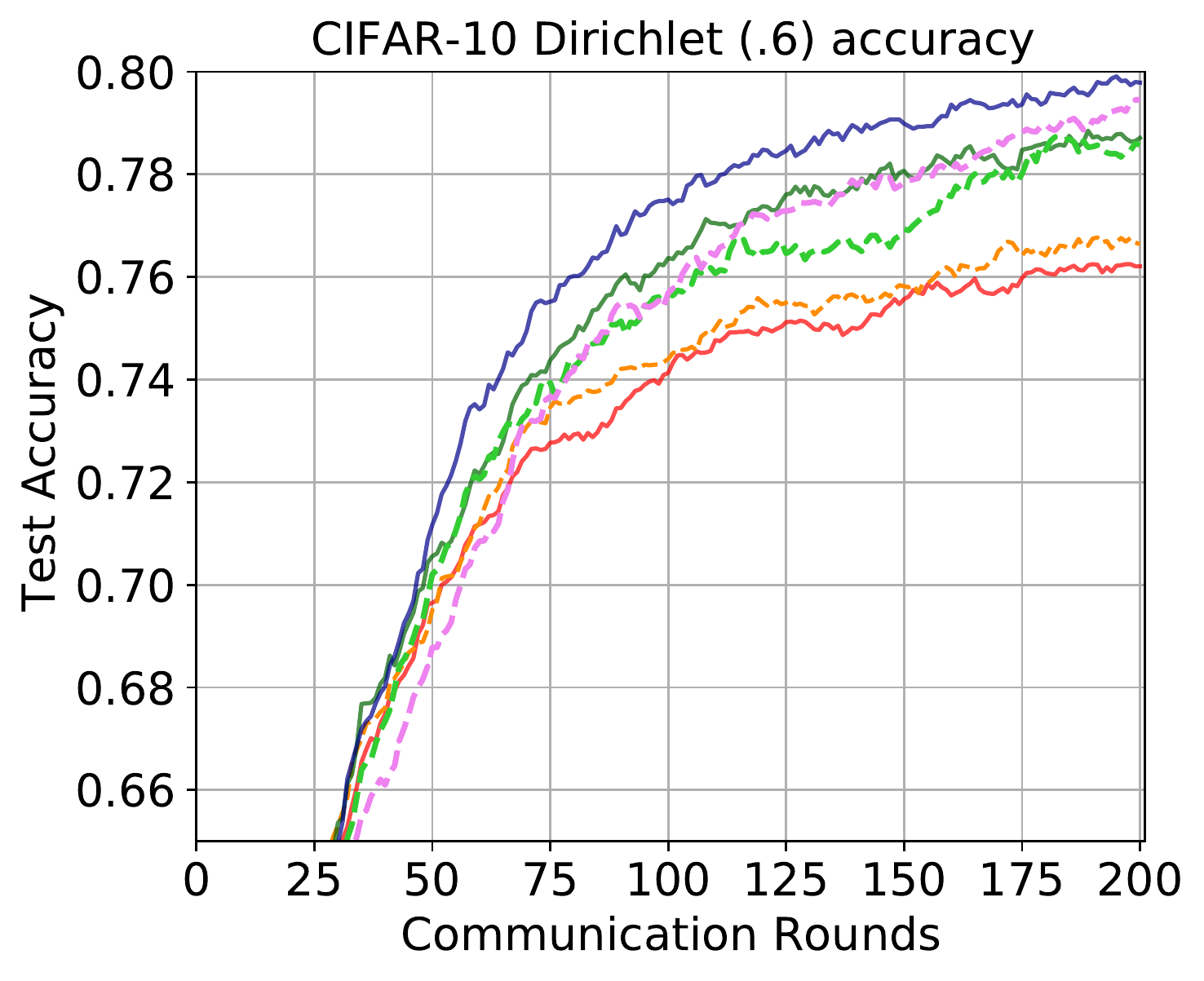}
  \label{fig:sub1-Fifth}
\end{subfigure}
\begin{subfigure}{.5\textwidth}
  \centering
  \includegraphics[width=.8\linewidth]{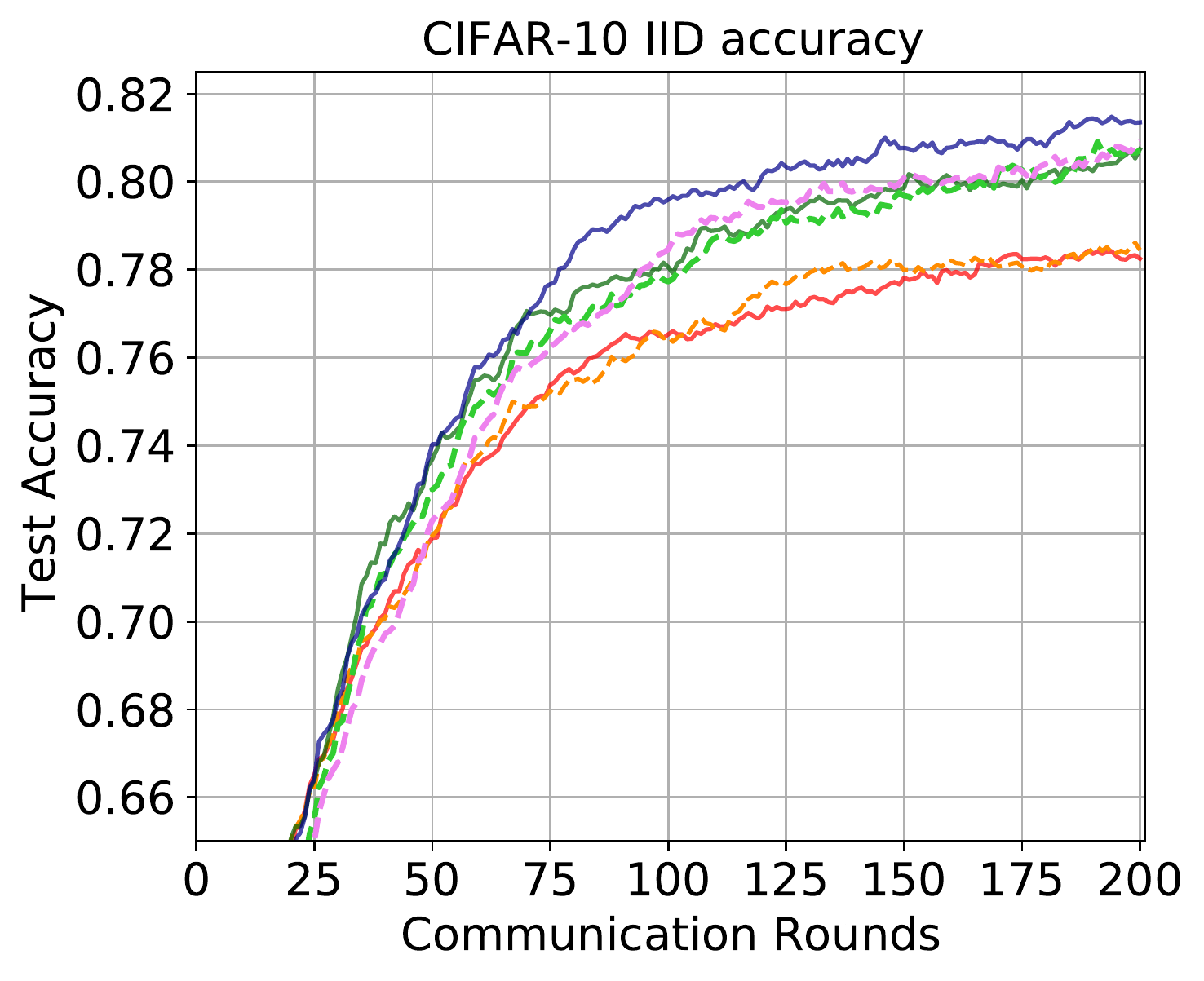}
  \label{fig:sub1-sixth}
\end{subfigure}
\begin{subfigure}{.5\textwidth}
  \centering
  \includegraphics[width=.8\linewidth]{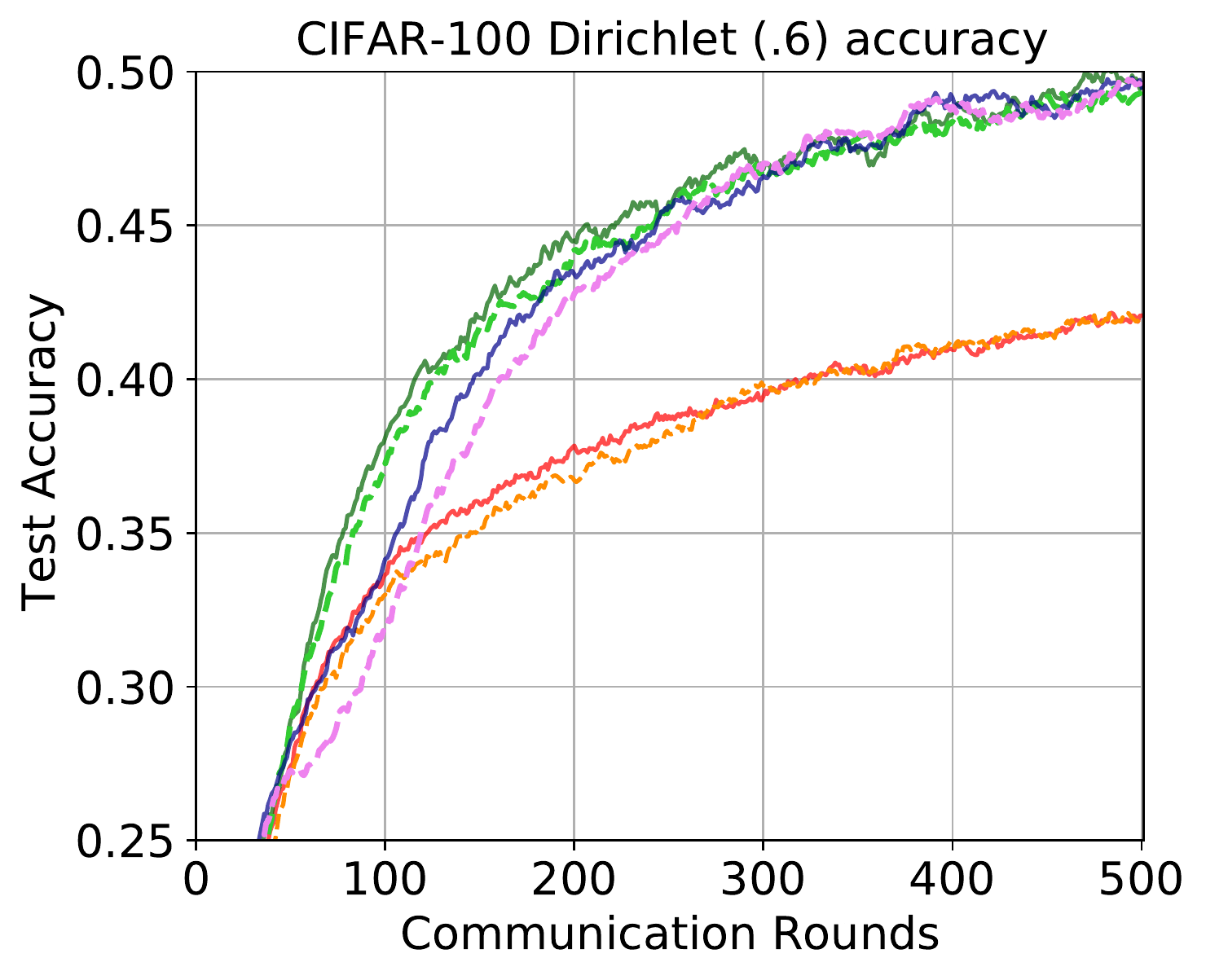}
  \label{fig:sub1-seventh}
\end{subfigure}
\begin{subfigure}{.5\textwidth}
  \centering
  \includegraphics[width=.8\linewidth]{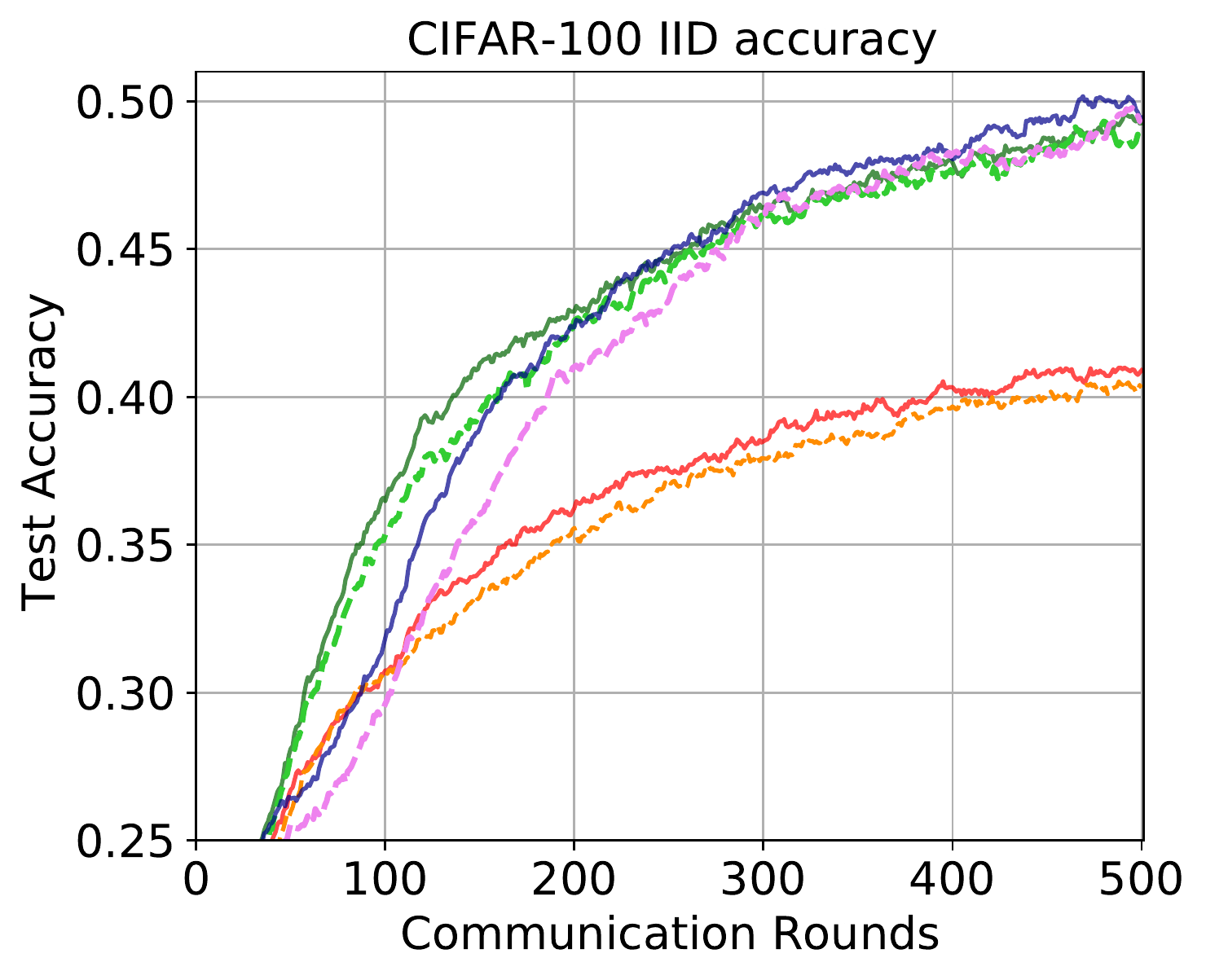}
  \label{fig:sub1-eighth}
\end{subfigure}
\begin{subfigure}{1\textwidth}
  \centering
  \includegraphics[width=1\linewidth]{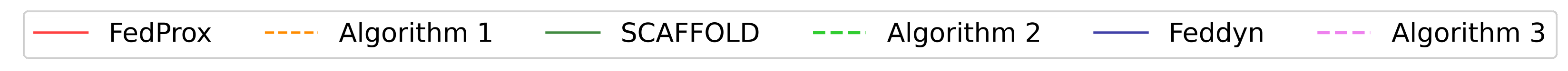}
  \label{fig:sub1-11}
\end{subfigure}
\caption{Classification accuracy performance evaluated in MNIST, EMNIST-L, CIFAR-10, and CIFAR-100 datasets (10\% participation rate). }
\label{fig:maintextacc}
\end{figure}

\begin{figure}[ht!]
\begin{subfigure}{.5\textwidth}
  \centering
  \includegraphics[width=.8\linewidth]{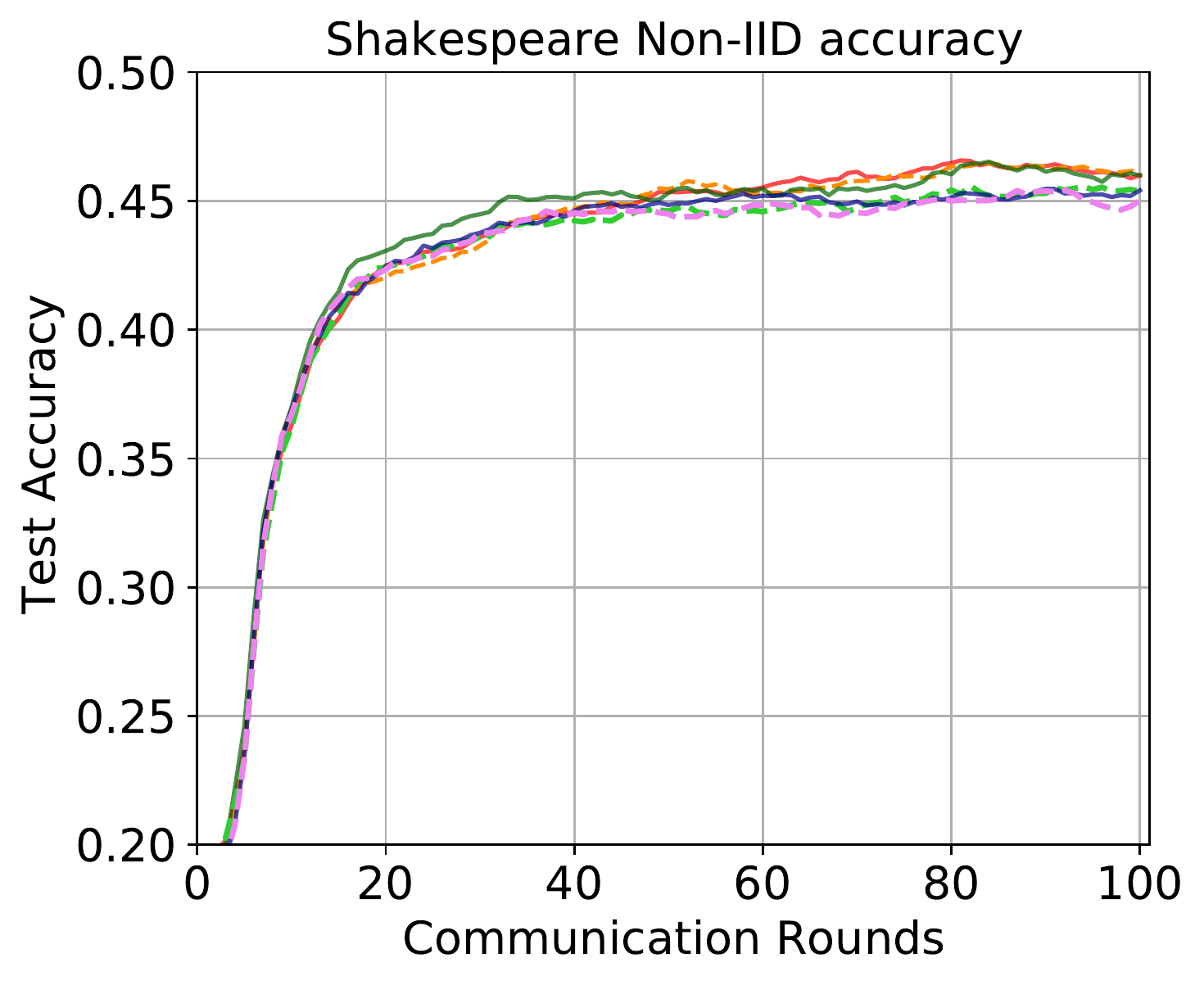}
  \label{fig:sub-1}
  \caption{10\% participation rate}
\end{subfigure}
\begin{subfigure}{.5\textwidth}
  \centering
  \includegraphics[width=.8\linewidth]{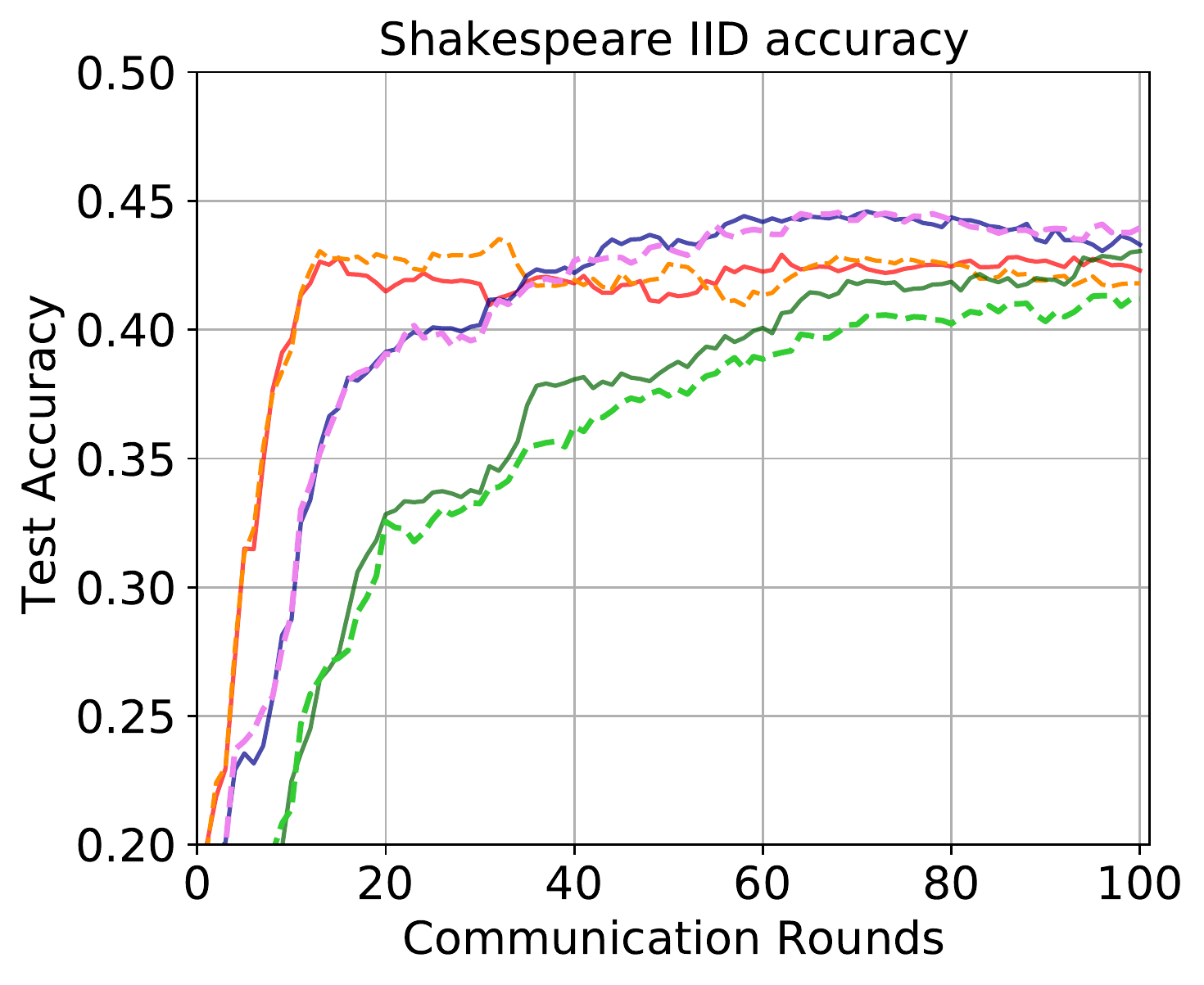}
  \label{fig:sub-2}
  \caption{10\% participation rate}
\end{subfigure}
\begin{subfigure}{.5\textwidth}
  \centering
  \includegraphics[width=.8\linewidth]{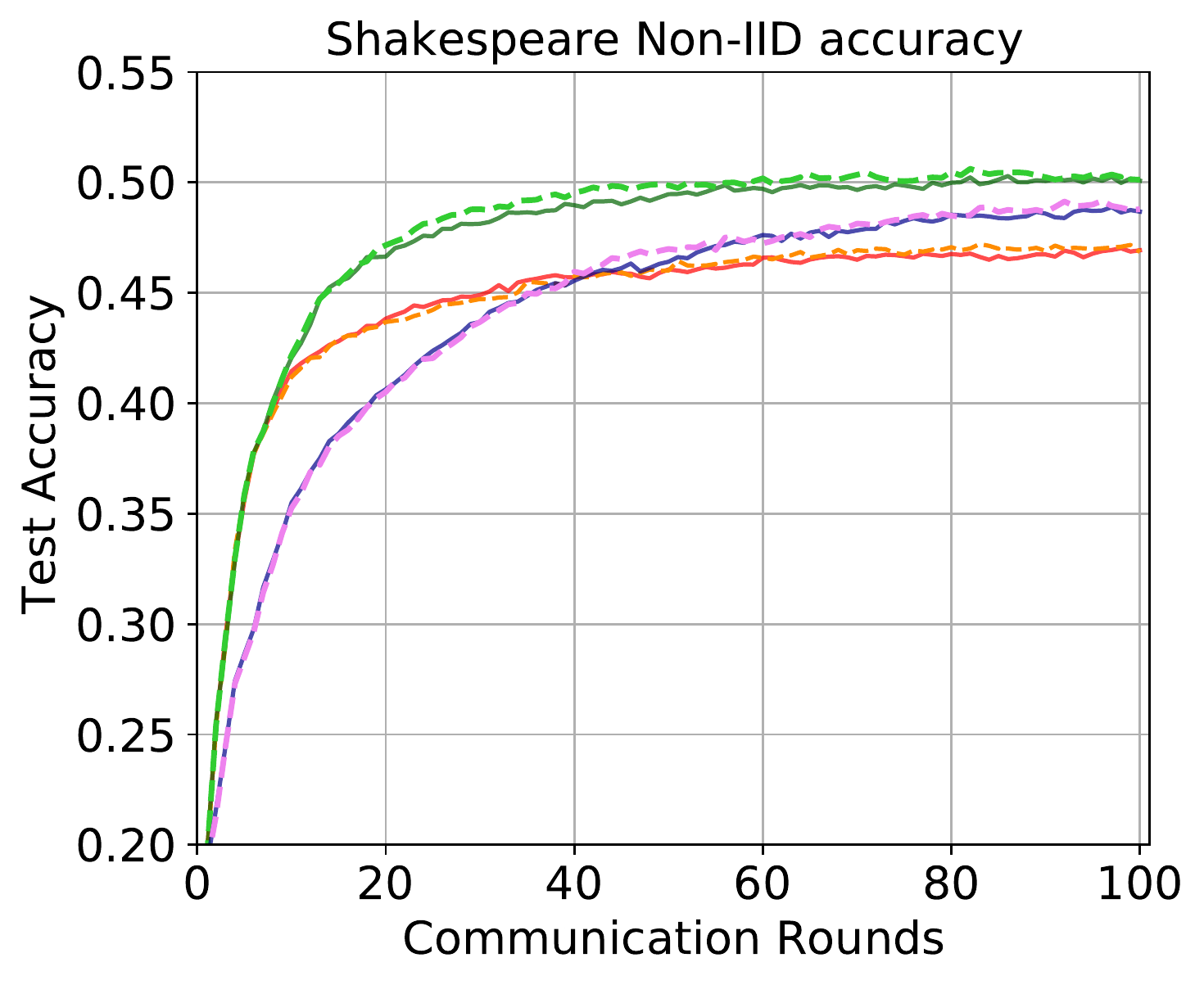}
  \label{fig:sub-4}
  \caption{100\% participation rate}
\end{subfigure}
\begin{subfigure}{.5\textwidth}
  \centering
  \includegraphics[width=.8\linewidth]{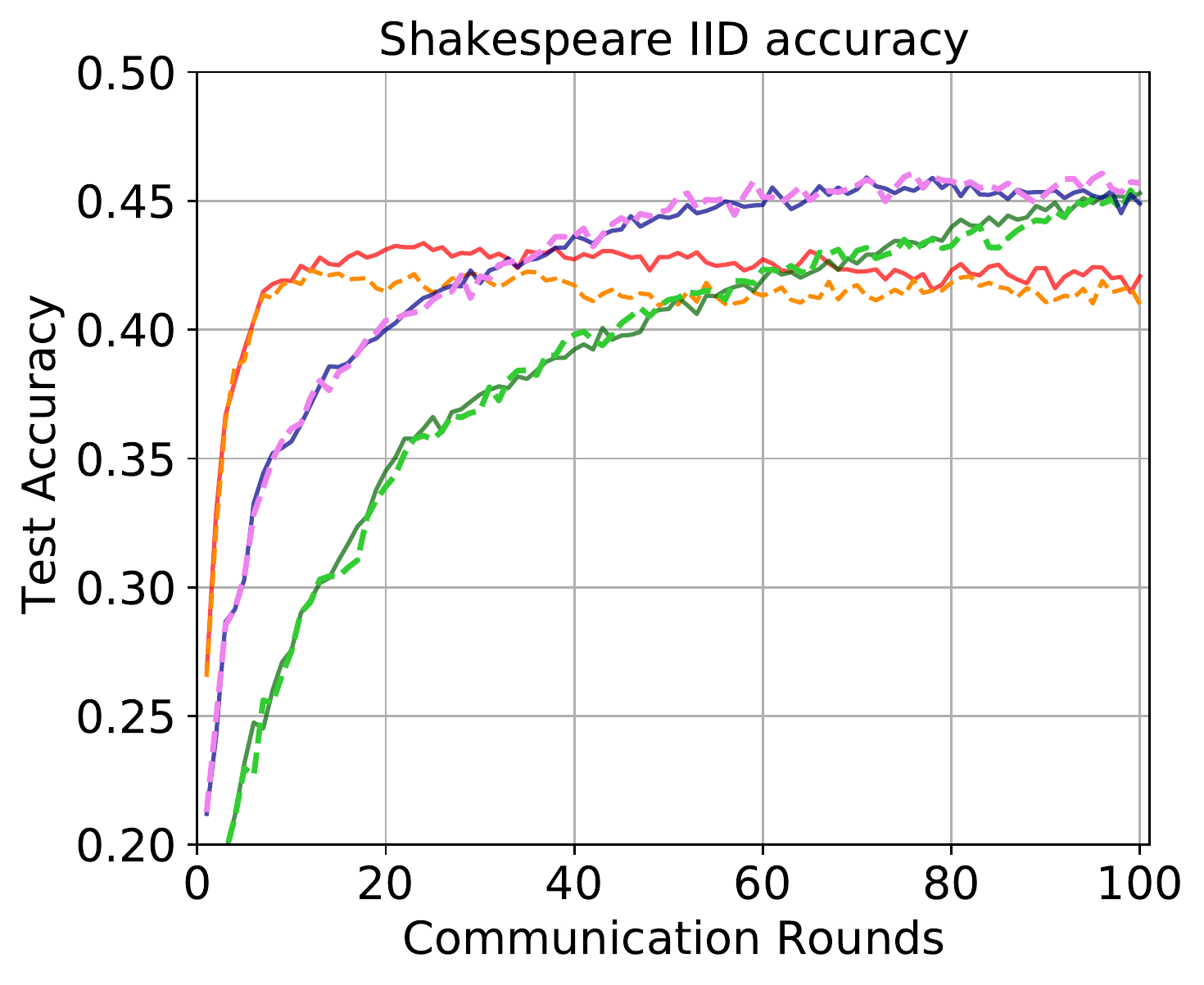}
  \label{fig:sub-5}
  \caption{100\% participation rate}
\end{subfigure}
\begin{subfigure}{1\textwidth}
  \centering
  \includegraphics[width=1\linewidth]{textfigure/legend.pdf}
  \label{fig:sub-3}
\end{subfigure}
\caption{Classification accuracy performance evaluated in IID Shakespeare and Non-IID Shakespeare datasets.}
\label{fig:Shakespeare 10 accuracy}
\end{figure}

\begin{figure}[ht!]
\begin{subfigure}{.5\textwidth}
  \centering
  \includegraphics[width=.8\linewidth]{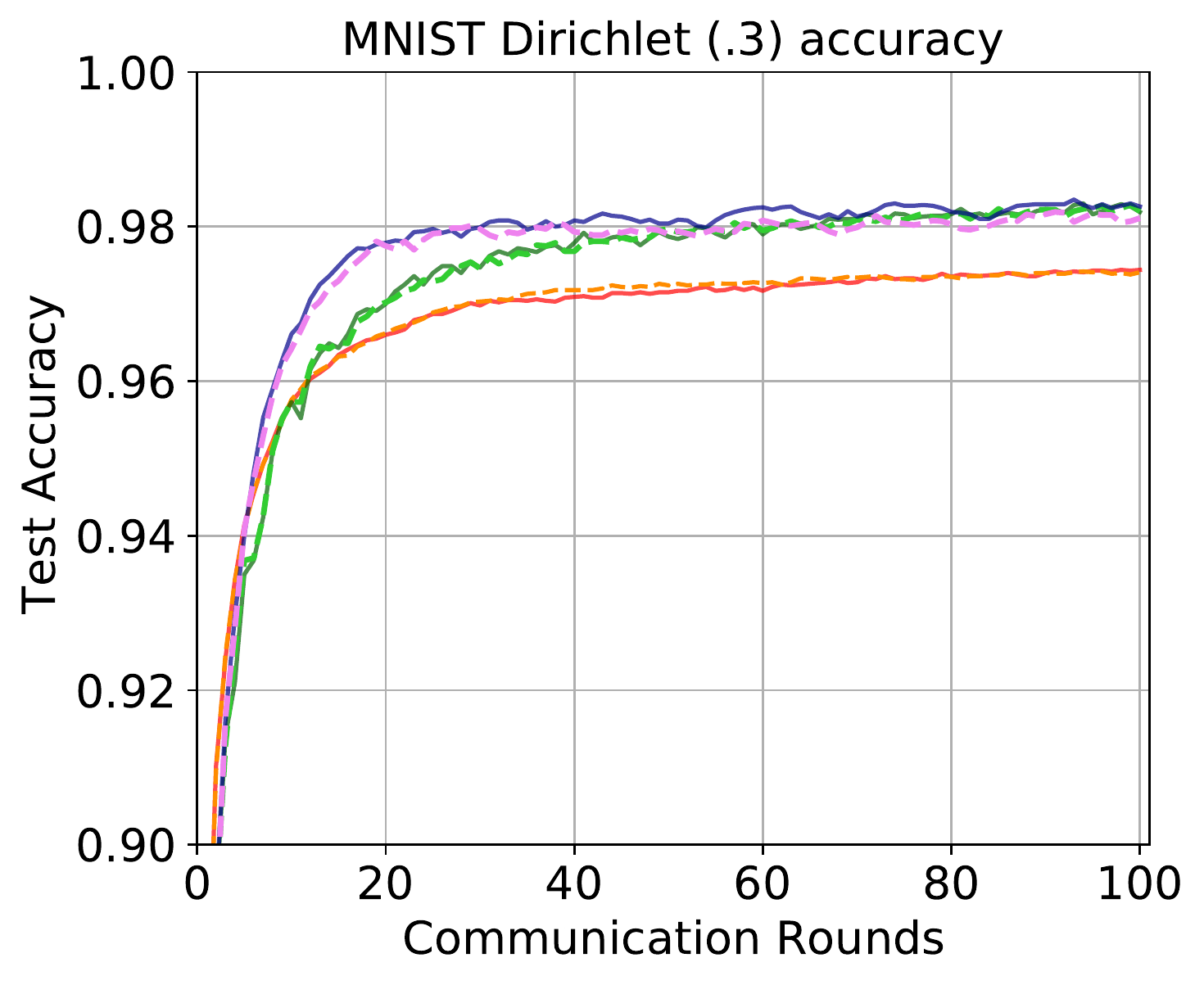}
  \label{fig:sub11}
\end{subfigure}
\begin{subfigure}{.5\textwidth}
  \centering
  \includegraphics[width=.8\linewidth]{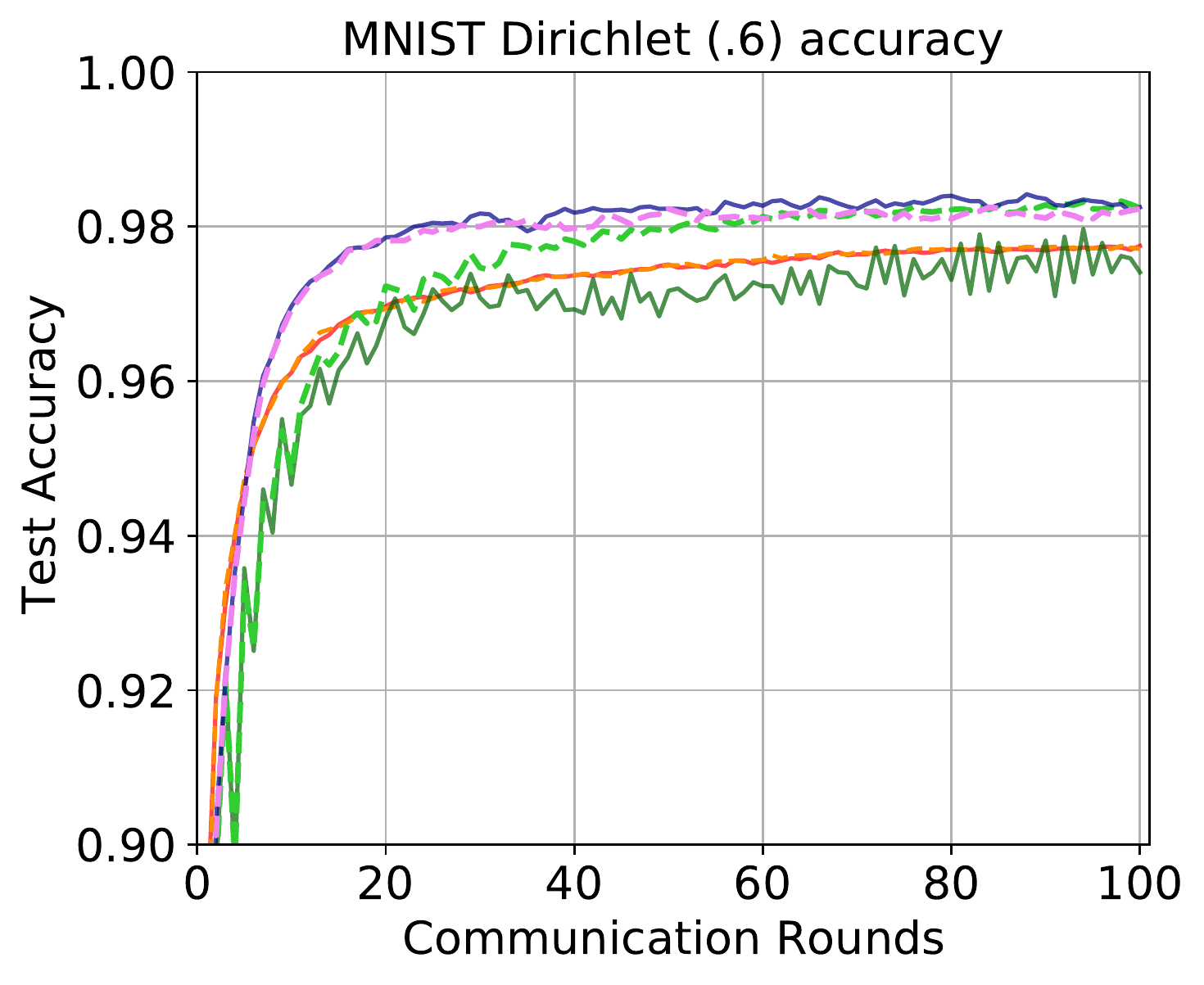}
  \label{fig:sub12}
\end{subfigure}
\begin{subfigure}{.5\textwidth}
  \centering
  \includegraphics[width=.8\linewidth]{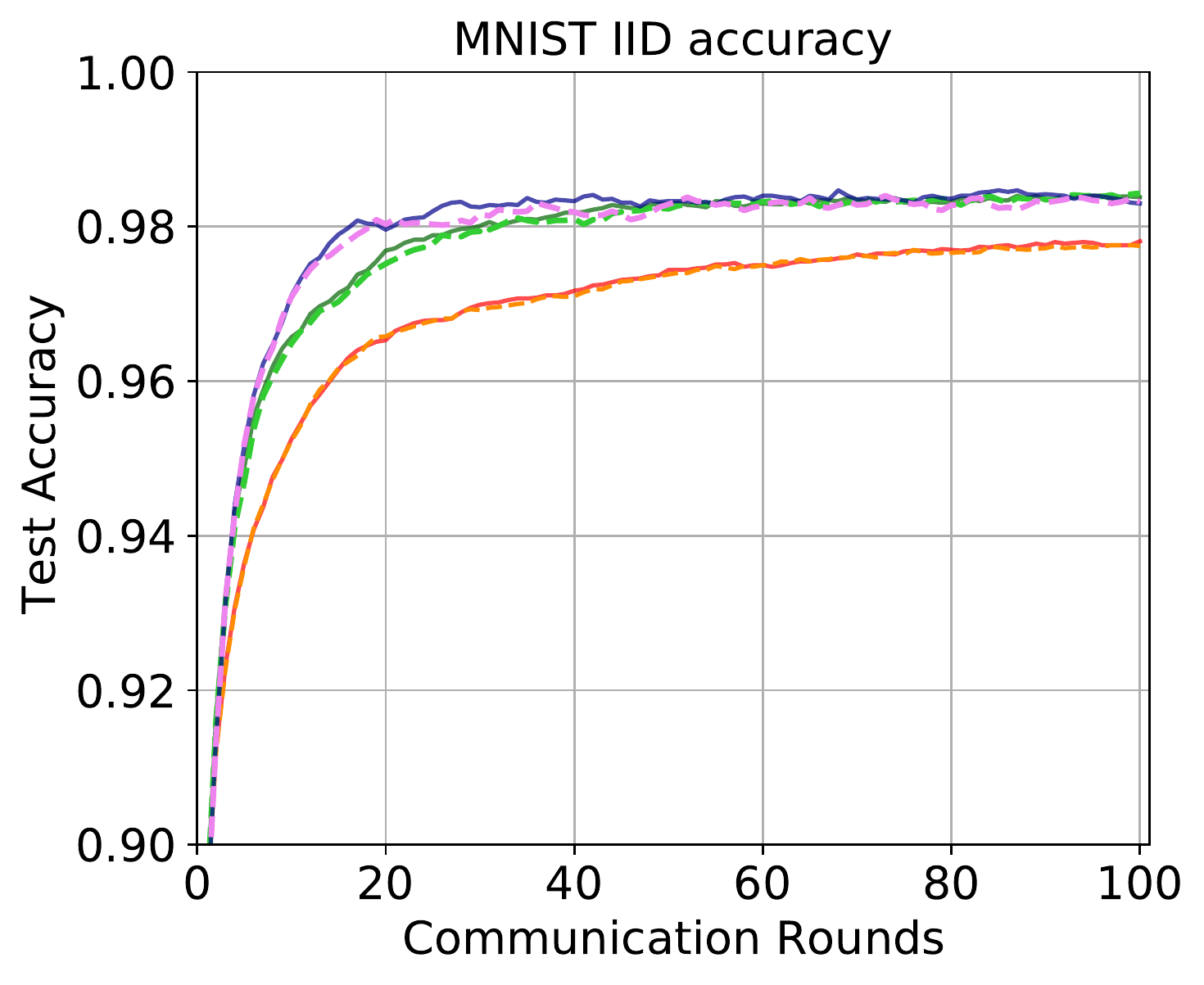}
  \label{fig:sub13}
\end{subfigure}
\begin{subfigure}{.5\textwidth}
  \centering
  \includegraphics[width=.8\linewidth]{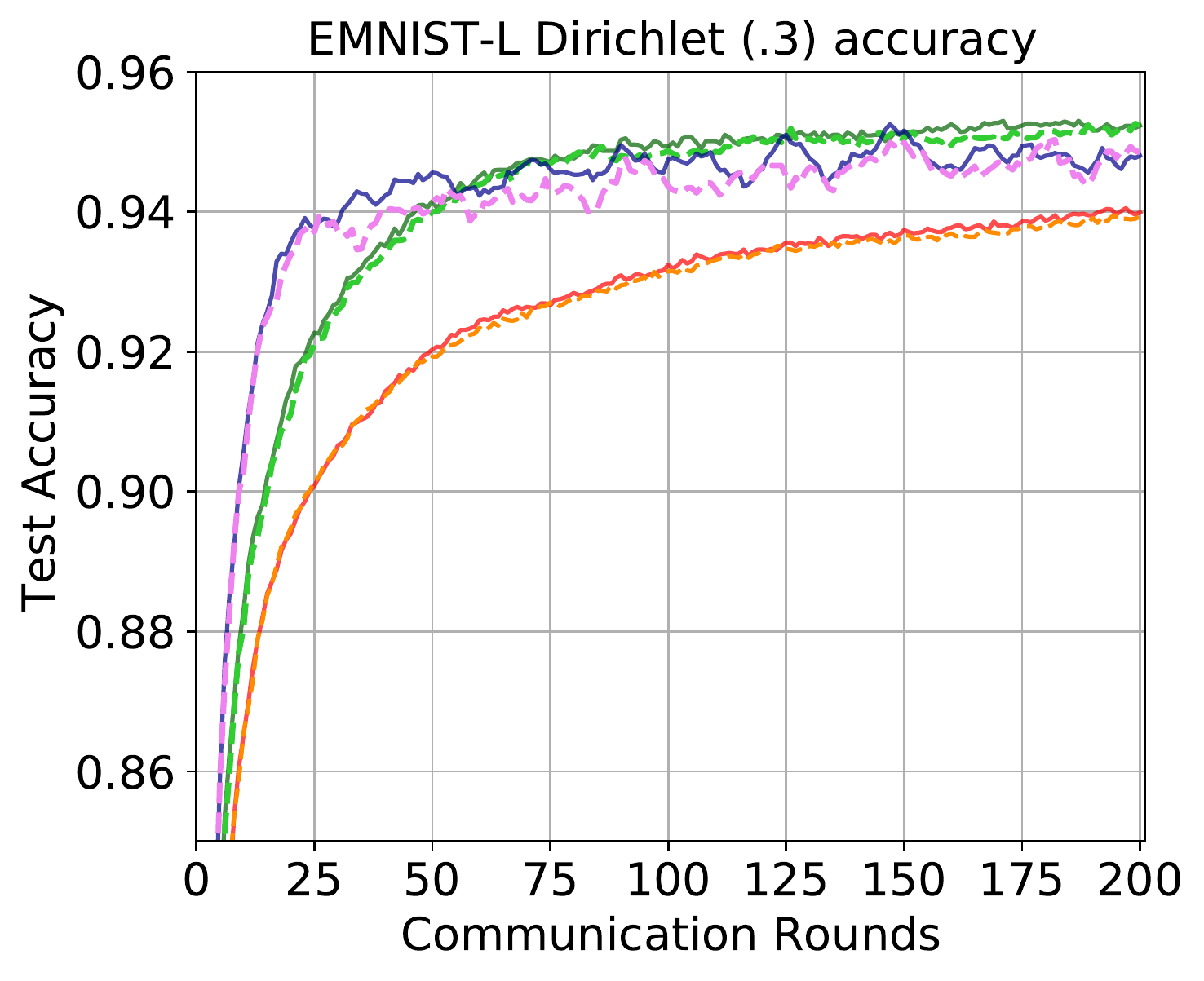}
  \label{fig:sub14}
\end{subfigure}
\begin{subfigure}{.5\textwidth}
  \centering
  \includegraphics[width=.8\linewidth]{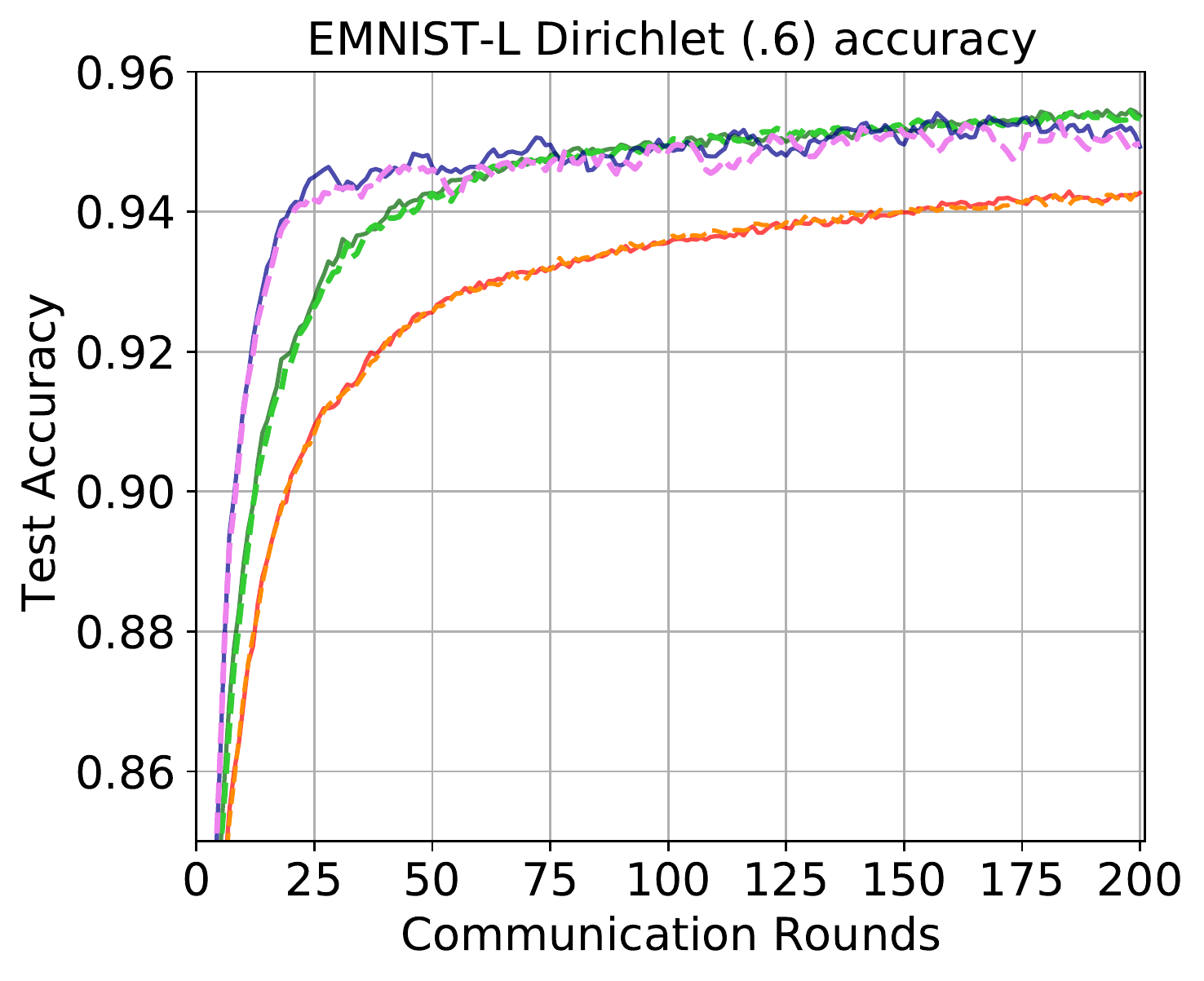}
  \label{fig:sub15}
\end{subfigure}
\begin{subfigure}{.5\textwidth}
  \centering
  \includegraphics[width=.8\linewidth]{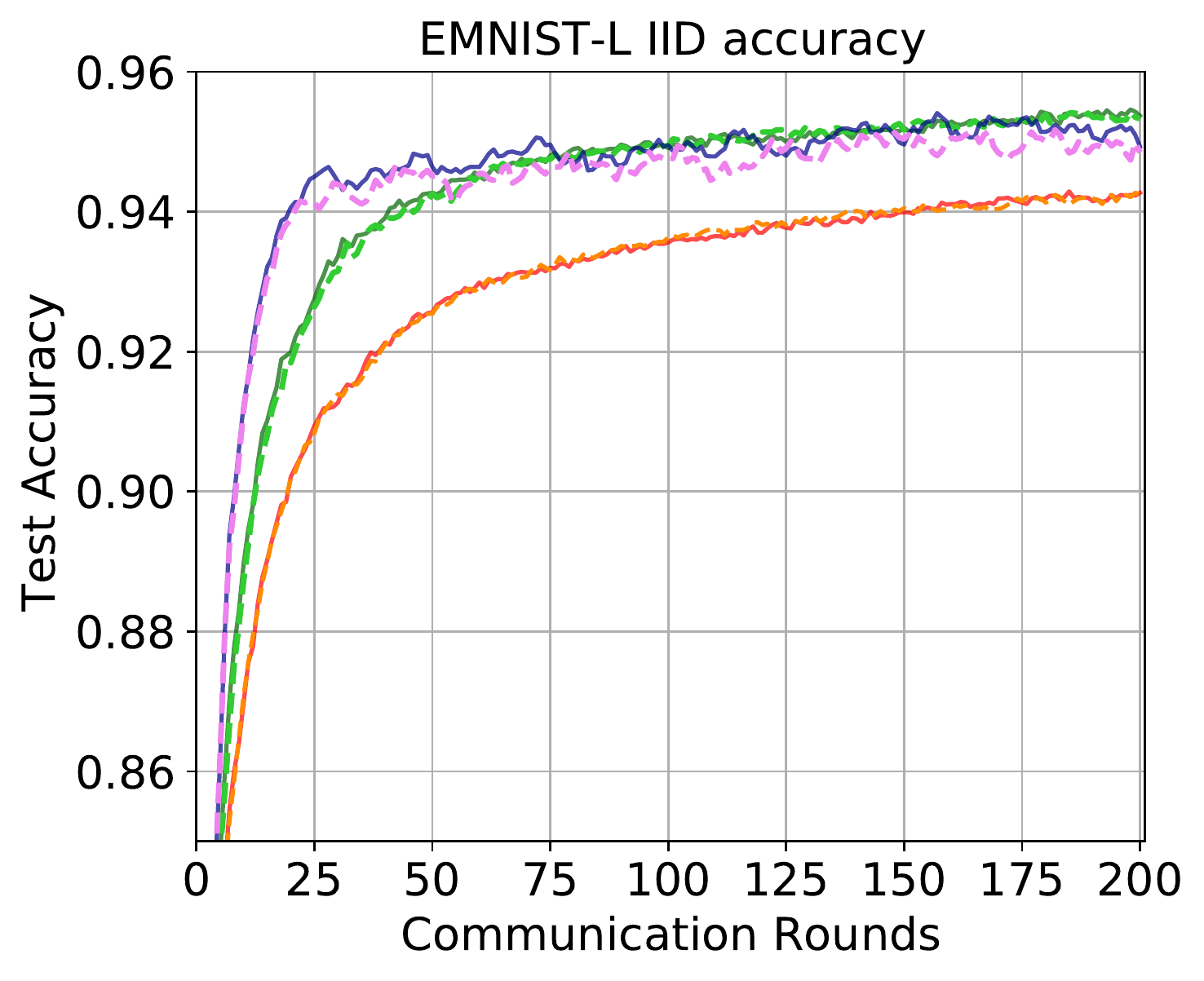}
  \label{fig:sub16}
\end{subfigure}
\begin{subfigure}{1\textwidth}
  \centering
  \includegraphics[width=1\linewidth]{textfigure/legend.pdf}
  \label{fig:sub17}
\end{subfigure}
\caption{Classification accuracy performance evaluated in MNIST, EMNIST-L datasets (100\% participation rate).}
\label{fig:100peracc1}
\end{figure}

\begin{figure}[ht!]
\begin{subfigure}{.5\textwidth}
  \centering
  \includegraphics[width=.8\linewidth]{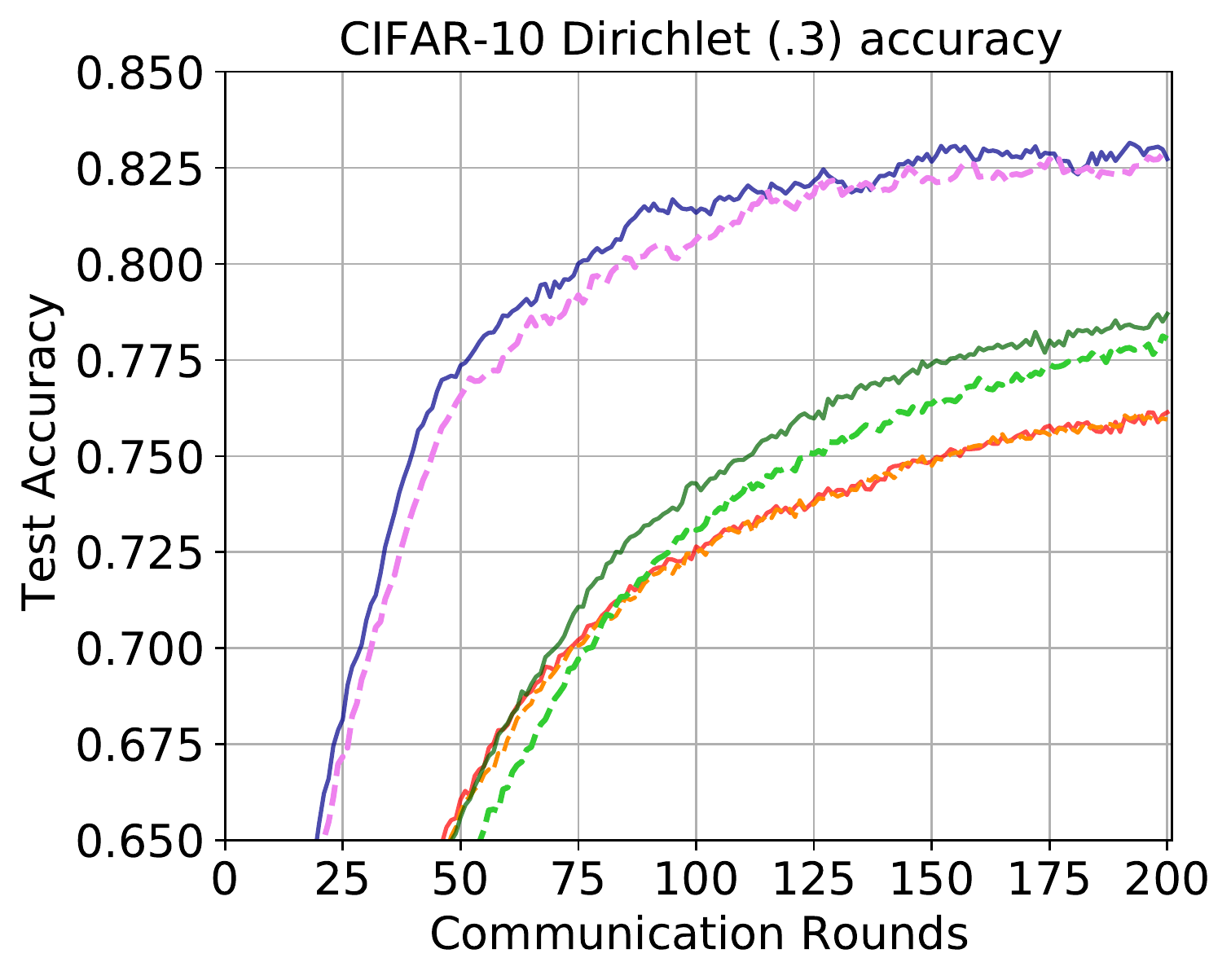}
  \label{fig:sub21}
\end{subfigure}
\begin{subfigure}{.5\textwidth}
  \centering
  \includegraphics[width=.8\linewidth]{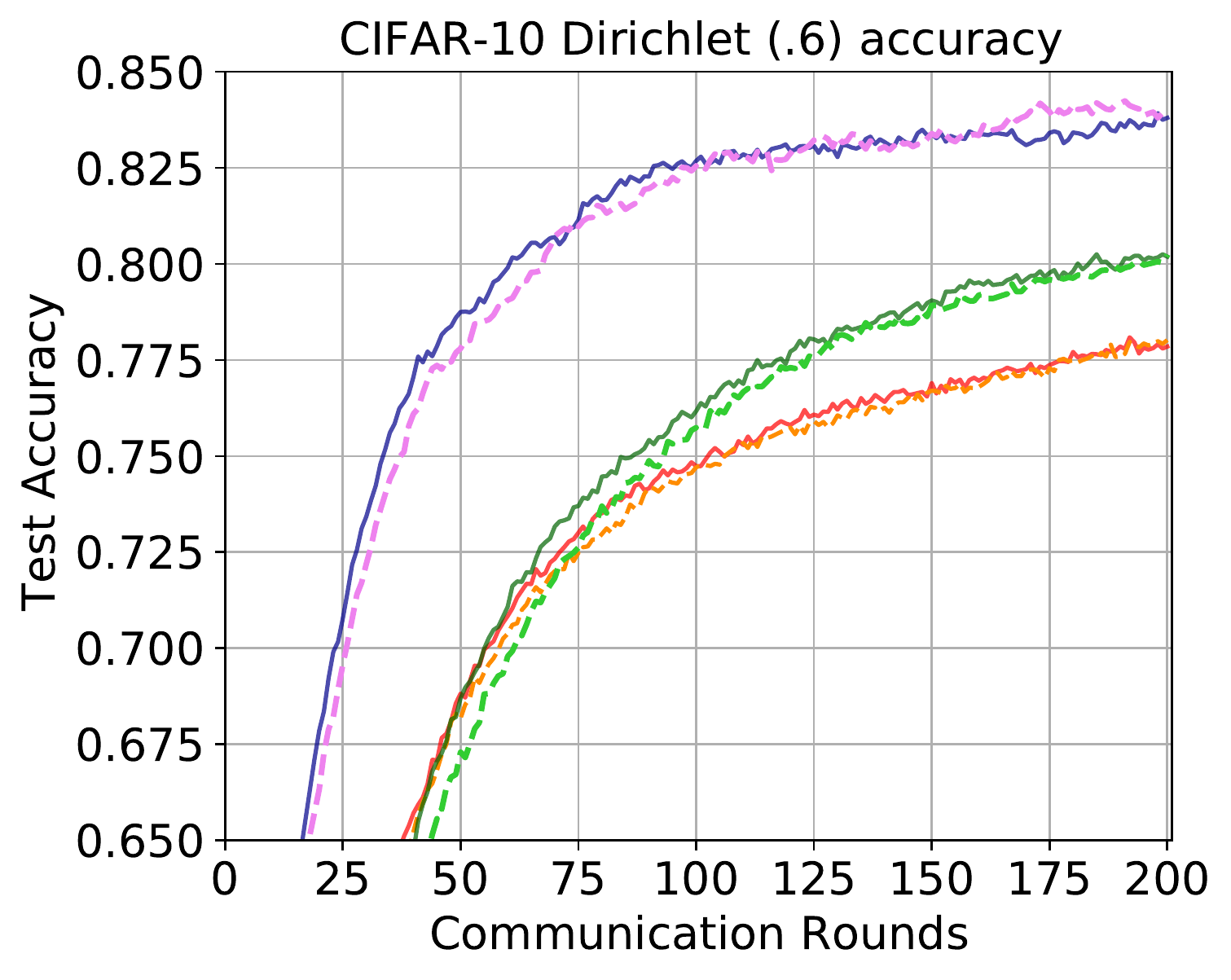}
  \label{fig:sub22}
\end{subfigure}
\begin{subfigure}{.5\textwidth}
  \centering
  \includegraphics[width=.8\linewidth]{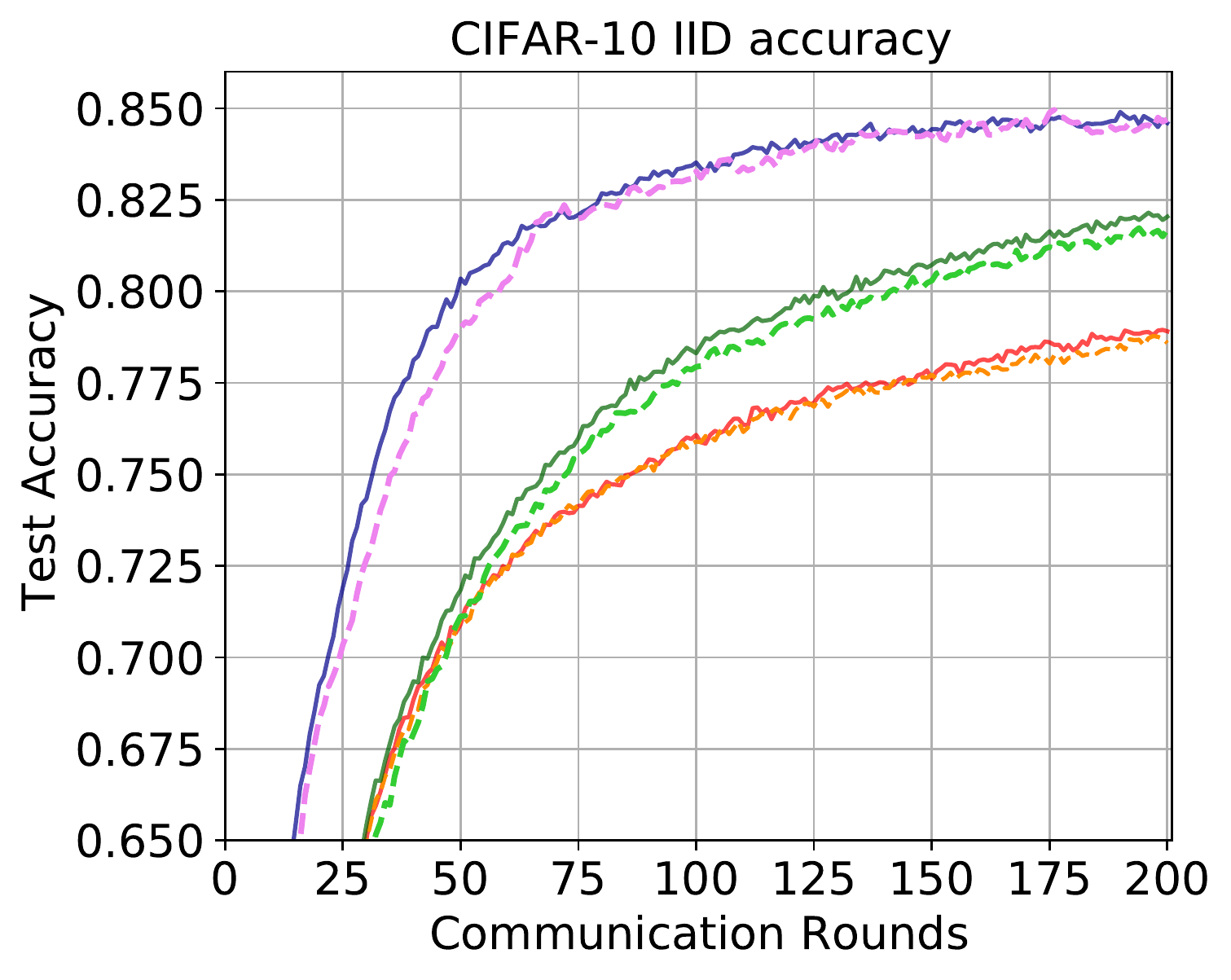}
  \label{fig:sub23}
\end{subfigure}
\begin{subfigure}{.5\textwidth}
  \centering
  \includegraphics[width=.8\linewidth]{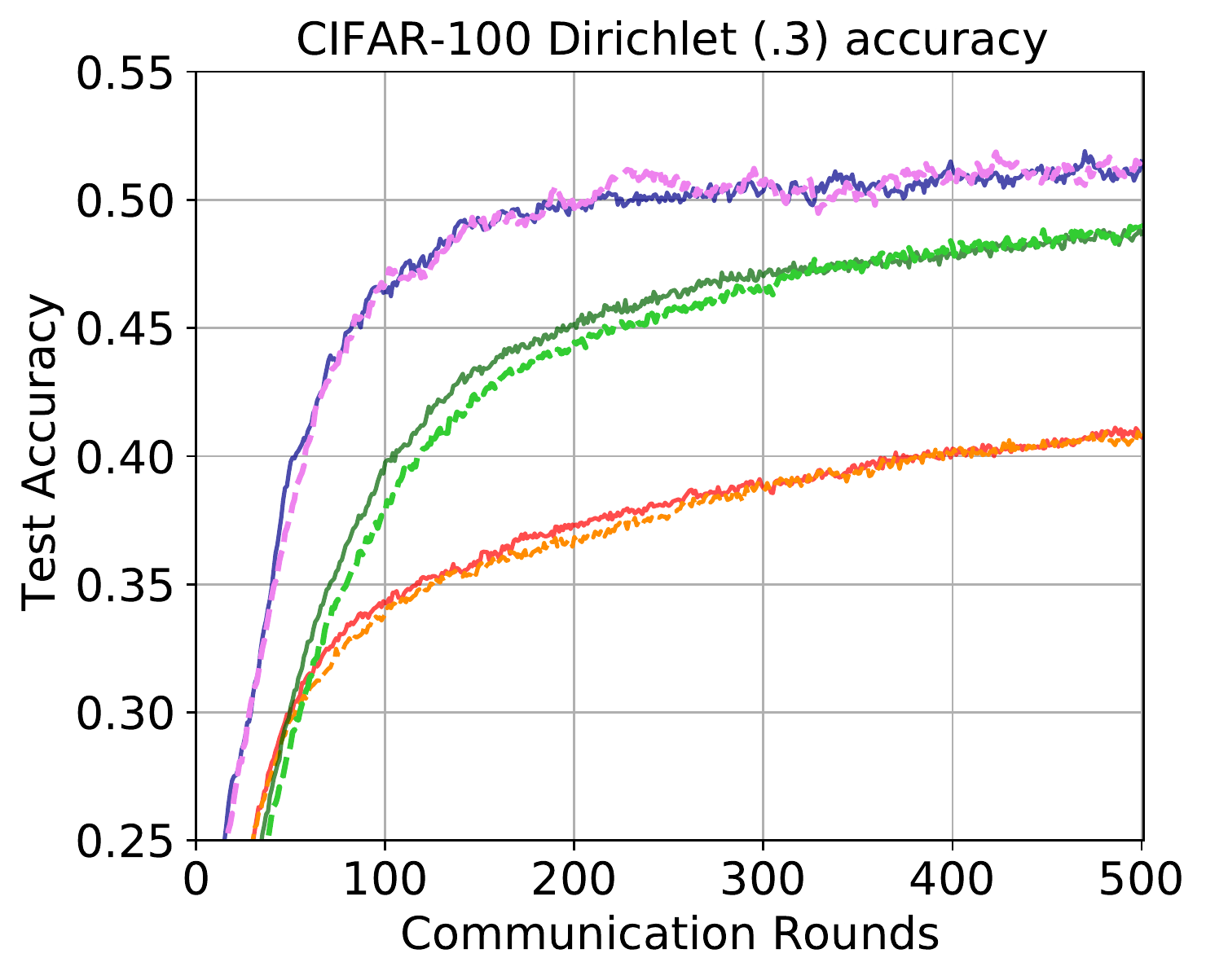}
  \label{fig:sub24}
\end{subfigure}
\begin{subfigure}{.5\textwidth}
  \centering
  \includegraphics[width=.8\linewidth]{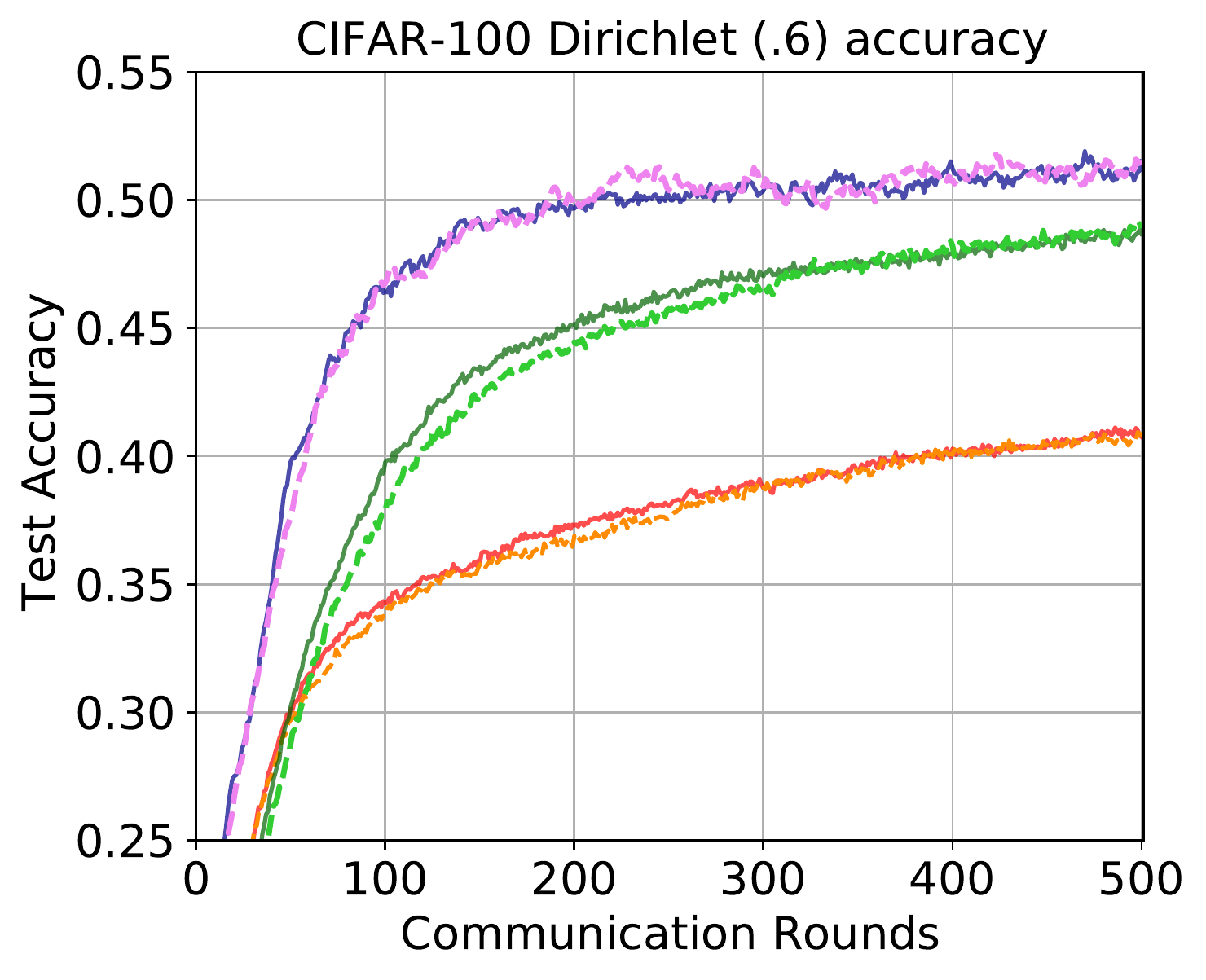}
  \label{fig:sub25}
\end{subfigure}
\begin{subfigure}{.5\textwidth}
  \centering
  \includegraphics[width=.8\linewidth]{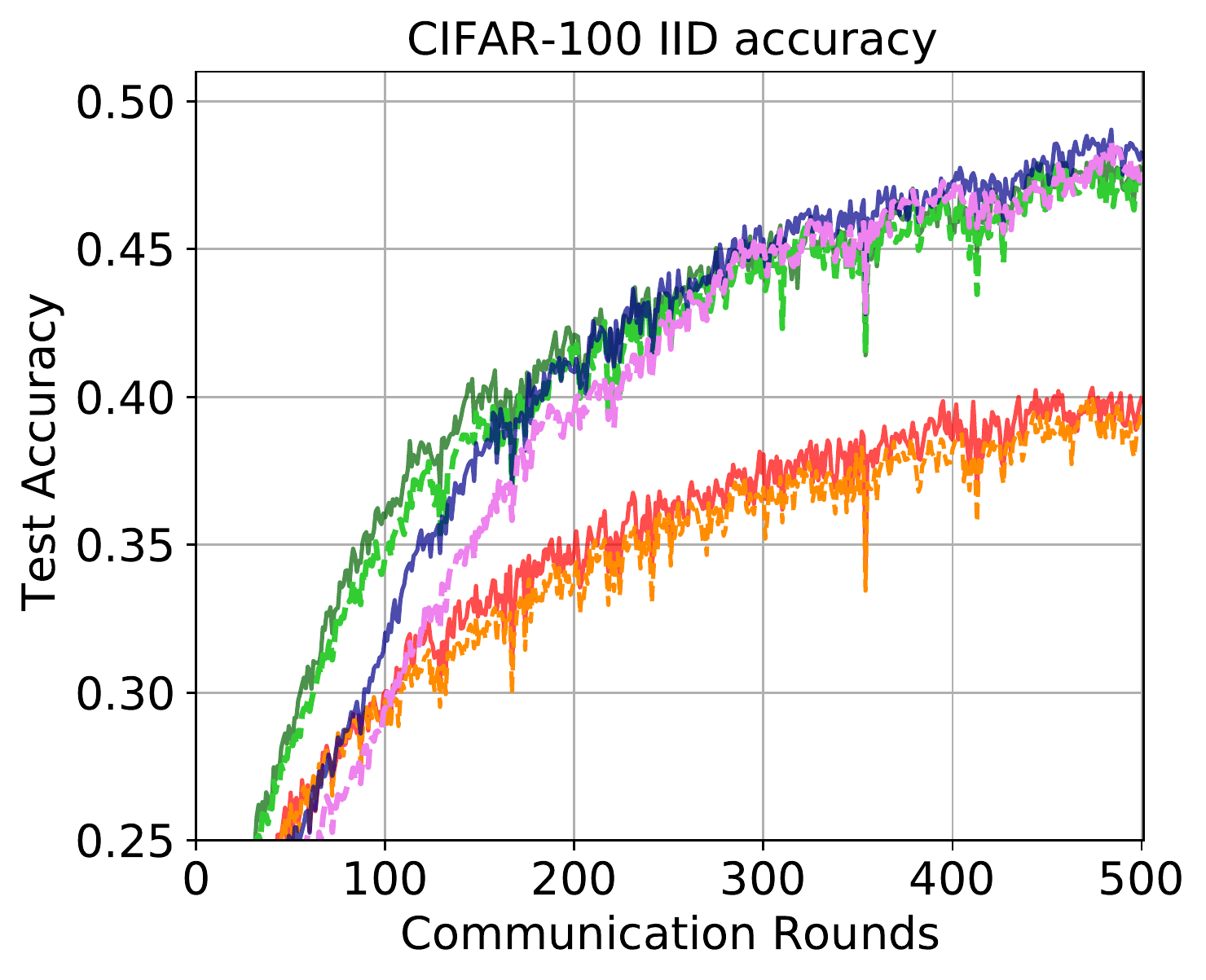}
  \label{fig:sub26}
\end{subfigure}
\begin{subfigure}{1\textwidth}
  \centering
  \includegraphics[width=1\linewidth]{textfigure/legend.pdf}
  \label{fig:sub27}
\end{subfigure}
\caption{Classification accuracy performance evaluated in CIFAR-10 and CIFAR-100 datasets (100\% participation rate).}
\label{fig:100peracc2}
\end{figure}

\clearpage
\section{Proof}
We utilize some techniques in FedDyn~\citep{Acar2021federated}. 
\subsection{Definition} \label{append_def}
We introduce a formal definition and properties that we will use.
\begin{definition}
A function $L_k$ is $\beta$-smooth if it satisfies
\begin{equation}\label{smooth1}
    \lVert \nabla L_k(x)- \nabla L_k(y) \rVert \le \beta \lVert x - y \rVert \ \  ~~~  \forall x,y.
\end{equation}
\end{definition}
If function $L_k$ is convex and $\beta$-smooth, it satisfies
\begin{equation}\label{smooth2}
   -\langle\nabla L_k(x), z-y\rangle\le -L_k(z) + L_k(y) + \frac{\beta}{2} \lVert z-x \rVert ^2 \ \    \forall x,y,z.
\end{equation}

As a consequence of the convexity and smoothness, the following property holds \citep[Theorem 2.1.5]{Nesterove2018}:
\begin{equation}\label{convex1}
    \frac{1}{2\beta m} \sum_{k\in [m]} \lVert \nabla L_k(x)-\nabla L_k(x_*) \rVert^2 \le \mathcal{R}(x)-\mathcal{R}(x_*) \ \   \forall x
\end{equation}
where $\mathcal{R}(x)=\frac{1}{m}\sum_{k=1}^m L_k(x)$ and $\nabla \mathcal{R}(x_*)=0$.

We will also use the relaxed triangle inequality \citep[Lemma 3]{Karimireddy2020scaffold}:
\begin{equation}\label{triangle}
    \left \lVert\sum_{j=1}^n v_j \right\rVert^2 \le n\sum_{j=1}^n \lVert v_j \rVert^2.
\end{equation}
\subsection{Proof of Theorem 3.1} \label{append_theorem}

The theorem that we will prove is as follows.
\begin{theorem}[Full statement of Theorem 3.1]\label{convergence_theorem}
Assume that the clients are uniformly randomly selected at each round and the individual loss functions $\{L_k\}_{k=1}^m$ are convex and $\beta$-smooth. Also assume that $\lambda_2>27\beta$. Then Algorithm~\ref{algo:feddyn} satisfies the following inequality: Letting $\mathcal{R}(\theta) = \frac{1}{m} \sum_{k \in [m]} L_k(\theta)$ and $    \theta_* =\underset{\theta}{\arg\min}\mathcal{R}(\theta)$,
\begin{align}
    \mathbb{E}\left[ \mathcal{R}\left( \frac{1}{T}\sum_{t=0}^{T-1}\gamma^t \right)-\mathcal{R}(\theta_*) \right]&\nonumber\le \frac{1}{T}\frac{1}{\kappa_0}(\mathbb{E}\lVert  \gamma^{0}-\theta_* \rVert^2+\kappa C_{0})+\frac{\kappa'}{\kappa_0} \cdot \lambda_1^2d\\
    &-\frac{1}{T}\frac{2\lambda_1}{\lambda_2 }\sum_{t=1}^T \left\langle({\gamma^{t-1}-\theta_*}),\frac{1}{m}\sum_{k\in [m]}\mathbb{E}[\mathrm{sign}(\tilde{\theta}_k^t-\theta^{t-1})]\right\rangle,
\end{align}

where
\begin{align*}
    \gamma^t &=\frac{1}{P}\sum_{k\in \mathcal{P}_t}\theta_k^t=\theta^t+\frac{1}{\lambda_2}h^t ~~~ \textrm{with } P = |\mathcal{P}_t|, \\
    \kappa &= \frac{10m}{P}\frac{1}{\lambda_2}\frac{\lambda_2+\beta}{\lambda_2^2-25\beta^2}, \\
    \kappa_0 &= \frac{2}{\lambda_2}\frac{\lambda_2^2-25\lambda_2\beta-50\beta^2}{\lambda_2^2-25\beta^2}, \\
    \kappa' &= \frac{5}{\lambda_2}\frac{\lambda_2+\beta}{\lambda_2^2-25\beta^2} = \kappa \cdot \frac{P}{2m}, \\
    C_0 &= \frac{1}{m}\sum_{k\in[m]}\mathbb{E}\lVert\nabla L_k(\theta_k^0)-\nabla L_k(\theta_*)\rVert, \\
    d &= \textrm{dim}(\theta).
\end{align*}
\end{theorem}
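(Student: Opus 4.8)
The plan is to adapt the convergence analysis of FedDyn~\citep{Acar2021federated} to account for the additional $\ell_1$-norm regularizer, treating the subgradient term $\lambda_1 \mathrm{sign}(\theta_k^t - \theta^{t-1})$ as a controlled perturbation of the FedDyn dynamics. First I would set up the key invariants of Algorithm~\ref{algo:feddyn}: the server update $\theta^t = \gamma^t - \frac{1}{\lambda_2}h^t$ where $\gamma^t = \frac{1}{P}\sum_{k\in\mathcal{P}_t}\theta_k^t$, and the first-order optimality condition (\ref{eq:feddyn_first_l1}) for each participating client, namely $\nabla L_k(\theta_k^t) - \nabla L_k(\theta_k^{t-1}) + \lambda_2(\theta_k^t - \theta^{t-1}) + \lambda_1\mathrm{sign}(\theta_k^t - \theta^{t-1}) = 0$. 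The quantity $\tilde\theta_k^t$ is the ``hypothetical'' local solve performed by \emph{every} client (not just those in $\mathcal{P}_t$), which is the standard device for handling partial participation by taking expectations over the uniform random sampling of $\mathcal{P}_t$.

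The main chain of estimates would proceed as follows. I would track the potential $\mathbb{E}\lVert \gamma^t - \theta_* \rVert^2$ plus a correction term involving $\frac{1}{m}\sum_k \lVert \nabla L_k(\theta_k^t) - \nabla L_k(\theta_*)\rVert^2$ (this is the role of $C_0$ and the constant $\kappa$), exactly as in the FedDyn telescoping argument. Expanding $\lVert \gamma^t - \theta_*\rVert^2$ using the first-order conditions, convexity and $\beta$-smoothness via (\ref{smooth2}) and (\ref{convex1}), and the relaxed triangle inequality (\ref{triangle}), one obtains a recursion of the shape $\Phi^t \le \Phi^{t-1} - \kappa_0(\mathcal{R}(\gamma^{t-1}) - \mathcal{R}(\theta_*)) + (\text{$\ell_1$ error terms})$. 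The $\ell_1$ error terms split into a quadratic piece $\propto \lambda_1^2 d$ coming from $\lVert \lambda_1 \mathrm{sign}(\cdot)\rVert^2 = \lambda_1^2 d$ (each coordinate of the sign vector has magnitude at most $1$), and a cross/inner-product piece $-\frac{2\lambda_1}{\lambda_2}\langle \gamma^{t-1} - \theta_*, \frac{1}{m}\sum_k \mathbb{E}[\mathrm{sign}(\tilde\theta_k^t - \theta^{t-1})]\rangle$ arising when the perturbation is paired against the displacement $\gamma^{t-1} - \theta_*$. Summing the recursion over $t = 1,\dots,T$, telescoping $\Phi^t$, dividing by $T\kappa_0$, and applying Jensen's inequality to move the average inside $\mathcal{R}$ (using convexity of $\mathcal{R}$) yields the stated bound, with $\kappa_0, \kappa, \kappa'$ emerging as the explicit constants from collecting the $\lambda_2$ versus $\beta$ coefficients; the hypothesis $\lambda_2 > 27\beta$ is what makes $\kappa_0 > 0$ (since $\lambda_2^2 - 25\lambda_2\beta - 50\beta^2 > 0$ for $\lambda_2 > 27\beta$, as $27^2 - 25\cdot 27 - 50 = 4 > 0$).

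The hard part will be the careful bookkeeping of the partial-participation expectation together with the $\ell_1$ perturbation: one must verify that replacing the sampled-client averages $\gamma^t$, $h^t$ by their expectations over $\mathcal{P}_t$ (which involve the hypothetical $\tilde\theta_k^t$) does not spoil the cancellations that drive the FedDyn proof, and that the sign-vector perturbation enters \emph{linearly} in the places claimed rather than coupling with the variance-of-sampling terms. In particular, I expect the delicate step to be showing that the extra terms introduced by $\lambda_1\mathrm{sign}(\cdot)$ in both the client-side optimality condition and the server-side $h^t$ recursion combine to give exactly the two advertised correction terms — the $\lambda_1^2 d$ term and the inner-product term — with no residual cross terms between $\lambda_1$ and the $\beta$-smoothness slack. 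Once that alignment is established, the remaining algebra (collecting constants, choosing how to split coefficients via the relaxed triangle inequality, and telescoping) is routine and mirrors~\citep{Acar2021federated} and~\citep{Karimireddy2020scaffold}.
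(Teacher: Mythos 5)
Your proposal follows essentially the same route as the paper's proof: the same Lyapunov function $\mathbb{E}\lVert\gamma^t-\theta_*\rVert^2+\kappa C_t$, the same two sources of $\ell_1$ error (the $\lambda_1^2 d$ term from $\lVert\lambda_1\mathrm{sign}(\cdot)\rVert^2$ inside the relaxed-triangle-inequality bound on $\epsilon_t$, and the inner-product term from pairing the sign perturbation against $\gamma^{t-1}-\theta_*$), and the same telescoping-plus-Jensen finish. The delicate step you flag is resolved in the paper by observing that the sign contributions cancel exactly in the $h^t$ recursion, so $h^t=\frac{1}{m}\sum_{k\in[m]}\nabla L_k(\theta_k^t)$ holds unchanged from FedDyn and the rest of the machinery carries over.
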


To prove the theorem, define variables that will be used throughout the proof.
\begin{align}\label{tildetheta}
    \Tilde{\theta}_k^t &= \arg\underset{\theta}\min \: L_{k}\left ( \theta \right ) - \left< \nabla L_{k} (\theta_k^{t-1}), \theta\right> + \frac{\lambda_2}{2} \left\| \theta - \theta^{t-1}\right\|^{2}_{2}+ \lambda_1\left\|\theta - \theta^{t-1} \right\|_{1}\ \ \forall k \in [m] \\
\label{controlepsilon}
    C_t&=\frac{1}{m}\sum_{k\in [m]}\mathbb{E}\lVert \nabla L_k(\theta_k^t)-\nabla L_k(\theta_*) \rVert^2, \\
    \epsilon_t&=\frac{1}{m}\sum_{k\in[m]}\mathbb{E}\lVert \Tilde{\theta_k^t}-\gamma^{t-1} \rVert^2.
\end{align}
Note that $\Tilde{\theta}_k^t$ optimizes the $k$th loss function by assuming that the $k$th client ($k\in [m]$) is selected at round $t$. It is obvious that $\Tilde{\theta}_k^t = \theta_k^t$ if $k\in \mathcal{P}_t$. 
$C_t$ refers to the average of the expected differences between gradients of each individual model and the globally optimal model. Lastly, $\epsilon_t$ refers to the deviation of each client model from the average of local models. Remark that $C_t$ and $\epsilon_t$ approach zero if all clients' models converge to the globally optimal model, i.e., $\theta_k^t\rightarrow \theta_*$.


The following lemma expresses $h^t$, how much the averaged active devices' model deviates from the global model.
\begin{lemma}\label{linear h}
Algorithm \ref{algo:feddyn} satisfies 
\begin{equation}\label{htlinear}
    h^t= \frac{1}{m}\sum_{k\in [m]}\nabla L_k(\theta_k^t) 
\end{equation}
\end{lemma}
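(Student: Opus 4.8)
The plan is to prove Lemma~\ref{linear h} by induction on $t$, exploiting the two update rules in Algorithm~\ref{algo:feddyn} that together define $h^t$ and $\nabla L_k(\theta_k^t)$. The base case is immediate from the initialization in the \texttt{Require} line: $h^0 = 0$ and $\nabla L_k(\theta_k^0) = 0$ for all $k$, so $\frac{1}{m}\sum_{k\in[m]}\nabla L_k(\theta_k^0) = 0 = h^0$. So the work is entirely in the inductive step.

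For the inductive step, I would write out $h^t$ using its update rule (line 11 of Algorithm~\ref{algo:feddyn}):
\begin{equation*}
h^t = h^{t-1} - \frac{\lambda_2}{m}\sum_{k\in\mathcal{P}_t}(\theta_k^t - \theta^{t-1}) - \frac{\lambda_1}{m}\sum_{k\in\mathcal{P}_t}\mathrm{sign}(\theta_k^t - \theta^{t-1}).
\end{equation*}
Now substitute the inductive hypothesis $h^{t-1} = \frac{1}{m}\sum_{k\in[m]}\nabla L_k(\theta_k^{t-1})$ and split the sum over $[m]$ into the selected set $\mathcal{P}_t$ and its complement. For $k\notin\mathcal{P}_t$, line 8 gives $\nabla L_k(\theta_k^t) = \nabla L_k(\theta_k^{t-1})$, so those terms are unchanged. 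For $k\in\mathcal{P}_t$, the gradient update on line 5 reads $\nabla L_k(\theta_k^t) = \nabla L_k(\theta_k^{t-1}) - \lambda_2(\theta_k^t - \theta^{t-1}) - \lambda_1\mathrm{sign}(\theta_k^t - \theta^{t-1})$, which is exactly the per-client summand appearing in the $h^t$ update. Therefore
\begin{equation*}
h^t = \frac{1}{m}\sum_{k\in[m]}\nabla L_k(\theta_k^{t-1}) + \frac{1}{m}\sum_{k\in\mathcal{P}_t}\bigl(\nabla L_k(\theta_k^t) - \nabla L_k(\theta_k^{t-1})\bigr) = \frac{1}{m}\sum_{k\in[m]}\nabla L_k(\theta_k^t),
\end{equation*}
where the last equality regroups the $\mathcal{P}_t$ terms and uses that $\nabla L_k$ is unchanged off $\mathcal{P}_t$. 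This closes the induction.

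There is no real obstacle here: the lemma is essentially a bookkeeping identity, an algebraic consequence of how the dynamic regularizer state $h^t$ is maintained in lockstep with the stored gradients $\nabla L_k(\theta_k^t)$. The only thing to be careful about is matching the $\lambda_2$ and $\lambda_1$ coefficients exactly between line 5 and line 11, and correctly handling the non-participating clients via line 8 so that the telescoping over $[m]$ goes through. I would present the proof as the short two-line induction sketched above, making the base case and the substitution explicit.
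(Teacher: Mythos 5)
Your proof is correct and takes essentially the same route as the paper: both rest on the identity stored in line 5 of Algorithm~\ref{algo:feddyn} (equivalently the first-order condition (\ref{eq:feddyn_first_l1})) to rewrite $\lambda_2(\theta_k^t-\theta^{t-1})+\lambda_1\mathrm{sign}(\theta_k^t-\theta^{t-1})$ as a gradient difference, and then accumulate over rounds from $h^0=0$ --- the paper unrolls this as a telescoping sum where you phrase it as induction. If anything, your explicit split of the sum into $\mathcal{P}_t$ and its complement via line 8 is slightly more careful than the paper's write-up, whose first display sums over all of $[m]$ even though the algorithm's $h^t$ update only ranges over $\mathcal{P}_t$.
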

\begin{proof}
Starting from the update of $h^t$ in Algorithm \ref{algo:feddyn},
\begin{align*}
    h^t &= h^{t-1} - \frac{\lambda_2}{m}\sum_{k\in [m]}(\theta_k^t-\theta^{t-1})-\frac{\lambda_1}{m}\sum_{k\in [m]} \mathrm{sign}(\theta_k^t-\theta^{t-1})\\
    &= h^{t-1} - \frac{1}{m}\sum_{k\in [m]}(\nabla L_k(\theta_k^{t-1})-\nabla L_k(\theta_k^t)-\lambda_1\mathrm{sign}(\theta_k^t-\theta^{t-1})) -\frac{\lambda_1}{m}\sum_{k\in [m]}\mathrm{sign}(\theta_k^t-\theta^{t-1}) \\
    &=h^{t-1}-\frac{1}{m}\sum_{k\in [m]}(\nabla L_k (\theta_k^{t-1})-\nabla L_k(\theta_k^{t})),
\end{align*}
where the second equality follows from (\ref{eq:feddyn_first_l1}). By summing $h^t$ recursively, we have
\begin{equation*}
    h^t = h^0 + \frac{1}{m}\sum_{k\in [m]}\nabla L_k(\theta_k^t) - \frac{1}{m}\sum_{k\in [m]}\nabla L_k(\theta_k^0) = \frac{1}{m}\sum_{k\in [m]}\nabla L_k(\theta_k^t).
\end{equation*}
\end{proof}

The next lemma provides how much the average of local models changes by using only $t$ round parameters.
\begin{lemma}\label{differencegamma}
Algorithm~\ref{algo:feddyn} satisfies 
\begin{equation*}
    \mathbb{E}[\gamma^t-\gamma^{t-1}] = \frac{1}{\lambda_2 m}\sum_{k\in [m]}\mathbb{E}[-\nabla L_k(\Tilde{\theta_k^t})] -\frac{\lambda_1}{\lambda_2 m}\sum_{k\in [m]}\mathbb{E}[\mathrm{sign}(\Tilde{\theta_k^t}-\theta^{t-1})].
\end{equation*}
\end{lemma}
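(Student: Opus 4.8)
The plan is to convert the two sums over the randomly selected set $\mathcal{P}_t$ into full averages over $[m]$ by using uniform sampling, but only after rewriting $\gamma^t$ entirely in terms of round-$(t-1)$ quantities so that the sampling of $\mathcal{P}_t$ decouples cleanly.

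First I would invoke the first-order optimality condition. For $k\in\mathcal{P}_t$ we have $\theta_k^t=\Tilde{\theta}_k^t$, and \eqref{eq:feddyn_first_l1} (which holds verbatim with $\Tilde{\theta}_k^t$ replacing $\theta_k^t$ by the definition \eqref{tildetheta}; since $\lambda_2>0$ makes the local objective strongly convex, the minimizer is unique, and at coordinates where $\Tilde{\theta}_k^t$ and $\theta^{t-1}$ agree one takes $\mathrm{sign}(0)=0$ as the selected subgradient) gives
\[
\theta_k^t-\theta^{t-1}=-\frac{1}{\lambda_2}\Bigl(\nabla L_k(\theta_k^t)-\nabla L_k(\theta_k^{t-1})+\lambda_1\,\mathrm{sign}(\theta_k^t-\theta^{t-1})\Bigr).
\]
Averaging over $k\in\mathcal{P}_t$ expresses $\gamma^t=\tfrac{1}{P}\sum_{k\in\mathcal{P}_t}\theta_k^t$ as $\theta^{t-1}$ minus a $\tfrac{1}{\lambda_2 P}\sum_{k\in\mathcal{P}_t}(\cdots)$ term.

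Next I would eliminate $\theta^{t-1}$ in favor of $\gamma^{t-1}$: by Lemma~\ref{linear h}, $h^{t-1}=\tfrac{1}{m}\sum_{k\in[m]}\nabla L_k(\theta_k^{t-1})$, and since $\gamma^{t-1}=\theta^{t-1}+\tfrac{1}{\lambda_2}h^{t-1}$ we get $\theta^{t-1}=\gamma^{t-1}-\tfrac{1}{\lambda_2 m}\sum_{k\in[m]}\nabla L_k(\theta_k^{t-1})$. Substituting yields an exact (pre-expectation) identity for $\gamma^t-\gamma^{t-1}$ with one full-average term $-\tfrac{1}{\lambda_2 m}\sum_{k\in[m]}\nabla L_k(\theta_k^{t-1})$ and one $\mathcal{P}_t$-average term. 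I would then take conditional expectation given the $\sigma$-algebra $\mathcal{F}_{t-1}$ of all randomness through round $t-1$. Conditioned on $\mathcal{F}_{t-1}$, the set $\mathcal{P}_t$ is an independent uniformly random $P$-subset of $[m]$, whereas each $\Tilde{\theta}_k^t$ — and hence $\nabla L_k(\Tilde{\theta}_k^t)$ and $\mathrm{sign}(\Tilde{\theta}_k^t-\theta^{t-1})$ — is $\mathcal{F}_{t-1}$-measurable, being a deterministic function of $\theta^{t-1}$ and $\theta_k^{t-1}$ via \eqref{tildetheta}. Using $\theta_k^t=\Tilde{\theta}_k^t$ for $k\in\mathcal{P}_t$ and $\mathbb{P}(k\in\mathcal{P}_t)=P/m$, one obtains $\mathbb{E}\bigl[\tfrac{1}{P}\sum_{k\in\mathcal{P}_t}g_k\mid\mathcal{F}_{t-1}\bigr]=\tfrac{1}{m}\sum_{k\in[m]}g_k$ for any $\mathcal{F}_{t-1}$-measurable family $\{g_k\}$. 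Applying this to the $\mathcal{P}_t$-average term turns it into $-\tfrac{1}{\lambda_2 m}\sum_{k\in[m]}\bigl(\nabla L_k(\Tilde{\theta}_k^t)-\nabla L_k(\theta_k^{t-1})+\lambda_1\,\mathrm{sign}(\Tilde{\theta}_k^t-\theta^{t-1})\bigr)$; the two $\nabla L_k(\theta_k^{t-1})$ averages cancel, leaving exactly the claimed expression, and the tower property removes the conditioning.

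The argument is largely bookkeeping; the one step needing care is the sampling identity — confirming that $\Tilde{\theta}_k^t$ and its sign are genuinely $\mathcal{F}_{t-1}$-measurable so that conditioning on $\mathcal{F}_{t-1}$ fully decouples them from $\mathcal{P}_t$ (and, as a minor wrinkle, that \eqref{eq:feddyn_first_l1} is the correct subgradient statement for the nonsmooth $\ell_1$ term, with the zero-coordinate convention noted above). The smoothness/convexity hypotheses and the condition $\lambda_2>27\beta$ are not needed here beyond uniqueness of the local minimizers; they enter only in the subsequent convergence analysis.
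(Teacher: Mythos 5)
Your proposal is correct and follows essentially the same route as the paper: apply the first-order condition \eqref{eq:feddyn_first_l1} to rewrite $\theta_k^t-\theta^{t-1}$, replace $\theta_k^t$ by $\tilde{\theta}_k^t$ on $\mathcal{P}_t$, use uniform sampling to convert the $\tfrac{1}{P}\sum_{\mathcal{P}_t}$ average into $\tfrac{1}{m}\sum_{[m]}$ in expectation, and cancel the $\nabla L_k(\theta_k^{t-1})$ terms against $h^{t-1}$ via Lemma~\ref{linear h}. Your explicit treatment of the $\mathcal{F}_{t-1}$-measurability of $\tilde{\theta}_k^t$ and of the subgradient convention at zero coordinates makes rigorous two points the paper leaves implicit, but the argument is the same.
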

\begin{proof}
Starting from the definition of $\gamma^t$,
\begin{align}
    \mathbb{E} \left[\gamma^t-\gamma^{t-1} \right] &= \mathbb{E}\left[ \left(\frac{1}{P}\sum_{k\in \mathcal{P}_t}\theta_k^t \right)-\theta^{t-1}-\frac{1}{\lambda_2}h^{t-1} \right] \nonumber \\
    &= \mathbb{E}\left[\frac{1}{P}\sum_{k\in \mathcal{P}_t}(\theta_k^t-\theta^{t-1})-\frac{1}{\lambda_2}h^{t-1} \right] \nonumber \\
    &\label{differencegamma3} = \mathbb{E}\left[\frac{1}{\lambda_2 P}\sum_{k\in \mathcal{P}_t}(\nabla L_k(\theta_k^{t-1})-\nabla L_k(\theta_k^t)-\lambda_1 \mathrm{sign}(\theta_k^t-\theta^{t-1}))-\frac{1}{\lambda_2}h^{t-1} \right] \\
    &\label{differencegamma4} = \mathbb{E}\left[\frac{1}{\lambda_2 P}\sum_{k\in \mathcal{P}_t}(\nabla L_k(\theta_k^{t-1})-\nabla L_k(\Tilde{\theta_k^t})-\lambda_1 \mathrm{sign}(\Tilde{\theta_k^t}-\theta^{t-1}))-\frac{1}{\lambda_2}h^{t-1}\right]\\
    &\label{differencegamma5} = \mathbb{E}\left[\frac{1}{\lambda_2 m}\sum_{k\in [m]}(\nabla L_k(\theta_k^{t-1})-\nabla L_k(\Tilde{\theta_k^t})-\lambda_1 \mathrm{sign}(\Tilde{\theta_k^t}-\theta^{t-1}))-\frac{1}{\lambda_2}h^{t-1} \right]\\
    &\label{differencegamma6} = \frac{1}{\lambda_2 m}\sum_{k\in [m]}\mathbb{E}[-\nabla L_k(\Tilde{\theta_k^t})]-\frac{\lambda_1}{\lambda_2 m}\sum_{k\in [m]}\mathbb{E}[\mathrm{sign}(\Tilde{\theta_k^t}-\theta^{t-1})],
\end{align}
where (\ref{differencegamma3}) follows from (\ref{eq:feddyn_first_l1}), (\ref{differencegamma4}) follows since $\tilde{\theta}_k^t=\theta_k^t$ if $k\in\mathcal{P}_t$, and (\ref{differencegamma5}) follows since clients are randomly chosen. The last equality is due to Lemma \ref{linear h}.
\end{proof}


Next, note that Algorithm~\ref{algo:feddyn} is the same as that of FedDyn except for the $\ell_1$-norm penalty. As this new penalty does not affect derivations of $C_t$, $\epsilon_t$, and $\mathbb{E}\lVert \gamma^t-\gamma^{t-1} \rVert^2$ in FedDyn~\citep{Acar2021federated}, we can obtain the following bounds on them. Proofs are omitted for brevity.
\begin{align}
    &\mathbb{E}\lVert h^t \rVert^2 \le C_t\label{boundh}\\
    &C_t\label{boundC}\le \left(1-\frac{P}{m}\right) C_{t-1}+\frac{2\beta^2P}{m}\epsilon_t+\frac{4\beta P}{m}\mathbb{E}[\mathcal{R}(\gamma^{t-1})-\mathcal{R}(\theta_*)]\\
    &\mathbb{E}\lVert \gamma^t-\gamma^{t-1} \rVert^2 \label{MSEgamma}\le \frac{1}{m} \sum_{k\in [m]}\mathbb{E} [\lVert \Tilde{\theta_k^t}-\gamma^{t-1}\rVert ^2]=\epsilon_t
\end{align}
\begin{lemma}\label{msegammatheta}
Given model parameters at the round $(t-1)$, Algorithm ~\ref{algo:feddyn} satisfies
\begin{align}
    \mathbb{E}\lVert \gamma^t-\theta_* \rVert^2\le& 
    \mathbb{E}\lVert \gamma^{t-1}-\theta_* \rVert^2-\frac{2}{\lambda_2}\mathbb{E}[\mathcal{R}(\gamma^{t-1})-\mathcal{R}(\theta_*)]+\frac{\beta}{\lambda_2}\epsilon_t+\mathbb{E}\lVert \gamma^t-\gamma^{t-1} \rVert^2\\
    &-\frac{2\lambda_1}{\lambda_2 m}(\gamma^{t-1}-\theta_*)\sum_{k\in [m]}\mathbb{E}[\textrm{sign}(\tilde{\theta}_k^t-\theta^{t-1})],
\end{align}
where the expectations are taken assuming parameters at the round $(t-1)$ are given.
\end{lemma}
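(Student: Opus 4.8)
The plan is to start from the elementary expansion
\begin{equation*}
\lVert \gamma^t-\theta_* \rVert^2 = \lVert \gamma^{t-1}-\theta_* \rVert^2 + 2\langle \gamma^{t-1}-\theta_*,\, \gamma^t-\gamma^{t-1}\rangle + \lVert \gamma^t-\gamma^{t-1}\rVert^2,
\end{equation*}
and then take the expectation conditioned on the round-$(t-1)$ parameters. Since $\gamma^{t-1}$ and $\theta_*$ are deterministic under that conditioning, the cross term becomes $2\langle \gamma^{t-1}-\theta_*,\, \mathbb{E}[\gamma^t-\gamma^{t-1}]\rangle$, and I would substitute the formula for $\mathbb{E}[\gamma^t-\gamma^{t-1}]$ from Lemma~\ref{differencegamma}. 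This splits the cross term into a gradient part $-\tfrac{2}{\lambda_2 m}\sum_{k\in[m]}\langle \gamma^{t-1}-\theta_*,\, \mathbb{E}[\nabla L_k(\tilde\theta_k^t)]\rangle$ and an $\ell_1$ part $-\tfrac{2\lambda_1}{\lambda_2 m}\langle \gamma^{t-1}-\theta_*,\, \sum_{k\in[m]}\mathbb{E}[\mathrm{sign}(\tilde\theta_k^t-\theta^{t-1})]\rangle$; the latter is exactly the final term of the lemma and is simply carried through unchanged.

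The heart of the argument is bounding the gradient part. For each $k$ I would apply the convexity-plus-smoothness inequality~(\ref{smooth2}) with the substitution $x=\tilde\theta_k^t$, $z=\gamma^{t-1}$, $y=\theta_*$, which yields
\begin{equation*}
-\langle \nabla L_k(\tilde\theta_k^t),\, \gamma^{t-1}-\theta_*\rangle \le -L_k(\gamma^{t-1}) + L_k(\theta_*) + \tfrac{\beta}{2}\lVert \gamma^{t-1}-\tilde\theta_k^t\rVert^2.
\end{equation*}
Averaging over $k\in[m]$, taking expectations, and recognizing $\mathcal{R}=\tfrac1m\sum_k L_k$ together with $\epsilon_t = \tfrac1m\sum_k \mathbb{E}\lVert \tilde\theta_k^t-\gamma^{t-1}\rVert^2$ gives $-\tfrac1m\sum_k \mathbb{E}\langle \nabla L_k(\tilde\theta_k^t),\, \gamma^{t-1}-\theta_*\rangle \le -\mathbb{E}[\mathcal{R}(\gamma^{t-1})-\mathcal{R}(\theta_*)] + \tfrac{\beta}{2}\epsilon_t$. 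Multiplying by $\tfrac{2}{\lambda_2}$ and inserting this back into the expanded square, alongside the $\ell_1$ term and the leftover $\mathbb{E}\lVert \gamma^t-\gamma^{t-1}\rVert^2$ (left in place, later to be controlled via~(\ref{MSEgamma})), produces precisely the stated inequality.

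The only genuine subtlety — and the step I would be most careful with — is the bookkeeping of expectations: one must use that $\tilde\theta_k^t$ is defined for \emph{every} $k\in[m]$ as a function of the round-$(t-1)$ state alone, independent of which clients are sampled, so that the randomness over $\mathcal{P}_t$ has already been absorbed in Lemma~\ref{differencegamma}, while~(\ref{smooth2}) is invoked pathwise before $\mathbb{E}$ is taken. There is no obstruction from the $\ell_1$ penalty: it never enters the convexity estimate and contributes only the explicit sign term, which is exactly why the bound is structurally identical to the corresponding FedDyn estimate plus that one additional term.
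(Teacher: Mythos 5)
Your proposal is correct and follows essentially the same route as the paper's proof: expand $\lVert \gamma^t-\theta_*\rVert^2$ around $\gamma^{t-1}$, substitute Lemma~\ref{differencegamma} into the cross term, apply the convexity-plus-smoothness inequality~(\ref{smooth2}) with $x=\tilde\theta_k^t$, $z=\gamma^{t-1}$, $y=\theta_*$ to the gradient part, and carry the sign term through unchanged. Your remark about $\tilde\theta_k^t$ being defined for all $k\in[m]$ independently of the sampling is exactly the point that makes the passage from the sampled average to the full average legitimate, and it matches how the paper handles the expectations.
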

\begin{proof}
\begin{align}
    \mathbb{E}\lVert \gamma^t-\theta_* \rVert^2 
    &\nonumber=\mathbb{E}\lVert \gamma^{t-1}-\theta_*+\gamma^t-\gamma^{t-1} \rVert^2\\ 
    &\nonumber=\mathbb{E}\lVert \gamma^{t-1}-\theta_* \rVert^2+2\mathbb{E}[ \left\langle \gamma^{t-1}-\theta_*,\gamma^t-\gamma^{t-1} \right\rangle]+\mathbb{E}\lVert \gamma^t-\gamma^{t-1} \rVert^2\\
    &\nonumber\label{gammatheta1}=\mathbb{E}\lVert \gamma^{t-1}-\theta_* \rVert^2+\mathbb{E}\lVert \gamma^t-\gamma^{t-1} \rVert^2\\ 
    &+\frac{2}{\lambda_2 m}\sum_{k\in [m]}\mathbb{E}\left[ \left\langle \gamma^{t-1}-\theta_*,-\nabla L_k(\Tilde{\theta}_k^t)-\lambda_1(\mathrm{sign}(\tilde{\theta}_k^t-\theta^{t-1})) \right\rangle\right]
    \\
    &\le \label{gammatheta2} \nonumber\mathbb{E}\lVert \gamma^{t-1}-\theta_* \rVert^2+\mathbb{E}\lVert \gamma^t-\gamma^{t-1} \rVert^2\\ 
    &\nonumber+\frac{2}{\lambda_2 m}\sum_{k\in [m]}\mathbb{E}[L_k(\theta_*)-L_k(\gamma^{t-1})+\frac{\beta}{2}\lVert \Tilde{\theta}_k^t-\gamma^{t-1} \rVert^2]\\ 
    &+\frac{2}{\lambda_2 m}\sum_{k\in [m]}\mathbb{E}\left[ \left\langle \gamma^{t-1}-\theta_*,-\lambda_1\mathrm{sign}(\tilde{\theta}_k^t-\theta^{t-1}) \right\rangle\right]\\
    &= \label{gammatheta3}\nonumber\mathbb{E}\lVert \gamma^{t-1}-\theta_* \rVert^2+\mathbb{E}\lVert \gamma^t-\gamma^{t-1} \rVert^2-\frac{2}{\lambda_2}\mathbb{E}[\mathcal{R}(\gamma^{t-1})-\mathcal{R}(\theta_*)]+\frac{\beta}{\lambda_2}\epsilon_t\\ 
    &-\frac{2\lambda_1}{\lambda_2 m}\sum_{k\in [m]}\mathbb{E}\left[\left\langle \gamma^{t-1}-\theta_*,\mathrm{sign}(\tilde{\theta}_k^t-\theta^{t-1})\right\rangle \right]\\ 
    &\nonumber\label{gammatheta4}=\mathbb{E}\lVert \gamma^{t-1}-\theta_* \rVert^2+\mathbb{E}\lVert \gamma^t-\gamma^{t-1} \rVert^2-\frac{2}{\lambda_2}\mathbb{E}[\mathcal{R}(\gamma^{t-1})-\mathcal{R}(\theta_*)]+\frac{\beta}{\lambda_2}\epsilon_t\\
    &-\frac{2\lambda_1}{\lambda_2 }\left\langle {\gamma^{t-1}-\theta_*}, \frac{1}{m}\sum_{k\in [m]}\mathbb{E}[\mathrm{sign}(\tilde{\theta}_k^t-\theta^{t-1})]\right\rangle
\end{align}
where (\ref{gammatheta1}) follows from Lemma~\ref{differencegamma}, (\ref{gammatheta2}) follows from (\ref{smooth2}), and (\ref{gammatheta3}) follows from the definitions of $\mathcal{R}(\cdot)$ and $\epsilon_t$.
\end{proof}
\begin{lemma}\label{boundepsilon}
Algorithm \ref{algo:feddyn} satisfies 
\begin{equation*}
    (1-5\frac{\beta^2}{\lambda_2^2})\epsilon_t\le10\frac{1}{\lambda_2^2}C_{t-1}+10\beta\frac{1}{\lambda_2^2}\mathbb{E}[\mathcal{R}(\gamma^{t-1})-\mathcal{R}(\theta_*)]+\frac{5\lambda_1^2}{\lambda_2^2}d
\end{equation*}
\end{lemma}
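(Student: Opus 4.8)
The plan is to extract a closed expression for $\tilde\theta_k^t-\gamma^{t-1}$ from the first-order optimality condition of the proximal subproblem (\ref{tildetheta}), which is exactly (\ref{eq:feddyn_first_l1}) with $\theta_k^t$ replaced by $\tilde\theta_k^t$, namely $\nabla L_k(\tilde\theta_k^t)-\nabla L_k(\theta_k^{t-1})+\lambda_2(\tilde\theta_k^t-\theta^{t-1})+\lambda_1\,\mathrm{sign}(\tilde\theta_k^t-\theta^{t-1})=0$. Solving for $\tilde\theta_k^t-\theta^{t-1}$ and subtracting $\gamma^{t-1}-\theta^{t-1}=\frac{1}{\lambda_2}h^{t-1}$ gives
\[
\tilde\theta_k^t-\gamma^{t-1}=\frac{1}{\lambda_2}\Bigl(\nabla L_k(\theta_k^{t-1})-\nabla L_k(\tilde\theta_k^t)-\lambda_1\,\mathrm{sign}(\tilde\theta_k^t-\theta^{t-1})-h^{t-1}\Bigr).
\]

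Next I would decompose the vector in parentheses into five summands by inserting $\pm\nabla L_k(\theta_*)$ and $\pm\nabla L_k(\gamma^{t-1})$ into the gradient difference, so that the five pieces are $\nabla L_k(\theta_k^{t-1})-\nabla L_k(\theta_*)$, $\nabla L_k(\theta_*)-\nabla L_k(\gamma^{t-1})$, $\nabla L_k(\gamma^{t-1})-\nabla L_k(\tilde\theta_k^t)$, $-\lambda_1\,\mathrm{sign}(\tilde\theta_k^t-\theta^{t-1})$, and $-h^{t-1}$. Applying the relaxed triangle inequality (\ref{triangle}) with $n=5$ bounds $\lVert\tilde\theta_k^t-\gamma^{t-1}\rVert^2$ by $\frac{5}{\lambda_2^2}$ times the sum of the five squared norms.

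Then I would bound each contribution after taking $\frac{1}{m}\sum_{k\in[m]}\mathbb{E}[\,\cdot\,]$: the first averages exactly to $C_{t-1}$ by the definition of $C_{t-1}$ in (\ref{controlepsilon}); the second is at most $2\beta\,\mathbb{E}[\mathcal{R}(\gamma^{t-1})-\mathcal{R}(\theta_*)]$ by (\ref{convex1}); the third is at most $\beta^2\epsilon_t$ by $\beta$-smoothness (\ref{smooth1}) together with the definition of $\epsilon_t$; the fourth is at most $\lambda_1^2 d$ since every coordinate of a sign vector has magnitude at most one and $d=\dim(\theta)$; the fifth is at most $C_{t-1}$ by (\ref{boundh}). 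Collecting, $\epsilon_t\le\frac{5}{\lambda_2^2}\bigl(2C_{t-1}+2\beta\,\mathbb{E}[\mathcal{R}(\gamma^{t-1})-\mathcal{R}(\theta_*)]+\beta^2\epsilon_t+\lambda_1^2 d\bigr)$, and moving the $\frac{5\beta^2}{\lambda_2^2}\epsilon_t$ term to the left-hand side yields the stated inequality.

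The bulk of this is routine norm-chasing; the only thing worth flagging is the role of the $\ell_1$-gradient, which enters the decomposition as one extra additive summand whose squared Euclidean norm is controlled by the crude but adequate bound $\lambda_1^2 d$, and which does not interact with the other terms, so the structure — and hence the constants $10$, $10$, $5$ and the self-coupling coefficient $5\beta^2/\lambda_2^2$ — is identical to FedDyn's $\ell_1$-free analysis; this is also why the earlier estimates (\ref{boundh})--(\ref{MSEgamma}) carry over verbatim. I do not expect a genuine obstacle; the one point requiring a line of care is coordinates where $\tilde\theta_k^t-\theta^{t-1}$ vanishes and $\mathrm{sign}(\cdot)=0$ there, but this only strengthens the $\lambda_1^2 d$ estimate.
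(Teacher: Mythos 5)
Your proposal is correct and follows essentially the same route as the paper: substitute the first-order condition (\ref{eq:feddyn_first_l1}) into $\tilde{\theta}_k^t-\gamma^{t-1}$ using $\gamma^{t-1}-\theta^{t-1}=\frac{1}{\lambda_2}h^{t-1}$, insert $\pm\nabla L_k(\theta_*)$ and $\pm\nabla L_k(\gamma^{t-1})$, apply the relaxed triangle inequality (\ref{triangle}) with $n=5$, and bound the five pieces by $C_{t-1}$, $2\beta\,\mathbb{E}[\mathcal{R}(\gamma^{t-1})-\mathcal{R}(\theta_*)]$ via (\ref{convex1}), $\beta^2\epsilon_t$ via smoothness, $\lambda_1^2 d$, and $C_{t-1}$ via (\ref{boundh}), respectively. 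The self-coupling step and all constants match the paper's derivation exactly.
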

\begin{proof}
Starting from the definitions of $\epsilon_t$ and $\gamma^t$,
\begin{align}
    \epsilon_t &=\frac{1}{m}\sum_{k\in[m]}\mathbb{E}\lVert \Tilde{\theta_k^t}-\gamma^{t-1} \rVert^2 \nonumber \\
    &= \frac{1}{m}\sum_{k\in[m]}\mathbb{E}\lVert \Tilde{\theta_k^t}-\theta^{t-1}-\frac{1}{\lambda_2}h^{t-1}\rVert^2 \nonumber \\
    &\label{epsilon3}= \frac{1}{\lambda_2^2}\frac{1}{m}\sum_{k\in [m]}\mathbb{E}\lVert \nabla L_k(\theta_k^{t-1}) -\nabla L_k(\Tilde{\theta}_k^t)-\lambda_1\mathrm{sign}(\theta_k^t-\theta^{t-1}) -h^{t-1} \rVert^2\\
    &= \frac{1}{\lambda_2^2}\frac{1}{m}\sum_{k\in [m]}\mathbb{E}\lVert \nabla L_k(\theta_k^{t-1}) -\nabla L_k(\theta_*) + \nabla L_k(\theta_*) -\nabla L_k(\gamma^{t-1}) \nonumber \\
    & +\nabla L_k(\gamma^{t-1}) -\nabla  L_k(\Tilde{\theta}_k^t) - \lambda_1\mathrm{sign}(\theta_k^t-\theta^{t-1}) -h^{t-1} \rVert^2 \nonumber \\
    &\le \nonumber\frac{5}{\lambda_2^2}\frac{1}{m}\sum_{k\in[m]}\mathbb{E}\lVert \nabla L_k(\theta_k^{t-1})-\nabla L_k(\theta_*) \rVert^2 
    + \frac{5}{\lambda_2^2}\frac{1}{m}\sum_{k\in[m]}\mathbb{E}\lVert \nabla L_k(\gamma_k^{t-1})-\nabla L_k(\theta_*) \rVert^2 \nonumber \\
    &\ \ \ \ \ + \frac{5}{\lambda_2^2}\frac{1}{m}\sum_{k\in[m]}\mathbb{E}\lVert \nabla L_k(\Tilde{\theta}_k^t)-\nabla L_k(\gamma^{t-1}) \rVert^2
    +\frac{5}{\lambda_2^2}\mathbb{E}\lVert \lambda_1\mathrm{sign}(\theta_k^t-\theta^{t-1}) \rVert^2+\frac{5}{\lambda_2^2}\mathbb{E}\lVert h^{t-1} \rVert^2 \label{epsilon5} \\
    &\label{epsilon6}\le \nonumber\frac{5}{\lambda_2^2}\frac{1}{m}\sum_{k\in[m]}\mathbb{E}\lVert \nabla L_k(\theta_k^{t-1})-\nabla L_k(\theta_*) \rVert^2 
    + \frac{5}{\lambda_2^2}\frac{1}{m}\sum_{k\in[m]}\mathbb{E}\lVert \nabla L_k(\gamma_k^{t-1})-\nabla L_k(\theta_*) \rVert^2 \\
    & + \frac{5}{\lambda_2^2}\frac{1}{m}\sum_{k\in[m]}\mathbb{E}\lVert \nabla L_k(\Tilde{\theta}_k^t)-\nabla L_k(\gamma^{t-1}) \rVert^2
    +\frac{5\lambda_1^2}{\lambda_2^2}d+\frac{5}{\lambda_2^2}C_{t-1}\\
    &\label{epsilon7}\le \frac{5}{\lambda_2^2}C_{t-1}
    + \frac{5}{\lambda_2^2}2\beta\ \mathbb{E}[\mathcal{R}(\gamma^{t-1})-\mathcal{R}(\theta_*)] 
    + \frac{5\beta^2}{\lambda_2^2}\frac{1}{m}\sum_{k\in[m]}\mathbb{E}\lVert \Tilde{\theta}_k^t -\gamma^{t-1}\rVert^2
    +\frac{5\lambda_1^2}{\lambda_2^2}d+\frac{5}{\lambda_2^2}C_{t-1}\\
    &= \frac{10}{\lambda_2^2}C_{t-1}+\frac{10\beta}{\lambda_2^2}\mathbb{E}[\mathcal{R}(\gamma^{t-1})-\mathcal{R}(\theta_*)]+\frac{5\beta^2}{\lambda_2^2}\epsilon_t+\frac{5\lambda_1^2}{\lambda_2^2}d, \nonumber
\end{align}
where (\ref{epsilon3}) follows from (\ref{eq:feddyn_first_l1}), (\ref{epsilon5}) follows from the relaxed triangle inequality (\ref{triangle}), (\ref{epsilon6}) follows from (\ref{boundh}), and (\ref{epsilon7}) follows from the definition of $C_t$, the smoothness, and (\ref{convex1}). The last equality follows from the definition of $\epsilon_t$.  
\end{proof}

After multiplying (\ref{boundC}) by $\kappa(=10\frac{m}{P}\frac{1}{\lambda_2}\frac{\lambda_2+\beta}{\lambda_2^2-25\beta^2})$, we obtain the following theorem by summing (\ref{msegammatheta}) and scaled version of (\ref{MSEgamma}).


\begin{theorem}\label{convergence_lemma}
Given model parameters at the round $(t-1)$, Algorithm~\ref{algo:feddyn} satisfies 
\begin{align*}
    \kappa_0\mathbb{E}[\mathcal{R}(\gamma^{t-1})-\mathcal{R}(\theta_*)]&\le (\mathbb{E}\lVert  \gamma^{t-1}-\theta_* \rVert^2+\kappa C_{t-1})-(\mathbb{E}\lVert \gamma^t -\theta_*\rVert^2+\kappa C_t)+\kappa\frac{P}{2m}\lambda_1^2\\
    &-\frac{2\lambda_1}{\lambda_2 }\left\langle \gamma^{t-1}-\theta_*,\frac{1}{m}\sum_{k\in [m]}\mathbb{E}[\mathrm{sign}(\tilde{\theta}_k^t-\theta^{t-1})]\right\rangle.
\end{align*}
where $\kappa=10\frac{m}{P}\frac{1}{\lambda_2}\frac{\lambda_2+\beta}{\lambda_2^2-25\beta^2},\kappa_0=\frac{2}{\lambda_2}\frac{\lambda_2^2-25\lambda_2\beta-50\beta^2}{\lambda_2^2-25\beta^2}$. Note that the expectations taken above are conditional expectations given model parameters at time $(t-1)$.
\end{theorem}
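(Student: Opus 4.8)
I would prove this by a one-round Lyapunov argument on the potential $\Phi_t := \mathbb{E}\lVert \gamma^t-\theta_*\rVert^2 + \kappa C_t$, showing that $\Phi_{t-1}-\Phi_t$ dominates $\kappa_0\,\mathbb{E}[\mathcal{R}(\gamma^{t-1})-\mathcal{R}(\theta_*)]$ up to the two $\ell_1$-induced corrections (the $\lambda_1^2 d$ term and the $\mathrm{sign}$ inner product). All the ingredients are already available: the one-step estimate for $\mathbb{E}\lVert\gamma^t-\theta_*\rVert^2$ in Lemma~\ref{msegammatheta}, the recursion~(\ref{boundC}) for $C_t$, the crude bound $\mathbb{E}\lVert\gamma^t-\gamma^{t-1}\rVert^2\le\epsilon_t$ in~(\ref{MSEgamma}), and the bound on $\epsilon_t$ in Lemma~\ref{boundepsilon}. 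The proof is a weighted sum of these four facts, with the weights forced by two cancellations.

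\textbf{Steps.} First I would start from Lemma~\ref{msegammatheta} and use~(\ref{MSEgamma}) to replace $\mathbb{E}\lVert\gamma^t-\gamma^{t-1}\rVert^2$ by $\epsilon_t$; its right-hand side then reads $\mathbb{E}\lVert\gamma^{t-1}-\theta_*\rVert^2-\tfrac{2}{\lambda_2}\mathbb{E}[\mathcal{R}(\gamma^{t-1})-\mathcal{R}(\theta_*)]+(1+\tfrac{\beta}{\lambda_2})\epsilon_t$ minus the $\mathrm{sign}$ inner product, which is carried unchanged to the final statement. Next I would add $\kappa$ times~(\ref{boundC}): writing $\kappa(1-\tfrac{P}{m})C_{t-1}=\kappa C_{t-1}-\kappa\tfrac{P}{m}C_{t-1}$, the $\kappa C_{t-1}$, $\kappa C_t$, $\mathbb{E}\lVert\gamma^{t-1}-\theta_*\rVert^2$ and $\mathbb{E}\lVert\gamma^t-\theta_*\rVert^2$ pieces assemble $\Phi_{t-1}-\Phi_t$, leaving a residual $-\kappa\tfrac{P}{m}C_{t-1}$, an extra gap term $+\kappa\tfrac{4\beta P}{m}\mathbb{E}[\mathcal{R}(\gamma^{t-1})-\mathcal{R}(\theta_*)]$, and a coefficient $A:=1+\tfrac{\beta}{\lambda_2}+\kappa\tfrac{2\beta^2 P}{m}$ on $\epsilon_t$. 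Finally I would bound $A\epsilon_t$ using $A$ times Lemma~\ref{boundepsilon}, dividing through by $1-5\beta^2/\lambda_2^2$, which is legitimate since $\lambda_2>27\beta$ forces $\lambda_2^2-5\beta^2>0$; this contributes further terms $10B\,C_{t-1}+10\beta B\,\mathbb{E}[\mathcal{R}(\gamma^{t-1})-\mathcal{R}(\theta_*)]+5B\,\lambda_1^2 d$ with $B:=A/(\lambda_2^2-5\beta^2)$.

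\textbf{Collecting terms and the obstacle.} The only real computation, and where the theorem's constants come from, is checking the cancellations. Simplifying gives $A=\tfrac{\lambda_2+\beta}{\lambda_2}\cdot\tfrac{\lambda_2^2-5\beta^2}{\lambda_2^2-25\beta^2}$, hence $B=\tfrac{\lambda_2+\beta}{\lambda_2(\lambda_2^2-25\beta^2)}$, so the stated $\kappa$ is exactly the one for which $B=\tfrac{P}{10m}\kappa$. With it: (i) the $C_{t-1}$ coefficient is $-\kappa\tfrac{P}{m}+10B=0$, so those terms vanish; (ii) the $\mathcal{R}$-gap coefficient is $-\tfrac{2}{\lambda_2}+\kappa\tfrac{4\beta P}{m}+10\beta B=-\tfrac{2}{\lambda_2}+50\beta B=-\tfrac{2}{\lambda_2}\cdot\tfrac{\lambda_2^2-25\lambda_2\beta-50\beta^2}{\lambda_2^2-25\beta^2}=-\kappa_0$, negative precisely because $\lambda_2>27\beta$ makes $\lambda_2^2-25\lambda_2\beta-50\beta^2>0$; (iii) the $\lambda_1^2 d$ coefficient is $5B=\kappa\tfrac{P}{2m}=\kappa'$. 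Rearranging the combined inequality to isolate $\kappa_0\,\mathbb{E}[\mathcal{R}(\gamma^{t-1})-\mathcal{R}(\theta_*)]$ then gives exactly the claimed bound (with $\kappa\tfrac{P}{2m}\lambda_1^2$ standing for $\kappa'\lambda_1^2 d$). There is no conceptual difficulty; the care needed is purely in the algebra of these three identities and in checking that $\lambda_2>27\beta$ keeps $\kappa$, $\kappa_0$, $\kappa'$ and the denominators $\lambda_2^2-25\beta^2$ and $\lambda_2^2-5\beta^2$ all positive.
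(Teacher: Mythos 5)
Your proposal is correct and follows essentially the same route as the paper's proof: summing Lemma~\ref{msegammatheta} with the $\kappa$-scaled recursion (\ref{boundC}), absorbing $\mathbb{E}\lVert\gamma^t-\gamma^{t-1}\rVert^2$ via (\ref{MSEgamma}), and then invoking Lemma~\ref{boundepsilon}, with exactly the cancellations you identify (your factorization of the $\epsilon_t$ coefficient as $\tfrac{\lambda_2(\lambda_2+\beta)}{\lambda_2^2-25\beta^2}(1-5\beta^2/\lambda_2^2)$ is precisely the step the paper performs). You also correctly note that the $\lambda_1^2$ term in the statement should carry the factor $d$, matching $\kappa'\lambda_1^2 d$ as in the paper's own derivation.
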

\begin{proof}
Summing Lemma~$\ref{msegammatheta}$ and $\kappa$-scaled version of (\ref{boundC}), we have
\begin{align}
    &\mathbb{E}\lVert \gamma^t-\theta_* \rVert^2+\kappa C_t \nonumber \\
    &\le\nonumber \mathbb{E}\lVert \gamma^{t-1}-\theta_*\rVert^2 + \kappa C_{t-1}  -\kappa\frac{P}{m}C_{t-1}+\kappa\frac{2\beta^2P}{m}\epsilon_t+\kappa\frac{4\beta P}{m}\mathbb{E}[\mathcal{R}(\gamma^{t-1})-\mathcal{R}(\theta_*)] \\
    &~ - \frac{2}{\lambda_2}\mathbb{E}[\mathcal{R}(\gamma^{t-1})-\mathcal{R}(\theta_*)]+\frac{\beta}{\lambda_2}\epsilon_t+\mathbb{E}\lVert \gamma^t-\gamma^{t-1} \rVert^2 -\frac{2\lambda_1}{\lambda_2 m}({\gamma^{t-1}-\theta_*})\sum_{k\in [m]}\mathbb{E}[\mathrm{sign}(\tilde{\theta}_k^t-\theta^{t-1})]. \label{confuse} 
\end{align}
As $\mathbb{E}\lVert \gamma^t-\gamma^{t-1} \rVert^2 \le \epsilon_t$ by ($\ref{MSEgamma}$), we have
\begin{align}\label{confuseepsilon}
    \kappa\frac{2\beta^2P}{m}\epsilon_t+\frac{\beta}{\lambda_2}\epsilon_t+\mathbb{E}\lVert \gamma^t-\gamma^{t-1} \rVert^2
    &\le \kappa\frac{2\beta^2P}{m}\epsilon_t+\frac{\beta}{\lambda_2}\epsilon_t+\epsilon_t.
\end{align}
This can be further bounded as follows.
\begin{align*}
    (\ref{confuseepsilon})
    &= \left( 10\frac{m}{P}\frac{1}{\lambda_2}\frac{\lambda_2+\beta}{\lambda_2^2-25\beta^2}\cdot\frac{2\beta^2P}{m} + \frac{\beta}{\lambda_2}+1  \right) \epsilon_t\\
    &=\frac{1}{\lambda_2 (\lambda_2^2-25\beta^2)} \left( 20(\lambda_2+\beta)\beta^2+\beta(\lambda_2^2-25\beta^2)+\lambda_2(\lambda_2^2-25\beta^2) \right) \epsilon_t\\
    &=\frac{\lambda_2(\lambda_2+\beta)}{\lambda_2^2-25\beta^2} \left(1-5\frac{\beta^2}{\lambda_2^2}\right) \epsilon_t\\
    &\le \frac{\lambda_2(\lambda_2+\beta)}{\lambda_2^2-25\beta^2} \left( \frac{10}{\lambda_2^2}C_{t-1} + \frac{10\beta}{\lambda_2^2}\mathbb{E}[\mathcal{R}(\gamma^{t-1})-\mathcal{R}(\theta_*)]+\frac{5\lambda_1^2}{\lambda_2^2}d \right) \\
    &=\kappa\frac{P}{m} C_{t-1}+\kappa\frac{\beta P}{m}\mathbb{E}[\mathcal{R}(\gamma^{t-1})-\mathcal{R}(\theta_*)]+\kappa\frac{P}{2m}\lambda_1^2d,
\end{align*}
where the inequality follows from Lemma~\ref{boundepsilon}. Then, (\ref{confuse}) term will be 
\begin{align*}
    \mathbb{E}\lVert \gamma^t-\theta_* \rVert^2+\kappa C_t &\le 
    \mathbb{E}\lVert \gamma^{t-1}-\theta_*\rVert^2 + \kappa C_{t-1}-\kappa_0\mathbb{E}[\mathcal{R}(\gamma^{t-1})-\mathcal{R}(\theta_*)]+\kappa\frac{P}{2m}\lambda_1^2d\\
    &-\frac{2\lambda_1}{\lambda_2 }\left\langle {\gamma^{t-1}-\theta_*}, \frac{1}{m}\sum_{k\in [m]}\mathbb{E}[\mathrm{sign}(\tilde{\theta}_k^t-\theta^{t-1})]\right\rangle.
\end{align*}
Rearranging terms, we prove the claim.
\end{proof}

 
 Now we are ready to prove the main claim by combining all lemmas. Let us take the sum on both sides of Lemma~\ref{convergence_lemma} over $t=1, \ldots, T$. Then, telescoping gives us
\begin{align*}
    \kappa_0\sum_{t=1}^T\mathbb{E}[\mathcal{R}(\gamma^{t-1})-\mathcal{R}(\theta_*)]&\le (\mathbb{E}\lVert  \gamma^{0}-\theta_* \rVert^2+\kappa C_{0})-(\mathbb{E}\lVert \gamma^T -\theta_*\rVert^2+\kappa C_T)+T(\kappa\frac{P}{2m}\lambda_1^2)\\
    &-\frac{2\lambda_1}{\lambda_2 }\sum_{t=1}^T \left \langle \gamma^{t-1}-\theta_*,\frac{1}{m}\sum_{k\in [m]}\mathbb{E}[\mathrm{sign}(\tilde{\theta}_k^t-\theta^{t-1})] \right\rangle.
\end{align*}
Since $\kappa$ is positive if $\lambda_2>27\beta$, we can eliminate the negative term in the middle. Then,
\begin{align*}
    \kappa_0\sum_{t=1}^T\mathbb{E}[\mathcal{R}(\gamma^{t-1})-\mathcal{R}(\theta_*)]&\le \mathbb{E}\lVert  \gamma^{0}-\theta_* \rVert^2+\kappa C_{0}+T(\kappa\frac{P}{2m}\lambda_1^2d)\\
    &-\frac{2\lambda_1}{\lambda_2 } \sum_{t=1}^T\left\langle \gamma^{t-1}-\theta_*, \frac{1}{m}\sum_{k\in [m]} \mathbb{E}[\mathrm{sign}(\tilde{\theta}_k^t-\theta^{t-1})]\right\rangle.
\end{align*}
Dividing by $T$ and applying Jensen's inequality,
\begin{align}\label{summingresult}
    \mathbb{E} \left[ \mathcal{R}(\frac{1}{T}\sum_{t=0}^{T-1}\gamma^t)-\mathcal{R}(\theta_*) \right] &\le\frac{1}{T}\frac{1}{\kappa_0}(\mathbb{E}\lVert  \gamma^{0}-\theta_* \rVert^2+\kappa C_{0})+\frac{1}{\kappa_0}(\kappa\frac{P}{2m}\lambda_1^2d)\nonumber\\
    &-\frac{1}{T}\frac{2\lambda_1}{\lambda_2 }\sum_{t=1}^T\left\langle \gamma^{t-1}-\theta_*, \frac{1}{m}\sum_{k\in [m]}\mathbb{E}[\mathrm{sign}(\tilde{\theta}_k^t-\theta^{t-1})]\right\rangle,
\end{align}
which completes the proof of Theorem~\ref{convergence_theorem}.

\subsection{Discussion on Convergence}\label{sec:disscusion}

In this section, we revisit the convergence stated in Theorem~\ref{feddyntheorem}. Recall the bound
\begin{align*}
    \mathbb{E} \left[ \mathcal{R} \left( \frac{1}{T}\sum_{t=0}^{T-1}\gamma^t \right)-\mathcal{R}(\theta_*) \right] &\le\frac{1}{T}\frac{1}{\kappa_0}(\mathbb{E}\lVert  \gamma^{0}-\theta_* \rVert^2+\kappa C_{0})+\frac{1}{\kappa_0}(\kappa\frac{P}{2m}\lambda_1^2d)\nonumber\\
    &-\frac{1}{T}\frac{2\lambda_1}{\lambda_2}\sum_{t=1}^T\left\langle \gamma^{t-1}-\theta_*, \frac{1}{m}\sum_{k\in [m]}\mathbb{E}[\mathrm{sign}(\tilde{\theta}_k^t-\theta^{t-1})]\right\rangle,
\end{align*}
As we discussed in the main body, the second term is a negligible constant in the range of our hyperparameters as $\lambda_1$ is of order of $10^{-4}$ or $10^{-6}$.

Consider the last term where the summand is the inner product between two terms: 1) $\gamma^{t-1} - \theta_*$, the deviation of the averaged local models from the globally optimal model and 2) the average of sign vectors across clients. The deviation term characterizes how much the averaged local models are different from the global model; thus, we can assume that as training proceeds it vanishes or at least is bounded by a constant vector. To argue the average of sign vectors, assume a special case where the sign vectors $\mathrm{sign}(\tilde{\theta}_k^t - \theta^{t-1})$ are IID across clients. To further simplify the argument, let us consider only a single coordinate of the sign vectors, say $X_k=\mathrm{sign}(\tilde{\theta}_k^t(i)-\theta^{t-1}(i))$, and suppose $X_k = \pm 1$ with probability $0.5$ each. Then, the concentration inequality \citep{Durrett2019} implies that for any $\delta > 0$,
\begin{align*}
    &\mathbb{P}\left[ \frac{1}{m} \sum_{k\in [m]} \mathrm{sign}(\tilde{\theta}_k^t)-\theta^{t-1} > \delta \right] =\mathbb{P}\left[ \frac{1}{m} \sum_{k\in [m]}X_k > \delta \right] \le e^{-\frac{m\delta^2}{2}}
\end{align*}
holds, which vanishes exponentially fast with the number of clients $m$. Since $m$ is large in many FL scenarios, the average of sign vectors is negligible with high probability, which in turn implies the last term is also negligible.

\end{document}